\author{%
  Etienne Boursier\\
  CMLA, ENS Paris-Saclay\\
  \texttt{etienne.boursier@ens-paris-saclay.fr} \\
   \And
   Vianney Perchet \\
   CMLA, ENS Paris-Saclay\\
   Criteo AI Lab, Paris \\
   \texttt{vianney.perchet@normalesup.org} \\
}
\newcommand{\algoone}[1][]{\textsc{sic-mmab}#1\ }
\newcommand{\algooneadapted}[1][]{\textsc{adapted sic-mmab}#1\ }
\newcommand{\algotwo}[1][]{\textsc{sic-mmab2}#1\ }
\newcommand{\ucb}{\textsc{ucb}\ }
\newcommand{\send}[1][]{\textbf{Send Protocol}#1\ }
\newcommand{\receive}[1][]{\textbf{Receive Protocol}#1\ }
\newcommand{\musicalchair}[1][]{MusicalChairs#1\ }
\newcommand{\estimm}[1][]{Estimate\_M#1\ }
\newcommand{\estimmnosens}[1][]{Estimate\_M\_NoSensing#1\ }
\newcommand{\sensingone}[1][]{Statistic Sensing#1\ }
\newcommand{\sensingtwo}[1][]{Collision Sensing#1\ }
\newcommand{\sensingoneshort}[1][]{Stat. Sensing#1\ }
\newcommand{\sensingtwoshort}[1][]{Col. Sensing#1\ }
\newcommand{\ie}{i.e.,\ }
\definecolor{brickred}{rgb}{0.8, 0.25, 0.33}
\newcommand{\tablecolor}{\color{brickred}}
\newcommand{\algothree}[1][]{\textsc{dyn-mmab}#1\ }
\newtheorem{lemm}{Lemma}
\newtheorem{thm}{Theorem}
\newtheorem{prop}{Proposition}
\newtheorem{hyp}{Assumption}
\title{SIC\,-\,MMAB: Synchronisation Involves Communication in Multiplayer Multi-Armed Bandits}
\begin{document}
\maketitle
%
\begin{abstract}
Motivated by cognitive radio networks, we consider the stochastic multiplayer multi-armed bandit problem, where several players pull arms simultaneously and collisions occur if one of them is pulled by several players at the same stage.  We present a decentralized algorithm that achieves the same performance as a centralized one,  contradicting the existing lower bounds for that problem. This is possible by ``hacking'' the standard model by constructing a communication protocol between players that deliberately enforces collisions, allowing them to share their information at a negligible cost. 
This motivates the introduction of a more appropriate dynamic setting without sensing, where similar communication protocols are no longer possible. However, we show that the logarithmic growth of the regret is still achievable for this model with a new algorithm.
\end{abstract}

%

\section{Introduction}

In the stochastic Multi Armed Bandit problem (MAB),  a single player sequentially takes a decision (or ``pulls an arm'') amongst a finite set of possibilities  $[K]\coloneqq\{1,\ldots,K\}$. After pulling arm $k \in [K]$ at stage $t \in \mathds{N}^*$, the player receives a random reward $X_k(t) \in [0,1]$, drawn \textsl{i.i.d.}\ according to some unknown distribution $\nu_k$ of expectation $\mu_k \coloneqq \mathds{E}[X_k(t)]$. Her objective is to maximize her cumulative  reward up to stage $T \in \mathds{N}^*$. This sequential decision problem, first introduced for clinical trials \citep{thompson33, robbins52}, involves an ``\textsl{exploration/exploitation} dilemma'' where the player must trade-off acquiring \textsl{vs.}\ using information. The performance of an algorithm is controlled in term of \textbf{regret}, the difference of the cumulated reward of an optimal algorithm knowing the distributions $(\nu_k)_{k \in [K]}$ beforehand and the cumulated reward of the player. It is known that any ``reasonable'' algorithm must incur at least a logarithmic regret \citep{LaiRobbins}, which is attained by some existing algorithms such as \ucb \citep{agrawal95, Auer2002}.

MAB has been recently popularized thanks to its applications to online recommendation systems. Many different variants of MAB and classes of algorithms have thus emerged in the recent years \citep[see][]{survey}. In particular, they have been considered for cognitive radios \citep{jouini}, where the problem gets more intricate as multiple users are involved and they collide if they pull the same arm $k$ at the same time $t$, \ie they transmit on the same channel. If this happens, they all receive $0$ as a reward instead of $X_k(t)$, meaning that no message is transmitted.

If a central agent controls simultaneously all players' behavior then a tight lower bound is  known \citep{anantharam, centralized2}. Yet this centralized problem is not adapted to cognitive radios, as it allows communication between players at each time step; in practice, this induces  significant costs in both energy and time. As a consequence, most of the current interest lies in the decentralized case \citep{lower2, anandkumar, mega}, which presents another complication due to the feedback. Besides the received reward, an additional piece of information may be observed at each time step. When this extra observation is the collision indicator, \citet{musicalchair} provided two algorithms for both a fixed and a varying number of players. They are based on a \textit{Musical Chairs} procedure that quickly assigns players to different arms.
\citet{besson} provided an efficient UCB-based algorithm if $X_k(t)$ is observed instead\footnote{We stress that $X_k(t)$ does not necessarily correspond to the received reward in case of collision.}.
\citet{gabor} recently proposed an algorithm using no additional information. The performances of these algorithms and the underlying model differences are summarized in Table~\ref{table:comparatif}, Section~\ref{sec:models}.

The first non trivial lower bound for this problem has been recently improved \citep{lower2, besson}. These lower bounds suggest that decentralization adds to the regret a multiplicative factor $M,$ the number of players, compared to the centralized case \citep{anantharam}.
Interestingly, these lower bounds scale linearly with the inverse of the gaps between the $\mu_k$ whereas this scaling is quadratic for most of the existing algorithms.
This is due to the fact that although collisions account for most of the regret, lower bounds are proved without considering them.

Although it is out of our scope, the heterogeneous model introduced by \citet{diffmeans0} is worth mentioning. In this case, the reward distribution depends on each user \citep{differentmeans, diffmeans2}. An algorithm reaching the optimal allocation without explicit communication between the players was recently proposed \citep{GoT2018}.

Our main contributions are the following:
\vspace{-0.5em}
\paragraph{Section~\ref{sec:synchcomm}:}When collisions are observed, we introduce a new decentralized algorithm that is ``hacking'' the setting and induces communication between players  through deliberate collisions. The regret of this algorithm reaches asymptotically (up to some universal constant) the lower bound of the centralized problem, meaning that the aforementioned lower bounds are unfortunately incorrect.

This algorithm relies on the unrealistic assumption that all users start transmitting at the very same time. It also explains why the current literature fails to provide near optimal results for the multiplayer bandits. It therefore appears that the assumption of synchronization has to be removed for practical application of the multiplayer bandits problem. On the other hand, this technique also shows that exhibiting lower bounds in multi-player MAB is more complex than in stochastic standard MAB.
\paragraph{Section~\ref{sec:nosens2}:}Without synchronization or collision observations, we propose the first algorithm with a logarithmic regret. The dependencies in the gaps between rewards yet become quadratic.
%

\subsection{Models}
\label{sec:models}
In this section, we introduce different  models of multiplayer MAB with a known number of arms $K$ but an unknown number of players $M \leq K$. The horizon $T$ is assumed known to the players (for simplicity of exposure, as  the anytime generalization of results is now well understood \cite{degenne2016anytime}). At each time step $t \in [T]$, given their (private) information, all players $j \in [M]$ simultaneously pull the arms $\pi^j(t)$ and receive the reward $r^j(t) \in [0, 1]$ such that
\vspace{-0.5em}
\begin{align*} r^j(t) \coloneqq X_{\pi^j(t)}(t) (1 - \eta_{\pi^j(t)}(t)), \text{ where } \eta_{\pi^j(t)}(t) \text{ is the collision indicator defined by } \\ 
\eta_{k}(t) \coloneqq \mathds{1}_{\# C_k(t) > 1} \qquad \text{ with } \qquad C_k(t) \coloneqq \{j \in [M] \ |\ \pi^j(t) = k\}. \end{align*}
The problem is \textbf{centralized} if players can communicate any information to each other. In that case, they can easily avoid collisions and share their statistics. In opposition, the problem is \textbf{decentralized} when players have only access to their own rewards and actions.
The crucial concept we introduce is (a)synchronization between players. With synchronization, the model is called \textbf{static}. 

\begin{hyp}[Synchronization]
\label{hyp:synch}
Player $i$ enters the bandit game at the time $\tau_i=0$ and stays until the final horizon $T$. This is common knowledge to all players.
\end{hyp}

\begin{hyp}[Quasi-Asynchronization]
\label{hyp:asynch}
Players enter at different times $\tau_i \in \lbrace 0, \ldots, T-1 \rbrace$ and stay until the final horizon $T$. The $\tau_i$ are unknown to all players (including $i$).
\end{hyp}
With quasi-asynchronicity\footnote{We prefer not to mention asynchronicity as players still use shared discrete time slots.}, the model is \textbf{dynamic} and several variants already exist \citep{musicalchair}. Denote by $\mathbf{M}(t)$ the set of players in the game at time $t$ (unknown but not random) and by $\mu_{(n)}$ the $n$-th order statistics of $\mu$, \ie $\mu_{(1)} \geq \mu_{(2)} \geq \ldots \geq \mu_{(K)}$. The total regret is then defined for both static and dynamic models by:
\begin{small}
\begin{equation*}
R_T \coloneqq {\mathlarger\sum_{t=1}^T} {\mathlarger \sum_{k = 1}^{\#\mathbf{M}(t)}} \mu_{(k)} - \mathbb{E}_\mu \left[ {\mathlarger\sum_{t=1}^T} {\mathlarger\sum_{j \in \mathbf{M}(t)}} r^j(t) \right].
\end{equation*}
\end{small}
As mentioned in the introduction, different observation settings are considered.
\begin{description}
\item[{\sensingtwo[:]}] Player $j$ observes $\eta_{\pi^j(t)}(t)$ and $r^j(t)$ at each time step.
\item[{No sensing:}] Player $j$ only observes $r^j(t)$, \ie a reward of $0$ can indistinguishably come from a collision with another player or a null statistic $X_{\pi^j(t)}(t)$. 
\end{description}
Notice that as soon as $\mathbb{P}(X_k = 0) = 0$, the No Sensing and \sensingtwo settings are equivalent.
The setting where both $X_{\pi^j}(t)$ and $r^j(t)$ are observed is also considered in the literature and is called \textbf{\sensingone}\citep{besson}. The No Sensing setting is the most difficult one as there is no extra observation. 

Table~\ref{table:comparatif} below compares the performances of the major algorithms, specifying the precise setting considered for each of them. The second algorithm of \citet{gabor} and our algorithms also have problem independent bounds that are not mentioned in Table~\ref{table:comparatif} for the sake of clarity. Due to space constraints, \algooneadapted[,] \algotwo and their related results are presented in Appendix~\ref{sec:nosens1}.
Note that the two dynamic algorithms in Table~\ref{table:comparatif} rely on different specific assumptions.
\begin{table}[h]
\begin{adjustwidth}{-0.5in}{-0.5in}
\centering
\scriptsize{{\setlength{\extrarowheight}{5pt}
\begin{tabular}{|c|c|c|c|}
\hline
\textbf{Model} & \textbf{Algorithm's Reference} & \textbf{Prior knowledge} 
& \textbf{Asymptotic Upper bound (up to constant factor)} \\[5pt]
\hline
Centralized Multiplayer & \mbox{Theorem~1 \citep{centralized2} }
& $M$ 
&  ${\mathlarger \sum_{k > M}} \frac{\log(T)}{\mu_{(M)} - \mu_{(k)}}$ \\[8pt]
\hline
\hline
Decentralized, \sensingoneshort & \mbox{Theorem~11 \citep{besson}} &$M$ & $M^3 \!\!\!\! {\mathlarger\sum_{1 \leq i < k \leq K}} \frac{ \log(T)}{\left(\mu_{(i)}- \mu_{(k)}\right)^2}$  \\[10pt]
\hline 
\hline
Decentralized, \sensingtwoshort & \mbox{Theorem~1 \citep{musicalchair}
} & $\mu_{(M)} \! - \! \mu_{(M+1)}$
& $\frac{MK \log(T)}{\left(\mu_{(M)} - \mu_{(M+1)}\right)^2}$ \\[8pt]
\hline
Decentralized, \sensingtwoshort & \mbox{\tablecolor\algoone
}(Thm~\ref{thm:upperbound}) & - 
& {\tablecolor${\mathlarger\sum_{k > M}} \frac{\log(T)}{\mu_{(M)} - \mu_{(k)}} + MK \log(T)$} \\[8pt]
\hline
\hline
Decentralized, No Sensing & Theorem~1.1 \citep{gabor}
& $M$ 
& $\frac{MK \log(T)}{\left(\mu_{(M)} - \mu_{(M+1)}\right)^2}$ \\[10pt]
\hline
Decentralized, No Sensing & Theorem~1.2 \citep{gabor}
& $M, \mu_{(M)}$ 
&$ \frac{MK^2}{\mu_{(M)}} \log^2(T) + MK \frac{\log(T)}{\Delta'}$ \\[8pt]
\hline
Decentralized, No Sensing & \mbox{\tablecolor \textsc{adapt. sic-mmab}} (Eq~\eqref{eq:algooneadapted})& $\mu_{(K)}$
&{\tablecolor ${\mathlarger \sum_{k >M}} \frac{\log(T)}{\mu_{(M)} - \mu_{(k)}} +  \frac{M^3 K\log(T)}{\mu_{(K)}} \log^2 \big( \log(T) \big) 
$} \\[8pt]
\hline
Decentralized, No Sensing & \mbox{\tablecolor\algotwo}(Thm~\ref{thm:algo2}) & $\mu_{(K)}$
&${\tablecolor M {\mathlarger \sum_{k > M}} \frac{\log(T)}{\mu_{(M)}-\mu_{(k)}} + \frac{MK^2}{\mu_{(K)}} \log(T) }$ \\[8pt]
\hline\hline
Dec., \sensingtwoshort, Dynamic &\mbox{Theorem~2 \citep{musicalchair}} & $\bar{\Delta}_{(M)}$ 
& \rule{0pt}{2em} $\frac{M \sqrt{K \log(T) T}}{\bar{\Delta}_{(M)}^2}$\\[8pt]
\hline
Dec., No Sensing, Dynamic &\mbox{\tablecolor\algothree}(Thm~\ref{thm:dynamicregret}) & - 
& {\tablecolor $  \frac{MK \log(T)}{\bar{\Delta}_{(M)}^2}+ \frac{M^2K \log(T)}{\mu_{(M)}} $ }\\[8pt]
\hline
\end{tabular}}}
\end{adjustwidth}
\caption{\label{table:comparatif} Performances of different algorithms. 
Our algorithms and results are highlighted in red. $\bar{\Delta}_{(M)} \coloneqq \min_{i=1,..., M}(\mu_{(i)} - \mu_{(i+1)})$ is the smallest gap among the top-$M\!+\!1$ arms and $\Delta' \coloneqq \min \{ \mu_{(M)} - \mu_i \ | \ \mu_{(M)} - \mu_i > 0 \}$ is the positive sub-optimality gap.}
\end{table}
\vspace{-2.5em}

%

\section{\sensingtwo[:]achieving centralized performances by communicating through collisions}
\label{sec:synchcomm}
In this section, we consider the \sensingtwo static model and prove that the decentralized problem is almost as complex, in terms of regret growth, as the centralized one. When players are synchronized, we provide an algorithm with an exploration regret similar to the known centralized lower bound \citep{anantharam}. This algorithm strongly relies on the synchronization assumption, which we leverage to allow communication between players through observed collisions. The communication protocol is detailed and explained in Section~\ref{Phase comm}.
This result also implies that the two lower bounds provided in the literature \citep{besson, lower2} are unfortunately not correct. Indeed, the factor $M$ that was supposed to be the cost of the decentralization in the regret should not appear.

Let us now describe our algorithm \algoone[.]It consists of several phases.
\vspace{-0.5em}
\begin{enumerate}
\item The initialization phase first estimates the number of players and assigns ranks among them.
\item Players then alternate between exploration phases and communication phases.
\vspace{-0.5em}
\begin{enumerate}
\item During the $p$-th exploration phase, each arm is pulled $2^p$ times and its performance is estimated in a Successive Accepts and Rejects fashion \citep{perchet2013, SAR}.
\vspace{-0.3em}
\item During the communication phases, players communicate their statistics to each other using collisions. Afterwards, the updated common statistics are known to all players.
\end{enumerate}
\item The last phase, the exploitation one, is triggered for a player as soon as an arm is detected as optimal and assigned to her. This player then pulls this arm until the final horizon $T$.
\end{enumerate}

\subsection{Some preliminary notations}
Players that are not in the exploitation phase are called \textbf{active}. We denote, with a slight abuse of notation, by $[M_p]$ the set of active players during the $p$-th phase of exploration-communication and by $M_p \leq M$ its cardinality. Notice that $M_p$ is non increasing because players never leave the exploitation phase. \\
Any arm among the top-$M$ ones is called \textbf{optimal} and any other arm is \textbf{sub-optimal}. Arms that still need to be explored (players cannot determine whether they are optimal or sub-optimal yet) are \textbf{active}. We denote, with the same abuse of notation, the set of active arms by $[K_p]$ of cardinality $K_p \leq K$. By construction of our algorithm, this set is common to all active players at each stage.

Our algorithm is based on a protocol called \textsl{sequential hopping} \citep{sequentialhopping}. It consists of incrementing the index of the arm pulled by a specific player: if she plays arm $\pi_t^k$ at time $t$, she will play $\pi_{t+1}^k=\pi_t^k+1 \text{ (mod } [K_{p}])$ at time $t+1$ during the $p$-th exploration phase.

\subsection{Description of our protocol}

As mentioned above, the \algoone algorithm consists of several phases. During the communication phase, players communicate with each other. At the end of this phase, each player thus knows the statistics of all players on all arms, so that this decentralized problem becomes similar to the centralized one. 
After alternating enough times between exploration and communication phases, sub-optimal arms are eliminated and players are fixed to different optimal arms and will exploit them until stage $T$. The complete pseudocode of \algoone is given in Algorithm~\ref{algo:comm}, Appendix~\ref{app:alg1}.



\subsubsection{Initialization phase}
\label{Phase init}

The objective of the first phase is to estimate the number of players $M$ and to assign \textbf{internal ranks} to players. First, players follow the Musical Chairs algorithm \cite{musicalchair}, described in Pseudocode~\ref{algo:MC}, Appendix~\ref{app:alg1}, during $T_0 \coloneqq \lceil K \log(T) \rceil$ steps in order to reach an \textbf{orthogonal setting}, \ie a position where they are all pulling different arms. The index of the arm pulled by a player at stage $T_0$ will then be her \textbf{external rank}.

The second procedure, given by Pseudocode~\ref{algo1:estimM} in Appendix~\ref{app:alg1}, determines $M$ and assigns a unique internal rank in $[M]$ to each player. For example, if there are three players on arms $5$, $7$ and $2$ at $t=T_0$, their external ranks are $5$, $7$ and $2$ respectively, while their internal ranks are $2$, $3$ and $1$. 
Roughly speaking, the players follow each other sequentially hopping through all the arms so that players with external ranks $k$ and $k'$ collide exactly after a time $k+k'$. Each player then deduces $M$ and her internal rank from observed collisions during this procedure that lasts $2K$ steps.

In the next phases, active players will always know the set of active players $[M_p]$. This is how the initial symmetry among players is broken and it allows the decentralized algorithm to establish communication protocols.

\subsubsection{Exploration phase}
\label{Phase explo}

During the $p$-th exploration phase, active players sequentially hop among the active arms for $K_p 2^p$ steps. Any active arm is thus pulled $2^p$ times by each active player. Using their internal rank, players start and remain in an orthogonal setting during the exploration phase, which is collision-free. 

We denote by $B_s = 3\sqrt{\frac{\log(T)}{2s}}$ the error bound after $s$ pulls and by $T_k(p)$ (resp. $S_k(p)$) the centralized number of pulls (resp. sum of rewards) for the arm $k$ during the $p$ first exploration phases, \ie $T_k(p)=\sum_{j=1}^M T_k^j(p)$ where $T_k^j(p)$ is the number of pulls for the arm $k$ by player $j$ during the $p$ first exploration phases. During the communication phase, quantized rewards $\widetilde{S}_k^j(p)$ will be communicated between active players as described in Section~\ref{Phase comm}.

After a succession of two phases (exploration and communication), an arm $k$ is \textbf{accepted} if 
\begin{small}
\begin{equation*}
\# \Big\lbrace i \in [K_p]\, \big|\, \widetilde{\mu}_{k}(p) - B_{T_k(p)} \geq \widetilde{\mu}_i(p) + B_{T_i(p)} \Big\rbrace \geq K_p-M_p,\end{equation*}
\end{small}where $\widetilde{\mu}_k(p)=\frac{\sum_{m=1}^M \widetilde{S}_k^j(p)}{T_k(p)}$ is the centralized quantized empirical mean of the arm $k$\footnote{For a player $j$ already exploiting since the $p^j$-th phase, we instead use the last statistic $\widetilde{S}_k^j(p) = \widetilde{S}_k^j(p^j)$.}, which is an approximation of $\hat{\mu}_k(p)=\frac{S_k(p)}{T_k(p)}$. This inequality implies that $k$ is among the top-$M_p$ active arms with high probability. In the same way, $k$ is \textbf{rejected} if 
\begin{small}
\begin{equation*}
\#\Big\lbrace i \in [K_p]\, \big|\, \widetilde{\mu}_{i}(p) - B_{T_i(p)} \geq \widetilde{\mu}_k(p) + B_{T_k(p)} \Big\rbrace \geq M_p,\end{equation*}
\end{small}meaning that there are at least $M_p$ active arms better than $k$ with high probability.
Notice that each player $j$ uses her own quantized statistics $\widetilde{S}_k^j(p)$ to accept/reject an arm instead of the exact ones $S_k^j(p)$. Otherwise, the estimations $\widetilde{\mu}_k(p)$ would indeed differ between the players as well as the sets of accepted and rejected arms. With Bernoulli distributions, the quantization becomes unnecessary and the confidence bound can be chosen as $B_s = \sqrt{2 \log(T)/s}$.

\subsubsection{Communication phase}
\label{Phase comm}

In this phase, each active player communicates, one at a time, her statistics of the active arms to all other active players. Each player has her own communicating arm, corresponding to her internal rank. When the player $j$ is communicating, she sends a bit at a time step to the player $l$ by deciding which arm to pull: a $1$ bit is sent by pulling the communicating arm of player $l$ (a collision occurs) and a $0$ bit by pulling her own arm. The main originality of \algoone comes from this trick which allows implicit communication through collisions and is used in subsequent papers \cite{bubeck2019, elimetc, proutiere2019}. In an independent work, \citet{tibrewal2019} also proposed an algorithm using similar communication protocols for the heterogeneous case.

As an arm is pulled $2^n$ times by a single player during the $n$-th exploration phase, it has been pulled $2^{p+1}-1$ times in total at the end of the $p$-th phase and the statistic $S_k^j(p)$ is a real number in $[0, 2^{p+1}-1]$. Players then send a quantized \textbf{integer} statistic $\widetilde{S}_k^j(p) \in [2^{p+1}-1]$ to each other in $p+1$ bits, \ie collisions. Let $n = \lfloor S_k^j(p) \rfloor$ and $d = S_k^j(p) - n$ be the integer and decimal parts of $S_k^j(p)$, the quantized statistic is then $n+1$ with probability $d$ and $n$ otherwise, so that $\mathbb{E}[\widetilde{S}_k^j(p)] = S_k^j(p)$.

An active player can have three possible statuses during the communication phase:
\vspace{-0.5em}
\begin{enumerate}
\item either she is receiving some other players' statistics about the arm $k$. In that case, she proceeds to \receive (see Pseudocode~\ref{pseudo:receive}).
\item Or she is sending her quantized statistics about arm $k$ to player $l$ (who is then receiving). In that case, she proceeds to \send (see Pseudocode~\ref{pseudo:send}) to send them in a time $p+1$.
\item Or she is pulling her communicating arm, while waiting for other players to finish communicating statistics among them.
\end{enumerate}

\begin{figure*}[h]\begin{minipage}{0.45\textwidth}
   \centering
   \begin{pseudocode}[H]
   \vspace{-1,0cm}
     \begin{algorithm}[H]\small
		\caption*{\receive}
		\hspace*{\algorithmicindent}  \textbf{Input:}  $p$ (phase number), $l$ (own internal rank), $[K_p]$ (set of active arms) \\
		\hspace*{\algorithmicindent} \textbf{Output:}  $s$ (statistic sent by the sending player)
		\begin{algorithmic}[1]
		\STATE $s \gets 0$ and $\pi \gets$ index of the $l$-th active arm
		\FOR{$n = 0, \ldots , p$ }
		\STATE Pull $\pi$
		\STATE \algorithmicif\ $\eta_{\pi} (t) = 1$ \algorithmicthen\ \COMMENT{other player sends $1$}
		\STATE $s \gets s + 2^n$ \algorithmicendif
		\ENDFOR
		\RETURN $s$ \COMMENT{sent statistics}
		\end{algorithmic}
		\end{algorithm}
		\setlength{\floatsep}{0.1cm}
		\vspace{-0,5cm}
		\caption{\label{pseudo:receive}receive statistics of length \\$p+1$.}
		\end{pseudocode}
   \end{minipage} \hfill   
   \begin{minipage}{0.45\textwidth}
     \centering
          \begin{pseudocode}[H]
          \vspace{-1,0cm}
          \begin{algorithm}[H]\small
		\caption*{\send}
		\hspace*{\algorithmicindent} \textbf{Input:} $l$ (player receiving), $s$ (statistics to send), $p$ (phase number), $j$ (own internal rank), $[K_p]$ (set of active arms)
		\begin{algorithmic}[1]
		\STATE $\mathbf{m} \gets$ binary writing of $s$ of length $p+1$, \ie $s = \sum_{n=0}^p m_n 2^n$
		\FOR{$n = 0, \ldots , p$}
		\STATE \algorithmicif\ $m_n = 1$\ \algorithmicthen\ 
		\STATE \hspace*{\algorithmicindent} Pull the $l$-th active arm \COMMENT{send $1$}
		\STATE \algorithmicelse\ Pull the $j$-th active arm\ \COMMENT{send $0$}
		\STATE \algorithmicendif
		\ENDFOR
		\end{algorithmic}
		\end{algorithm}
		\vspace{-0,5cm}
		\caption{\label{pseudo:send}send statistics $s$ of length \\ $p+1$ to player $l$.}
		\end{pseudocode}
   \end{minipage}
   \end{figure*}
      
Communicated statistics are all of length $p+1$, even if they could be sent with shorter messages, in order to maintain synchronization among players. 
Using their internal ranks, the players can communicate in turn without interfering with each other.
The general protocol for each communication phase is described in Pseudocode~\ref{pseudo:comm} below. 

\begin{pseudocode}[h]
          \vspace{-0,5cm}
          \begin{algorithm}[H]\small
		\caption*{\textbf{Communication Protocol}}
		\hspace*{\algorithmicindent} \textbf{Input:} $\mathbf{s}$ (personal statistics of previous phases), $p$ (phase number), $j$ (own internal rank), $[K_p]$ (set of active arms), $[M_p]$ (set of active players) \\
		\hspace*{\algorithmicindent} \textbf{Output:} $\mathbf{\widetilde{S}}$ (quantized statistics of all active players)
		\begin{algorithmic}[1]
				\STATE For all $k$, sample $\widetilde{s}[k] = \begin{cases} \lfloor s[k] \rfloor +1 \text{ with probability } s[k] - \lfloor s[k] \rfloor \\ \lfloor s[k] \rfloor \text{ otherwise} \end{cases} $ \COMMENT{quantization}
		\STATE Define $E_p \coloneqq \{ (i,l,k) \in [M_p]\times [M_p]\times [K_p] \ | \ i \neq l \}$ and set $\mathbf{\widetilde{S}^j} \gets \mathbf{\widetilde{s}}$

\STATE \algorithmicfor\ $(i,l,k) \in E_p$\ \algorithmicdo  \COMMENT{Player $i$ sends stats of arm $k$ to player $l$}\label{alg1:comm0}
\STATE \hspace*{\algorithmicindent} \algorithmicif\ $i=j$\ \algorithmicthen\ Send $(l, \widetilde{s}[k], p, j, [K_p])$ \COMMENT{player communicating}
\STATE \hspace*{\algorithmicindent}  \algorithmicelsif\ $l=j$ \algorithmicthen\ $\widetilde{S}^i[k] \gets$ Receive$(p, j, [K_p])$ \COMMENT{player receiving}
\STATE \hspace*{\algorithmicindent}  \algorithmicelse\ \algorithmicfor\ $p+1$ time steps\ \algorithmicdo\ Pull the $j$-th active arm\ \algorithmicendfor \COMMENT{wait while others communicate}
\STATE \hspace*{\algorithmicindent} \algorithmicendif
\STATE \algorithmicendfor
\RETURN $\mathbf{\widetilde{S}}$
		\end{algorithmic}
		\end{algorithm}
		\vspace{-0,5cm}
		\caption{\label{pseudo:comm}player with rank $j$ proceeds to the $p$-th communication phase.}
				\vspace{-1em}
		\end{pseudocode}
		
At the end of the communication phase, all active players know the statistics $\widetilde{S}_k^j(p)$ and so which arms to accept or reject. Rejected arms are removed right away from the set of active arms. Thanks to the assigned ranks, accepted arms are assigned to one player each. The remaining active players then update both sets of active players and arms as described in Algorithm~\ref{algo:comm}, line~\ref{alg1:update0}.

This communication protocol uses the fact that a bit can be sent with a single collision. Without sensing, this can not be done in a single time step, but  communication is still somehow possible. A bit can then be sent in $\frac{\log(T)}{\mu_{(K)}}$ steps with probability $1 - \frac{1}{T}$. Using this trick, two different algorithms relying on communication protocols are proposed in Appendix~\ref{sec:nosens1} for the No Sensing setting.

\subsubsection{Regret bound of \algoone}

Theorem~\ref{thm:upperbound} bounds the expected regret incurred by \algoone[.] Due to space constraints, its proof is delayed to Appendix~\ref{app:regretalgo1}.

\begin{thm}
\label{thm:upperbound}
With the choice $T_0 = \lceil K \log(T) \rceil$, for any given set of parameters $K$, $M$ and $\pmb{\mu}$: 
\begin{small}
\begin{align*}
\mathbb{E}\big[ R_T \big] \leq & \ c_1 \!\! {\mathlarger \sum_{k > M}} \min \bigg\lbrace \frac{\log(T)}{\mu_{(M)} - \mu_{(k)}}, \sqrt{T \log(T)} \bigg\rbrace + c_2 KM \log(T) \\ & + c_3 KM^3 \log^2 \left( \min \bigg\lbrace \frac{\log(T)}{(\mu_{(M)}-\mu_{(M+1)})^2}, T \bigg\rbrace \right) \end{align*}
\end{small}
where $c_1$, $c_2$ and $c_3$ are universal constants.
\end{thm}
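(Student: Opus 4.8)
The plan is to decompose the regret according to the three phase-types of \algoone---initialization, exploration, and communication---and to control each contribution on a high-probability ``clean'' event $\mathcal{G}$ on which every confidence interval used by the algorithm is valid. First I would define $\mathcal{G}$ as the event that $|\hat\mu_k(p) - \mu_k| \le B_{T_k(p)}$ for every arm $k$ and every phase $p$, together with the event that the randomized quantization deviates from its mean by no more than the sampling error. Since $\mathbb{E}[\widetilde S_k^j(p)] = S_k^j(p)$ and $|\widetilde S_k^j(p) - S_k^j(p)| \le 1$, a Hoeffding bound on the $M_p$ independent rounding errors shows that $|\widetilde\mu_k(p) - \hat\mu_k(p)|$ is of order $\sqrt{\log(T)/M_p}\,/\,2^p$, which is dominated by $B_{T_k(p)}$; this is exactly why the confidence radius carries the constant $3$ rather than $1$. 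A union bound of Hoeffding's inequality over the at most $K$ arms and $O(\log T)$ phases then gives $\mathbb{P}(\mathcal{G}^c) = O(K T^{-8})$, so that the trivial bound $R_T \le MT$ on $\mathcal{G}^c$ contributes only $O(1)$ to the expected regret.

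On $\mathcal{G}$ the accept/reject tests are never wrong: no top-$M$ arm is ever rejected and no sub-optimal arm is ever accepted. I would then prove the central structural fact that governs the exploration cost: an arm still active at the start of phase $p$ must have its mean within $O(B_{T_k(p)})$ of the decision boundary $[\mu_{(M+1)},\mu_{(M)}]$. For a sub-optimal arm $k$ this is immediate---``not rejected'' forces $\mu_{(M)} - \mu_{(k)} \lesssim B_{T_k(p)}$---and the same computation fixes the resolution phase $p_k$ of arm $k$ at $2^{p_k} \asymp \log(T)/(M_{p_k}\Delta_k^2)$ with $\Delta_k := \mu_{(M)} - \mu_{(k)}$. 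The less obvious direction is for an active optimal arm $i$: to be accepted it must separate from the best remaining active sub-optimal arm, so ``not accepted'' forces $\mu_i - \mu_{(M+1)} \lesssim B_{T_i(p)}$, hence $\mu_i \le \mu_{(M)} + O(B_{T_i(p)})$. This is the crucial point, and the main obstacle of the whole proof: it is what prevents a large-mean optimal arm from inflating the per-pull regret of a slowly-resolved sub-optimal arm, since such an arm is accepted and handed to an exploiting player early and therefore leaves the active set.

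With this in hand the exploration regret follows. During the $p$-th exploration phase all collisions are avoided, so the expected per-step reward of the active players equals $\sum_{j} \mu_{\pi^j(t)}$, and summing over a full cycle of length $K_p$ shows that the regret of phase $p$ equals $2^p \sum_{i \in A_p^{\mathrm{opt}}} \sum_{k \in A_p^{\mathrm{sub}}} (\mu_i - \mu_k)$, where $A_p^{\mathrm{opt}}$ and $A_p^{\mathrm{sub}}$ are the active optimal and sub-optimal arms. Bounding $\mu_i - \mu_k \le (\mu_i - \mu_{(M)}) + \Delta_k = O(B_{T_k(p)})$ via the structural lemma and summing the resulting geometric series $\sqrt{\log T}\sum_{p \le p_k}\sqrt{M_p 2^p}$ over the phases during which $k$ is active yields a contribution of order $\log(T)/\Delta_k$ per sub-optimal arm $k$; capping the number of phases at $2^p \lesssim T$ replaces this by $\sqrt{T\log T}$ when $\Delta_k$ is too small, producing the first term $c_1\sum_{k>M}\min\{\log(T)/\Delta_k,\sqrt{T\log T}\}$. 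The initialization cost is handled crudely: Musical Chairs runs for $T_0 = \lceil K\log T\rceil$ steps and \estimm for $2K$ more, during which the regret is at most $M$ per step, giving the second term $c_2 KM\log(T)$.

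It remains to bound the communication cost, which is where the $M^3$ factor appears. The $p$-th communication phase iterates over the set $E_p$ of size $M_p(M_p-1)K_p$, each transfer costing $p+1$ time steps, and the regret of any such step is at most $M_p \le M$ since only the active players are perturbed (exploiting players keep collecting their assigned optimal arm undisturbed). Hence the communication regret is at most $\sum_{p} M_p^3 K_p (p+1) \le M^3 K \sum_{p \le p_{\max}} (p+1)$. The total number of phases satisfies $p_{\max} \asymp \log_2\!\big(\log(T)/(\mu_{(M)}-\mu_{(M+1)})^2\big)$, because the last arm to be resolved is the one separating the marginal optimal and sub-optimal arms; the arithmetic sum then gives $O\big(M^3 K\,p_{\max}^2\big)$, i.e. the third term $c_3 KM^3 \log^2\big(\min\{\log(T)/(\mu_{(M)}-\mu_{(M+1)})^2, T\}\big)$, with the inner $\min$ again coming from the $2^p \lesssim T$ cap. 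Adding the four contributions and renaming constants gives the stated bound; I expect the bookkeeping around the non-increasing sequence $M_p$ and the footnote convention for already-exploiting players (who reuse their frozen statistic $\widetilde S_k^j(p^j)$) to be the only remaining technical nuisance.
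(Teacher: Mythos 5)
Your proposal is, in structure and in every key step, the paper's own proof: the same decomposition into initialization, exploration, communication and bad events (Equation~\eqref{eq:regdec}), the same clean event combining Hoeffding concentration with the randomized-quantization error (Lemma~\ref{lemm:concentration}), the same characterization of acceptance/rejection times via confidence-interval separation (Proposition~\ref{prop:explo}), and the same communication count of $KM^{2}(p+1)$ steps per phase times a per-step regret of $M$, with $N = O\big(\log\big(\min\big\{\log(T)/(\mu_{(M)}-\mu_{(M+1)})^{2},\, T\big\}\big)\big)$ phases (Lemma~\ref{lemma:commregret}). Your per-phase double sum over pairs of active optimal and active sub-optimal arms, with each gap bounded by $O(B_{T_k(p)})$ through your structural lemma, is exactly the quantity the paper isolates and sums in Lemmas~\ref{lemma:auxregret1} and~\ref{lemma:auxregret2}, so the exploration term goes through the same geometric-series computation.

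There is, however, one genuine gap: your clean event $\mathcal{G}$ contains only confidence-interval statements, yet everything downstream silently assumes that the initialization succeeded, i.e.\ that the Musical Chairs procedure ends in an orthogonal configuration and that distinct internal ranks (and the correct $M$) were assigned. Collision-free exploration, the semantics of the communicated bits, and the assignment of accepted arms to distinct exploiting players all break down if two players share a rank, and this success is a random event of probability $1 - M\exp(-T_0/K) \geq 1 - M/T$ (Lemma~\ref{lemma:musicalchair} in the paper), not an almost-sure property of the algorithm. You must include it in $\mathcal{G}$; its failure contributes at most $MT \cdot M/T = M^{2}$ to the expected regret, which is absorbed by the $c_2 KM\log(T)$ term, so the fix is one line---but as written the assembled bound does not account for this event at all. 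Relatedly, your claim $\mathbb{P}(\mathcal{G}^{c}) = O(KT^{-8})$, hence an $O(1)$ bad-event contribution, overreaches: with the radius $B_s = 3\sqrt{\log(T)/(2s)}$, the quantization error in the earliest phase (where $T_k(1) = 2M$) can only be pushed to probability $O(T^{-2})$ per arm, and the paper settles for $O(K\log(T)/T)$ overall, giving a bad-event contribution of $O(KM\log(T))$. Either way the theorem is unaffected since this is again absorbed into $c_2 KM\log(T)$, but the exponent $8$ cannot be justified uniformly over phases with the stated confidence radius.
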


The first, second and third terms respectively correspond to the regret incurred by the exploration, initialization and communication phases, which dominate the regret due to low probability events of bad initialization or incorrect estimations. Notice that the minmax regret scales with $\mathcal{O}(K\sqrt{T\log(T)})$.\\
Experiments on synthetic data are described in Appendix~\ref{sec:expe}. They empirically confirm that \algoone scales better than \textsc{mct}op\textsc{m} \citep{besson} with the gaps $\Delta$, besides having a smaller minmax regret.

\subsection{In contradiction with existing lower bounds?}

Theorem~\ref{thm:upperbound} is in contradiction with the two existing lower bounds \citep{besson, lower2}, however \algoone respects the conditions required for both. It was thought that the decentralized lower bound was $\Omega \left( M \sum_{k > M} \frac{\log(T)}{\mu_{(M)} - \mu_{(k)}}\right)$, while the centralized lower bound was already known to be $\Omega \left( \sum_{k > M} \frac{\log(T)}{\mu_{(M)} - \mu_{(k)}}\right)$ \citep{anantharam}. However, it appears that the asymptotic regret of the decentralized case is not that much different from the latter, at least if players are synchronized. Indeed, \algoone takes advantage of this synchronization to establish communication protocols as players are able to communicate through collisions. Subsequent papers \citep{elimetc, proutiere2019} recently improved the communication protocols of \algoone to obtain both initialization and communication costs constant in $T$, confirming that the lower bound of the centralized case is also tight for the decentralized model considered so far.

\citet{lower2} proved the lower bound ``by considering the best case that they do not collide''. This is only true if colliding does not provide valuable information and the policies just maximize the losses at each round, disregarding the information gathered for the future. Our algorithm is built upon the idea that the value of the information provided by collisions can exceed  in the long run the immediate loss in rewards  (which is standard in dynamic programming or reinforcement learning for instance).
The mistake of \citet{besson} is found in the proof of Lemma 12 after the sentence ``We now show that second term in (25) is zero''. The conditional expectation cannot be put inside/outside of the expectation as written and the considered term, which corresponds to the difference of information given by collisions for two different distributions, is therefore not zero. \\
These two lower bounds disregarded  the amount of information that can be deduced from collisions, while \algoone obviously takes advantage of this information. 

Our exploration regret reaches, up to a constant factor, the lower bound of the centralized problem \citep{anantharam}. 
Although it is sub-logarithmic in time, the communication cost scales with $KM^3$ and can thus be predominant in practice. Indeed for large networks, $M^3$ can easily be greater than $\log(T)$ and the communication cost would then prevail over the other terms. This highlights the importance of the parameter $M$ in multiplayer MAB and future work should focus on the dependency in both $M$ and $T$ instead of only considering asymptotic results in $T$.

Synchronization is not a reasonable assumption for practical purposes and  it also leads to undesirable algorithms relying on communication protocols such as \algoone[.]We thus claim that this assumption should be removed in the multiplayer MAB and the \textit{dynamic model} should be considered instead. However, this problem seems complex to model formally. Indeed, if players stay in the game only for a very short period, learning is not possible. The difficulty to formalize an interesting and nontrivial dynamic model may explain why most of the literature focused on the static model so far.

%
\vspace{-0.5em}

\section{Without synchronization, the dynamic setting}
\label{sec:nosens2}
\vspace{-0.5em}

From now on, we no longer assume that players can communicate using synchronization. In the previous section, it was crucial that all exploration/communication phases start and end at the same time. This assumption is clearly unrealistic and should be alleviated, as radios do not start and end transmitting simultaneously. We also consider the more difficult No Sensing setting in this section.

We assume in the following that players do not leave the game once they have started. Yet, we mention that our results can also be adapted to the cases when players can leave the game during specific intervals or share an internal synchronized clock \citep{musicalchair}. If the time is divided in several intervals, \algothree can be run independently on each of these intervals as suggested by \citet{musicalchair}. In some cases, players will be leaving in the middle of these intervals, leading to a large regret. But for any other interval, every player stays until its end, thus satisfying Assumption~\ref{hyp:asynch}.

In this section, Assumption~\ref{hyp:asynch} holds. At each stage $t = t_j + \tau_j$, player $j$ does not know $t$ but only $t_j$ (duration since joining). We denote by $T^j = T - \tau_j$ the (known) time horizon of player $j$.
\vspace{-0.5em}
\subsection{A logarithmic regret algorithm}

As synchronization no longer holds,  we propose the \algothree algorithm, relying on different tools than \algoone[.]The main ideas of \algothree are given in Section~\ref{sec:intuitions}. Its thorough description as well as the proof of the regret bound are delayed to Appendix~\ref{app:alg30} due to space constraints.

The regret incurred by \algothree in the dynamic No Sensing model is given by Theorem~\ref{thm:dynamicregret} and its proof is delayed to Appendix~\ref{app:alg3proofs}.
We also mention that \algothree leads to a Pareto optimal configuration in the more general problem where users' reward distributions differ \citep{diffmeans0, differentmeans, diffmeans2, GoT2018}.
\begin{thm}
\label{thm:dynamicregret}
In the dynamic setting, the regret incurred by \algothree is upper bounded as follows:
\begin{equation*}
\mathbb{E}[R_T] \leq \mathcal{O}\left( \frac{M^2K \log(T)}{\mu_{(M)}} + \frac{MK \log(T)}{\bar{\Delta}_{(M)}^2}  \right),
\end{equation*}
where $M = \#\mathbf{M}(T)$ is the total number of players in the game and $\bar{\Delta}_{(M)} = \min\limits_{i=1,..., M}(\mu_{(i)} - \mu_{(i+1)}) $.
\end{thm}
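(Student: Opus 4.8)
The plan is to fix a realization of the entry times $(\tau_j)$, bound the expected regret contributed by each player separately, and then sum. Throughout I would condition on a \emph{clean event} on which all the empirical reward averages maintained by \algothree concentrate around their true means, in the same spirit as the accept/reject analysis behind Theorem~\ref{thm:upperbound}; since there are at most $O(MKT)$ relevant averages, a union bound with Hoeffding-type deviations of width $B_s \asymp \sqrt{\log(T)/s}$ (with a large enough constant, so that each deviation fails with probability $\le T^{-2}$) makes the complementary event contribute only $O(1)$ to the expected regret. I would then split each player's individual regret into three sources: (i) pulls spent on \emph{sub-optimal} arms while the means are still being estimated, (ii) \emph{collisions} with other players, which in the No Sensing model are indistinguishable from null draws and must be detected statistically, and (iii) the residual regret of sitting on a correct arm that is not yet the final assigned one. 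A key structural point is that, under Assumption~\ref{hyp:asynch}, each player only controls her local clock $t_j$, so every time-indexed quantity must be expressed through $T^j = T-\tau_j \le T$; the absence of a shared phase structure forces the whole argument to be per-player rather than global.

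The first term, $\frac{MK\log(T)}{\bar{\Delta}_{(M)}^2}$, comes from source (i). On the clean event, a player's estimate of arm $k$, restricted to pulls she can treat as informative, deviates from $\mu_k$ by at most $B_s$ after $s$ such pulls. Telling a top-$M$ arm apart from the best sub-optimal arm therefore requires $s = O(\log(T)/\bar{\Delta}_{(M)}^2)$ informative pulls, since the relevant separation is at least $\bar{\Delta}_{(M)}$. Summing $O(\log(T)/\bar{\Delta}_{(M)}^2)$ exploratory pulls over the $K$ arms and the $M$ players yields the claimed contribution; the quadratic dependence on the gap is precisely the price of the weaker No Sensing confidence bound announced in Section~\ref{sec:nosens2}.

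The second term, $\frac{M^2K\log(T)}{\mu_{(M)}}$, comes from sources (ii) and (iii), \ie from the self-organising fixation mechanism that drives the players into an orthogonal configuration on the top-$M$ arms. The quantitative lever is that a player alone on an arm of mean at least $\mu_{(M)}$ receives a strictly positive reward at rate $\ge \mu_{(M)}$, whereas any collision forces a $0$; hence occupancy of a good arm can be \emph{detected} and a collision \emph{diagnosed} in $O(\log(T)/\mu_{(M)})$ pulls with failure probability $\le 1/T$, exactly the ``one bit in $\log(T)/\mu_{(K)}$ steps'' idea invoked at the end of Section~\ref{Phase comm}. I would then argue that a player who has fixed on a good arm repels every still-active player from it, so the system reaches the orthogonal configuration after $O(M)$ displacement rounds per player. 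Charging $O(\log(T)/\mu_{(M)})$ regret to each of the $O(MK)$ (player, arm) detection events and accounting for $O(M)$ rounds gives the $\frac{M^2K\log(T)}{\mu_{(M)}}$ bound. Adding the two contributions and removing the conditioning on the clean event completes the proof.

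The main obstacle is exactly the interaction of No Sensing with asynchrony in source (ii). Because a $0$ reward may be either a genuine null draw or an undetected collision, a player cannot distinguish ``this arm is bad'' from ``this arm is occupied'' from a single observation; the analysis must show that the downward bias induced by collisions never causes a player to permanently discard a top-$M$ arm, and that the statistical collision detection in the fixation step terminates in the claimed $O(\log(T)/\mu_{(M)})$ time \emph{uniformly} over the unknown entry times $\tau_j$. Controlling this requires a careful coupling between each player's local estimates and the globally unobservable occupancy pattern, and it is here that the factor $1/\mu_{(M)}$, the signal strength of the weakest optimal arm, enters and the extra factor $M$ relative to the exploration term is paid.
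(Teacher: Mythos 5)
Your proposal reproduces the right high-level decomposition (per-player analysis, gap-based exploration term, occupancy-detection term), but it has a genuine gap at the very point you yourself flag as ``the main obstacle,'' and you never resolve it. The clean event you condition on is false for \algothree[:]since players explore by sampling uniformly at random, every observation is deflated by collisions with the other explorers, so the empirical average of arm $k$ for player $j$ concentrates around $\gamma^j(t)\mu_k$ and \emph{not} around $\mu_k$ (this is exactly the first point of Lemma~\ref{lemma:nosensoccupied}). The paper's central structural idea — the one that makes the whole argument go through — is that uniform exploration makes this deflation factor $(1-\tfrac{1}{K})^{m_t-1}$ \emph{common to all arms} at each time step, so the unknown random factor $\gamma^j(t)$ cancels in comparisons: if the confidence interval of arm $i$ dominates that of arm $k$, then $\gamma^j(t)\mu_i\geq\gamma^j(t)\mu_k$ and hence $\mu_i\geq\mu_k$; moreover $\gamma^j(t)\geq (1-\tfrac1K)^{K-1}\geq 1/e$ (as $M\leq K$), so gaps shrink by at most a constant factor. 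Without this order-preservation argument, your derivation of the $\frac{MK\log T}{\bar{\Delta}_{(M)}^2}$ term — which separates arms by comparing estimates to \emph{true} means — is unjustified, and no amount of union bounding repairs it.

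The second gap is in the occupancy term. You assert that an exploited arm is always detected in $\mathcal{O}(\log(T)/\mu_{(M)})$ pulls, but in the No Sensing model the detection threshold $L_k^j$ is built from the player's current lower confidence bound on $\mu_k$ (Equation~\eqref{eq:updateTf}); if another player occupies arm $k$ \emph{before} player $j$ has estimated it, that bound can be vacuous, $L_k^j$ arbitrarily large, and the run-of-zeros test may never fire. The paper handles precisely this with the dichotomy of Lemma~\ref{lemma:nosensoccupied}.2: either the arm is detected as occupied within $\mathcal{O}(K\log(T)/\mu_k)$ steps, or its empirical mean collapses and the arm becomes dominated by a free arm within $\mathcal{O}(K\log(T)/\mu_i^2)$ steps — either way the player moves on, and Lemma~\ref{lemma:nosens2} then chains at most $k_j\leq M$ such detections per player. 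Note also that costs must be counted in time steps, not pulls: under uniform sampling, $s$ pulls of a given arm take $\Theta(Ks)$ time steps, which is where the factor $K$ in both terms of the theorem actually comes from; your bookkeeping ($\mathcal{O}(MK)$ detection events, each charged $\log(T)/\mu_{(M)}$ \emph{pulls}, times $\mathcal{O}(M)$ rounds) lands on the right order only because an overcount of events happens to cancel the missing dilution factor. Finally, the paper's global accounting is cleaner than your three-source split: since at most one player involved in any collision can be exploiting, the total regret is at most twice the sum of the players' exploration times, which is what turns the per-player bound of Lemma~\ref{lemma:nosens2} directly into the theorem.
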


\subsection{A communication-less protocol}
\label{sec:intuitions}

\algothree['s]ideas are easy to understand but the upper bound proof is quite technical. This section gives some intuitions about \algothree and its performance guarantees stated in Theorem~\ref{thm:dynamicregret}.

A player will only follow two different sampling strategies:
either she samples uniformly at random in $[K]$ during the exploration phase; or she exploits an arm and pulls it until the final horizon. In the first case, the exploration of the other players is not  too disturbed by collisions as they only change the mean reward of all arms by a common multiplicative term. In the second case, the exploited arm will appear as sub-optimal to the other players, which is actually convenient for them as this arm is now exploited.  

During the exploration phase, a player  will update a set of arms called  \texttt{Occupied} $\subset [K]$ and an ordered list of arms called \texttt{Preferences} $\in [K]^{\star}$.  As soon as an arm is detected as occupied (by another player), it is then added to  \texttt{Occupied} (which is the empty set at the beginning). If an arm is discovered to be the best one amongst those that are neither in \texttt{Occupied}  nor in \texttt{Preferences}, it is then  added to \texttt{Preferences} (at the last position). An arm is \textbf{active} for player $j$ if it was neither added to \texttt{Occupied} nor to \texttt{Preferences} by this player yet.

To handle the fact that players can enter the game at anytime, we introduce the quantity $\gamma^j(t)$,  the expected multiplicative factor of the means defined by
\begin{small}
\begin{equation*}
\gamma^j(t) = \frac{1}{t} \sum_{t'= 1 + \tau_j}^{t + \tau_j} \mathbb{E}\Big[(1-\frac{1}{K})^{m_{t'} - 1} \Big],
\end{equation*}
\end{small}where $m_t$ is the number of players in their exploration phase at time $t$. The value of $\gamma^j(t)$ is unknown to the player and random but it only affects the analysis of \algothree[]and not how it runs.

The objective of the algorithm is still to form  estimates and confidence intervals of the performances of  arms. However, it might happen that the true mean $\mu_k$ does not belong to this confidence interval. Indeed, this is only true for $\gamma^j(t) \mu_k$, if the arm $k$ is still free (not exploited). This is the first point of Lemma~\ref{lemma:nosensoccupied} below. Notice that as soon as the confidence interval for the arm $i$ dominates the confidence interval for the arm $k$, then it must hold that $\gamma^j(t) \mu_i \geq \gamma^j(t) \mu_k$ and thus arm $i$ is better than $k$. 

The second crucial point is to detect when an arm $k$ is exploited by another player. This detection will happen if a player receives too many 0 rewards successively (so that it is statistically very unlikely that this arm is not occupied). The number of zero rewards needed for player $j$ to disregard arm $k$ is denoted by  $L_k^j$, which is sequentially updated during the process (following the rule of Equation~\eqref{eq:updateTf} in Appendix~\ref{app:alg3}), so that $L_k^j \geq 2e \log(T^j)/\mu_k$. As the probability of observing a $0$ reward on a free arm $k$ is smaller than $1-\mu_k/e$, no matter the current number of players,  observing $L_k^j$ successive $0$ rewards on an unexploited arm happens with probability smaller than $ \frac{1}{(T^j)^2}$. 

The second point of Lemma~\ref{lemma:nosensoccupied} then states that an exploited arm will either be quickly detected as occupied after observing $L_k^j$ zeros (if $L_k^j$ is small enough) or  its average reward will quickly drop because it now gives zero rewards (and it will be dominated by another arm after a relatively small number of pulls). The proof of Lemma~\ref{lemma:nosensoccupied} is delayed to Appendix~\ref{app:alg3proofs}.

\begin{lemm}\label{lemma:nosensoccupied} 
We denote by $\hat{r}_k^j(t)$ the empirical average reward of arm $k$ for player $j$ at stage $t+ \tau_j$.
\begin{enumerate}
\item For any player $j$ and arm $k$, if $k$ is still free at stage $t+\tau_j$, then 
$$\mathbb{P} \Big[|\hat{r}_k^j(t) - \gamma^j(t) \mu_k| > 2\sqrt\frac{6 \ K \log(T^j)}{t} \Big] \leq \frac{4}{(T^j)^2}.$$ 
We then say that the arm $k$ is \textbf{correctly estimated} by player $j$ if $|\hat{r}_k^j(t) - \gamma^j(t) \mu_k| \leq 2\sqrt\frac{6 \ K \log(T^j)}{t}$ holds as long as $k$ is free.
 \item On the other hand, if $k$ is exploited by some player $j' \neq j$ at stage $t^0+\tau_j$, then, conditionally on the correct estimation of all the arms by player $j$, with probability $1-\mathcal{O}\left(\frac{1}{T^j}\right)$:
 \vspace{-1em}
\begin{itemize}
\item either $k$ is added to \texttt{Occupied} at a stage at most $t^0 + \tau_j + \mathcal{O} \left( \frac{K \log(T)}{\mu_k}\right)$ by player $j$,
\item or   $k$ is dominated by another unoccupied arm $i$ (for player $j$) at stage at most $\mathcal{O} \left( \frac{K \log(T)}{\mu_i^2} \right) + \tau_j$.\end{itemize}
\end{enumerate}
\end{lemm}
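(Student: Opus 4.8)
The plan is to treat the two statements separately, each via a martingale concentration argument in player $j$'s local clock, and then to combine them through a union bound over the (at most $T^j$) relevant stages.

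\emph{Part 1 (concentration on free arms).} Fix player $j$ and a free arm $k$, and write $A_{t'} = \mathds{1}[\pi^j(t')=k]$ for the indicator that she samples $k$ at stage $t'$ and $R_{t'} = r^j(t')A_{t'}$, with $\mathcal{F}_{t'-1}$ the past. Since she samples uniformly, $\mathbb{P}[A_{t'}=1\mid\mathcal{F}_{t'-1}]=1/K$; and since $k$ is free, a collision on $k$ can only be caused by another \emph{exploring} player, each of whom sits on $k$ independently with probability $1/K$, so conditionally on sampling $k$ the no-collision probability is $(1-1/K)^{m_{t'}-1}$ and $\mathbb{E}[R_{t'}\mid\mathcal{F}_{t'-1}]=\tfrac{\mu_k}{K}(1-1/K)^{m_{t'}-1}$ (the reward $X_k$ being independent of the other players' choices). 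First I would show that $\sum_{t'}R_{t'}$ concentrates around its conditional mean and that the pull count $N_k^j(t)=\sum_{t'}A_{t'}$ concentrates around $t/K$; both are martingale sums with increments in $[0,1]$ and predictable variance at most $1/K$ per step, so a Freedman/Bernstein inequality gives deviations of order $\sqrt{(t/K)\log(T^j)}$, i.e.\ after dividing by $N_k^j(t)\approx t/K$ a deviation of order $\sqrt{K\log(T^j)/t}$ -- this is the source of the factor $K$ under the square root. The remaining step replaces the realized average $\tfrac1t\sum_{t'}(1-1/K)^{m_{t'}-1}$ by its expectation $\gamma^j(t)$, which is a second Azuma bound on a bounded martingale since $(1-1/K)^{m_{t'}-1}\in[0,1]$. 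Collecting the (at most four) two-sided deviation events and tuning constants yields the stated bound $2\sqrt{6K\log(T^j)/t}$ with failure probability at most $4/(T^j)^2$.

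\emph{Part 2, the occupied branch.} For the second statement I condition on the correct-estimation event of Part 1 holding for every free arm at every stage; a union bound over $t\le T^j$ turns $4/(T^j)^2$ into $\mathcal{O}(1/T^j)$, which is why this conditioning is affordable. Once $k$ is exploited from local stage $t^0$ by $j'\neq j$, every subsequent pull of $k$ by player $j$ collides with $j'$ and returns reward $0$; in particular her consecutive-zero counter on $k$ can never be reset after $t^0$. She samples $k$ with probability $1/K$, so a Chernoff bound shows she pulls $k$ at least $L_k^j$ times within $\mathcal{O}(KL_k^j)$ stages (using $L_k^j\ge 2e\log(T^j)/\mu_k\ge 2e\log(T^j)$, the deviation is negligible and the failure probability is $\mathcal{O}(1/T^j)$). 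All these pulls return $0$, so she observes $L_k^j$ successive zeros and adds $k$ to \texttt{Occupied}; when $L_k^j$ is of the order of its lower bound this gives the first alternative, $k$ declared occupied by local stage $t^0+\mathcal{O}(K\log(T)/\mu_k)$.

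\emph{Part 2, the domination branch and the dichotomy.} The subtlety is that $L_k^j$ is updated from player $j$'s running estimate of $\mu_k$, which keeps dropping once $k$ returns only zeros; if this estimate becomes too small, $L_k^j$ grows and occupied-detection could exceed the advertised $\mathcal{O}(K\log(T)/\mu_k)$. I would handle this through the second alternative: while $k$ yields only zeros its empirical mean decays like $t^0/t$, whereas any still-free arm $i$ is correctly estimated, so its lower confidence bound is at least $\gamma^j(t)\mu_i-2\sqrt{6K\log(T^j)/t}$. Comparing this to the upper confidence bound of the decaying arm $k$, domination occurs once the confidence half-width $\sqrt{K\log(T^j)/t}$ falls below a constant multiple of $\mu_i$ and $k$'s mean has dropped below $\mu_i$, i.e.\ by local stage $\mathcal{O}(K\log(T)/\mu_i^2)$. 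The dichotomy is exactly the comparison of these two timescales: either the counter reaches $L_k^j$ first, or the estimate of $k$ has collapsed enough that a free arm $i$ dominates it first. The hardest part of the whole argument is this bookkeeping: the threshold $L_k^j$ is \emph{self-referential}, depending on the very estimate of $\mu_k$ that collapses once $k$ is exploited, so the two mechanisms must be analysed \emph{jointly}, showing that whenever the occupied branch is slow the domination branch is fast, while keeping all confidence-interval comparisons valid simultaneously across the whole window at a total cost of only $\mathcal{O}(1/T^j)$. By contrast, the concentration inequality of Part 1 is the comparatively routine ingredient.
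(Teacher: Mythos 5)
Your Part~1 follows essentially the paper's own route: the paper also controls the collision-discounted reward sum (its $Z_t$, your $R_{t'}$) with a Chernoff--Bennett bound that exploits the sampling probability $1/K$, separately controls the pull count $T_k^j(t)$ with a second Chernoff bound, and combines the two by a triangle inequality to obtain the radius $2\sqrt{6K\log(T^j)/t}$ with failure probability $4/(T^j)^2$; your Freedman/Azuma phrasing is a minor variant of the same argument (and is no more rigorous than the paper on the one delicate point, namely that the deviation is measured against the expected factor $\gamma^j(t)$ rather than the realized one).

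The genuine gap is in Part~2. First, a factual error: you claim that once $k$ returns only zeros, the estimate of $\mu_k$ drops and hence ``$L_k^j$ grows''. It cannot: by Equation~\eqref{eq:updateTf}, $L_k^j$ is a running \emph{minimum}, hence non-increasing, and this monotonicity is precisely what tames the self-referentiality you flag as the hardest part. Second, and more importantly, you assert the dichotomy (``whenever the occupied branch is slow the domination branch is fast'') as something to be ``analysed jointly'' but never exhibit the criterion that makes it true, and your two branches, as written, do not cover all cases: the occupied branch is fast only if $L_k^j$ is already of order $\log(T^j)/\mu_k$ at stage $t^0$, which you never establish, and the domination branch is fast only if the reward mass accumulated on $k$ before $t^0$ is small, which you also never establish. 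The paper closes exactly this hole by splitting on $t^0$ itself. If $t^0 > 24\lambda K \log(T^j)/\mu_k^2$ (with $\lambda = 16e^2$), then $B^j(t^0) \leq \mu_k/(4e)$, so correct estimation gives $\hat{r}_k^j(t^0) - B^j(t^0) \geq \gamma^j(t^0)\mu_k - 2B^j(t^0) \geq \mu_k/(2e)$, hence $L_k^j(t^0) \leq 4e^2\log(T^j)/\mu_k$; by monotonicity this bound persists after $t^0$, so a full block of $L_k^j$ zero rewards is completed within $\mathcal{O}(K L_k^j) = \mathcal{O}(K\log(T^j)/\mu_k)$ further stages: occupied branch. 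If instead $t^0 \leq 24\lambda K\log(T^j)/\mu_k^2$, then $T_k^j(t^0)$, and hence the accumulated sum $\hat{r}_k^j(t^0) T_k^j(t^0)$, is small enough that after $\mathcal{O}(K\log(T)/\mu_i^2)$ stages the diluted empirical mean satisfies $\hat{r}_k^j(t) \leq \mu_i/(2e)$, where $i$ is a free arm with $\mu_i \leq \mu_k$ (whose existence the paper also has to justify), and $k$ becomes dominated: domination branch. Without this case split on $t^0$, or an equivalent criterion, your sketch essentially restates the lemma's conclusion rather than proving it.
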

%

It remains to describe how players start exploiting arms. After some time (upper-bounded by Lemma~\ref{lemma:nosens2} in Appendix~\ref{app:alg3proofs}), an arm which is still free and such that all  better arms are occupied will be detected as the best remaining one. The player will try to occupy it, and this  happens as soon as she gets a positive reward from it: either she succeeds and starts exploiting it, or she fails and assumes it  is occupied by another player (this only takes a few number of steps, see Lemma~\ref{lemma:nosensoccupied}). In the latter case, she resumes exploring until she detects the next available best arm. With high probability, the player will necessarily end up exploiting an arm while all the better arms are already exploited by other players.
%
\vspace{-0.3cm}

\section{Conclusion}
\vspace{-0.2cm}

We have  presented algorithms for different multiplayer bandits models. The first one illustrates why the assumption of synchronization between the players is basically equivalent to allowing communication. Since communication through collisions is possible with other players at a sub-logarithmic cost, the decentralized multiplayer bandits is almost equivalent to the centralized one for the considered model. However, this communication cost has a large dependency in the number of agents in the network. Future work should then focus on considering both the dependency in time and the number of players as well as developing efficient communication protocols.

Our major claim is that synchronization should not be considered anymore, unless communication is allowed. We thus introduced a dynamic model  and proposed the first algorithm with a logarithmic regret. 

\subsubsection*{Acknowledgments}
This work was supported in part by a public grant as part of the Investissement d'avenir project, reference ANR-11-LABX-0056-LMH, LabEx LMH, in a joint call with Gaspard Monge Program for optimization, operations research and their interactions with data sciences.

\addcontentsline{toc}{section}{References}
\bibliographystyle{plainnat}
\bibliography{bibliography}

\begin{thebibliography}{28}
\providecommand{\natexlab}[1]{#1}
\providecommand{\url}[1]{\texttt{#1}}
\expandafter\ifx\csname urlstyle\endcsname\relax
  \providecommand{\doi}[1]{doi: #1}\else
  \providecommand{\doi}{doi: \begingroup \urlstyle{rm}\Url}\fi

\bibitem[Agrawal(1995)]{agrawal95}
R.~Agrawal.
\newblock Sample mean based index policies with o(log n) regret for the
  multi-armed bandit problem.
\newblock \emph{Advances in Applied Probability}, 27\penalty0 (4):\penalty0
  1054--1078, 1995.

\bibitem[Anandkumar et~al.(2011)Anandkumar, Michael, Tang, and
  Swami]{anandkumar}
A.~Anandkumar, N.~Michael, A.~K. Tang, and A.~Swami.
\newblock Distributed algorithms for learning and cognitive medium access with
  logarithmic regret.
\newblock \emph{IEEE Journal on Selected Areas in Communications}, 29\penalty0
  (4):\penalty0 731--745, 2011.

\bibitem[Anantharam et~al.(1987)Anantharam, Varaiya, and Walrand]{anantharam}
V.~Anantharam, P.~Varaiya, and J.~Walrand.
\newblock Asymptotically efficient allocation rules for the multiarmed bandit
  problem with multiple plays-part i: I.i.d. rewards.
\newblock \emph{IEEE Transactions on Automatic Control}, 32\penalty0
  (11):\penalty0 968--976, 1987.

\bibitem[Auer et~al.(2002)Auer, Cesa-Bianchi, and Fischer]{Auer2002}
P.~Auer, N.~Cesa-Bianchi, and P.~Fischer.
\newblock Finite-time analysis of the multiarmed bandit problem.
\newblock \emph{Machine learning}, 47\penalty0 (2-3):\penalty0 235--256, 2002.

\bibitem[Avner and Mannor(2014)]{mega}
O.~Avner and S.~Mannor.
\newblock Concurrent bandits and cognitive radio networks.
\newblock In \emph{Joint European Conference on Machine Learning and Knowledge
  Discovery in Databases}, pages 66--81. Springer, 2014.

\bibitem[Avner and Mannor(2015)]{differentmeans}
O.~Avner and S.~Mannor.
\newblock Learning to coordinate without communication in multi-user
  multi-armed bandit problems.
\newblock \emph{arXiv preprint arXiv:1504.08167}, 2015.

\bibitem[Avner and Mannor(2018)]{diffmeans2}
O.~Avner and S.~Mannor.
\newblock Multi-user communication networks: A coordinated multi-armed bandit
  approach.
\newblock \emph{arXiv preprint arXiv:1808.04875}, 2018.

\bibitem[Besson and Kaufmann(2018)]{besson}
L.~Besson and E.~Kaufmann.
\newblock {Multi-Player Bandits Revisited}.
\newblock In \emph{{Algorithmic Learning Theory}}, Lanzarote, Spain, 2018.

\bibitem[Bistritz and Leshem(2018)]{GoT2018}
I.~Bistritz and A.~Leshem.
\newblock Distributed multi-player bandits-a game of thrones approach.
\newblock In \emph{Advances in Neural Information Processing Systems}, pages
  7222--7232. 2018.

\bibitem[Boursier et~al.(2019)Boursier, Kaufmann, Mehrabian, and
  Perchet]{elimetc}
E.~Boursier, E.~Kaufmann, A.~Mehrabian, and V.~Perchet.
\newblock A practical algorithm for multiplayer bandits when arm means vary
  among players.
\newblock \emph{arXiv preprint arXiv:1902.01239}, 2019.

\bibitem[Bubeck and Cesa{-}Bianchi(2012)]{survey}
S.~Bubeck and N.~Cesa{-}Bianchi.
\newblock Regret analysis of stochastic and nonstochastic multi-armed bandit
  problems.
\newblock \emph{Foundations and Trends{\textregistered} in Machine Learning},
  5\penalty0 (1):\penalty0 1--122, 2012.

\bibitem[Bubeck et~al.(2013)Bubeck, Wang, and Viswanathan]{SAR}
S.~Bubeck, T.~Wang, and N.~Viswanathan.
\newblock Multiple identifications in multi-armed bandits.
\newblock In \emph{International Conference on Machine Learning}, pages
  258--265, 2013.

\bibitem[Bubeck et~al.(2019)Bubeck, Li, Peres, and Sellke]{bubeck2019}
S.~Bubeck, Y.~Li, Y.~Peres, and M.~Sellke.
\newblock Non-stochastic multi-player multi-armed bandits: Optimal rate with
  collision information, sublinear without.
\newblock \emph{arXiv preprint arXiv:1904.12233}, 2019.

\bibitem[Degenne and Perchet(2016)]{degenne2016anytime}
R.~Degenne and V.~Perchet.
\newblock Anytime optimal algorithms in stochastic multi-armed bandits.
\newblock In \emph{International Conference on Machine Learning}, pages
  1587--1595, 2016.

\bibitem[Joshi et~al.(2018)Joshi, Kumar, Yadav, and Darak]{sequentialhopping}
H.~Joshi, R.~Kumar, A.~Yadav, and S.~J. Darak.
\newblock Distributed algorithm for dynamic spectrum access in
  infrastructure-less cognitive radio network.
\newblock In \emph{2018 IEEE Wireless Communications and Networking Conference
  (WCNC)}, pages 1--6, 2018.

\bibitem[Jouini et~al.(2009)Jouini, Ernst, Moy, and Palicot]{jouini}
W.~Jouini, D.~Ernst, C.~Moy, and J.~Palicot.
\newblock Multi-armed bandit based policies for cognitive radio's decision
  making issues.
\newblock In \emph{2009 3rd International Conference on Signals, Circuits and
  Systems (SCS)}, 2009.

\bibitem[Kalathil et~al.(2014)Kalathil, Nayyar, and Jain]{diffmeans0}
D.~Kalathil, N.~Nayyar, and R.~Jain.
\newblock Decentralized learning for multiplayer multiarmed bandits.
\newblock \emph{IEEE Transactions on Information Theory}, 60\penalty0
  (4):\penalty0 2331--2345, 2014.

\bibitem[Komiyama et~al.(2015)Komiyama, Honda, and Nakagawa]{centralized2}
J.~Komiyama, J.~Honda, and H.~Nakagawa.
\newblock Optimal regret analysis of thompson sampling in stochastic
  multi-armed bandit problem with multiple plays.
\newblock In \emph{International Conference on Machine Learning}, pages
  1152--1161, 2015.

\bibitem[Lai and Robbins(1985)]{LaiRobbins}
T.~L. Lai and H.~Robbins.
\newblock Asymptotically efficient adaptive allocation rules.
\newblock \emph{Advances in applied mathematics}, 6\penalty0 (1):\penalty0
  4--22, 1985.

\bibitem[Liu and Zhao(2010)]{lower2}
K.~Liu and Q.~Zhao.
\newblock Distributed learning in multi-armed bandit with multiple players.
\newblock \emph{IEEE Transactions on Signal Processing}, 58\penalty0
  (11):\penalty0 5667--5681, 2010.

\bibitem[Lugosi and Mehrabian(2018)]{gabor}
G.~Lugosi and A.~Mehrabian.
\newblock Multiplayer bandits without observing collision information.
\newblock \emph{arXiv preprint arXiv:1808.08416}, 2018.

\bibitem[Perchet and Rigollet(2013)]{perchet2013}
V.~Perchet and P.~Rigollet.
\newblock The multi-armed bandit problem with covariates.
\newblock \emph{The Annals of Statistics}, 41\penalty0 (2):\penalty0 693--721,
  2013.

\bibitem[Perchet et~al.(2015)Perchet, Rigollet, Chassang, and
  Snowberg]{batchedbandits}
V.~Perchet, P.~Rigollet, S.~Chassang, and E.~Snowberg.
\newblock Batched bandit problems.
\newblock In \emph{Proceedings of The 28th Conference on Learning Theory},
  pages 1456--1456, 2015.

\bibitem[Proutiere and Wang(2019)]{proutiere2019}
A.~Proutiere and P.~Wang.
\newblock An optimal algorithm in multiplayer multi-armed bandits, 2019.

\bibitem[Robbins(1952)]{robbins52}
H.~Robbins.
\newblock Some aspects of the sequential design of experiments.
\newblock \emph{Bulletin of the American Mathematical Society}, 58\penalty0
  (5):\penalty0 527--535, 1952.

\bibitem[Rosenski et~al.(2016)Rosenski, Shamir, and Szlak]{musicalchair}
J.~Rosenski, O.~Shamir, and L.~Szlak.
\newblock Multi-player bandits--a musical chairs approach.
\newblock In \emph{International Conference on Machine Learning}, pages
  155--163, 2016.

\bibitem[Thompson(1933)]{thompson33}
W.~R. Thompson.
\newblock On the likelihood that one unknown probability exceeds another in
  view of the evidence of two samples.
\newblock \emph{Biometrika}, 25\penalty0 (3-4):\penalty0 285--294, 1933.

\bibitem[Tibrewal et~al.(2019)Tibrewal, Patchala, Hanawal, and
  Darak]{tibrewal2019}
H.~Tibrewal, S.~Patchala, M.K. Hanawal, and S.J. Darak.
\newblock Distributed learning and optimal assignment in multiplayer
  heterogeneous networks.
\newblock In \emph{IEEE INFOCOM}, pages 1693--1701, 2019.

\end{thebibliography}

\newpage
\appendix
%
\section{Complementary material for Section~\ref{sec:synchcomm}}
\subsection{Algorithm description}
\label{app:alg1}

We here describe in detail the \algoone algorithm. All the pseudocodes are described from the point of view of a single player, which is the natural way to describe a decentralized algorithm. First, this algorithm relies on the Musical Chairs algorithm, introduced by \citet{musicalchair}. We recall it in Pseudocode~\ref{algo:MC}.

\begin{pseudocode}[H]
\vspace{-0,5cm}
\begin{algorithm}[H]
\caption*{\textbf{\musicalchair Protocol}}
\hspace*{\algorithmicindent} \textbf{Input:}  $[K_p]$ (set of active arms), $T_0$ (time of procedure) \\
\hspace*{\algorithmicindent} \textbf{Output:}  \texttt{Fixed} (external rank)
\begin{algorithmic}[1]
\STATE Initialize \texttt{Fixed} $\gets -1$
\FOR{$T_0$ time steps}
\STATE \algorithmicif\ \texttt{Fixed} $=-1$ \algorithmicthen
\STATE \hspace*{\algorithmicindent} Sample $k$ uniformly at random in $[K_p]$ and play it in round $t$
\STATE \hspace*{\algorithmicindent} \algorithmicif\ $\eta_k(t)=0$ ($r_k(t)>0$ for No Sensing setting) \algorithmicthen\
\STATE \hspace*{2\algorithmicindent}\texttt{Fixed} $\gets k$\ \algorithmicendif \COMMENT{The player stays in arm $k$ if no collision}

\STATE \algorithmicelse\ Play \texttt{Fixed} \algorithmicendif
\ENDFOR
\RETURN \texttt{Fixed} \hfill \COMMENT{External rank}
\end{algorithmic}
\end{algorithm}
\vspace{-0,5cm}
\caption{\label{algo:MC} reach an orthogonal setting in $T_0$ steps.}
\vspace{-0,3cm}
\end{pseudocode}

The initialization phase then consists of a second procedure. Its purpose is to estimate $M$ and to assign different ranks in $[M]$ to all players. This procedure is described in Pseudocode~\ref{algo1:estimM} below. \algoone is finally described in Algorithm~\ref{algo:comm}.


\begin{pseudocode}[h!]
\vspace{-0,5cm}
\begin{algorithm}[H]
\caption*{\textbf{\estimm Protocol}}
\hspace*{\algorithmicindent} \textbf{Input:}  $k \in  [K]$ (external rank) \\
\hspace*{\algorithmicindent} \textbf{Output:}  $M$ (estimated number of players), $j$ (internal rank)
\begin{algorithmic}[1]
\STATE Initialize $M \gets 1$, $j \gets 1$ and $\pi \gets k$ \COMMENT{estimates of $M$ and the internal rank}
\FOR {$2k$ time steps}
\STATE Pull $\pi$;\qquad \algorithmicif\ $\eta_\pi(t) = 1$\ \algorithmicthen\ $M \gets M + 1$ and $j \gets j + 1$\ \algorithmicendif \COMMENT{increases if collision}
\ENDFOR
\FOR {$2(K-k)$ time steps}
\STATE $\pi \gets \pi + 1 \ (\textrm{mod } K)$ and pull $\pi$ \COMMENT{sequential hopping}
\STATE \algorithmicif\ $\eta_\pi (t)= 1$\ \algorithmicthen\ $M \gets M + 1$\ \algorithmicendif \COMMENT{increases if collision}
\ENDFOR
\RETURN $M, j$
\end{algorithmic}
\end{algorithm}
\vspace{-0,5cm}
\caption{\label{algo1:estimM} estimate $M$ and assign ranks to the players.}
\vspace{-0,3cm}
\end{pseudocode}

\begin{algorithm}[h]
\caption{\label{algo:comm} \algoone algorithm}
\hspace*{\algorithmicindent} \textbf{Input:} $T$ (horizon)

\begin{algorithmic}[1]
\STATE \textbf{Initialization Phase:}
\STATE Initialize $\texttt{Fixed} \gets -1$ and $T_0 \gets \lceil K \log(T) \rceil$
\STATE $k \gets $ \musicalchair($[K], T_0$) \label{alg1:init0}
\STATE $(M,j) \gets$ \estimm($k$) \label{alg1:init1} \COMMENT{estimated number of players and assigned internal rank}
\STATE Initialize $p \gets 1;\ M_p \gets M;\ [K_p] \gets [K]$ and $\mathbf{\widetilde{S}}, \mathbf{s}, \mathbf{T} \gets \textrm{Zeros}(K)$ \COMMENT{Zeros$(K)$ returns a\\ \hfill vector of  length $K$ containing only zeros}

\WHILE{\texttt{Fixed} $=-1$}
\vspace{0.2cm}

\STATE \textbf{Exploration Phase:}
\STATE $\pi \gets j$-th active arm \label{alg1:alter0} \COMMENT{start of a new phase}
\STATE  \algorithmicfor\ $K_p 2^p$ time steps \algorithmicdo \label{alg1:explo0}
\STATE \hspace*{\algorithmicindent}   $\pi \gets \pi +1 \ (\textrm{mod } [K_p])$ and play $\pi$ in round $t$ \COMMENT{sequential hopping}
\STATE \hspace*{\algorithmicindent}  $s[\pi] \gets s[\pi] + r_\pi (t)$ \COMMENT{Update individual statistics}
\STATE  \algorithmicendfor \label{alg1:explo1}
\vspace{0.2cm}

\STATE \textbf{Communication Phase:}
\STATE $\mathbf{\widetilde{S}_p} \gets $ Communication( $\mathbf{s}$, $p$, $j$, $[K_p]$, $[M_p]$) and $\mathbf{\widetilde{S}^l} \gets \mathbf{\widetilde{S}_p^l}$ for every active player $l$
\STATE $T[k] \gets T[k] + M_p 2^p$ for every active arm $k$
\STATE \textbf{Update Statistics:} \COMMENT{recall that $B_s = 3\sqrt{\frac{\log(T)}{2s}}$ here}
\STATE  Rej $\gets$ set of active arms $k$ verifying $\#\Big\lbrace i \in [K_p]\, \big|\, \frac{\sum\limits_{l=1}^{M} \widetilde{S}^l[i]}{T[i]} - B_{T[i]} \geq \frac{\sum\limits_{l=1}^{M} \widetilde{S}^l[k]}{T[k]} + B_{T[k]} \Big\rbrace \geq M_p$
\STATE  Acc $\gets$ set of active arms $k$ verifying $\# \Big\lbrace i \in [K_p]\, \big|\, \frac{\sum\limits_{l=1}^{M} \widetilde{S}^l[k]}{T[k]} - B_{T[k]} \geq \frac{\sum\limits_{l=1}^{M} \widetilde{S}^l[i]}{T[i]} + B_{T[i]} \Big\rbrace \geq K_p-M_p$, ordered according to their indices 
\STATE  \algorithmicif\ $M_p - j +1 \leq \texttt{length}(\textrm{Acc})$\ \algorithmicthen\ \texttt{Fixed} $\gets$ Acc$[M_p-j+1]$ \COMMENT{Start exploiting}
\STATE \algorithmicelse \COMMENT{Update all the statistics}
\STATE \hspace{\algorithmicindent} $M_p \gets M_p - \texttt{length(\textrm{Acc})}$ and $[K_p] \gets [K_p] \setminus (\textrm{Acc} \cup \textrm{Rej})$  \label{alg1:update0}
\STATE  \algorithmicendif \label{alg1:comm1}
\STATE   $p \gets p+1$

\ENDWHILE \label{alg1:alter1}
\vspace{0.2cm}

\STATE \textbf{Exploitation Phase:} Pull \texttt{Fixed} until $T$\label{alg1:explo}
\end{algorithmic}
\end{algorithm}

\newpage
\subsection{Regret analysis of \algoone}
\label{app:regretalgo1}

In this section, we prove the regret bound for \algoone algorithm given by Theorem~\ref{thm:upperbound}.
In what follows, the statement \textsl{``with probability $1 - \mathcal{O}(\delta(T))$, it holds that $f(T) = \mathcal{O}(g(T))$"} means that there is a universal constant $c \in \mathbb{R}_+$ such that $f(T) \leq c g(T)$ with probability at least $ 1 - c \delta(T)$.

We first decompose the regret as follows:
\begin{equation}
\label{eq:regdec}
R_T = R^{\text{init}} + R^{\text{comm}} + R^{\text{explo}},
\end{equation}
\begin{equation*}
\text{where } \left\{ \begin{split} \begin{aligned} & R^{\text{init}} = T_{\text{init}} {\mathlarger\sum_{k = 1}^M} \mu_{(k)} - \mathbb{E}_\mu \Big[{\mathlarger\sum_{t=1}^{T_{\text{init}}}} {\mathlarger \sum_{j = 1}^M} r^j(t) \Big]  
\text{ with } T_{\text{init}} = T_0 + 2K, \\
& R^{\text{comm}} = \mathbb{E}_\mu \Big[{\mathlarger\sum_{t \in \text{Comm}}}{\mathlarger \sum_{j=1}^M}  (\mu_{(j)} - r^j(t)) \Big] \text{ with Comm the set of communication steps,} \\
& R^{\text{explo}} = \mathbb{E}_\mu \Big[{\mathlarger\sum_{t \in \text{Explo}}}{\mathlarger \sum_{j=1}^M} (\mu_{(j)} - r^j(t)) \Big] 
\text{ with Explo} = \{T_{\text{init}} +1, \ldots, T \} \setminus \text{Comm.} \end{aligned} \end{split} \right.
\end{equation*}

A \textbf{communication step} is defined as a time step where a player is communicating statistics, \ie using \send[.]These terms respectively correspond to the regret due to the initialization phase, the communication and the regret of both exploration and exploitation phases.

\subsubsection{Initialization analysis}

The initialization regret is obviously bounded by $M (T_0 + 2K)$ as the initialization phase lasts $T_0 + 2K$ steps. Lemma~\ref{lemma:musicalchair} provides the probability to reach an orthogonal setting at time $T_0$. If this orthogonal setting is reached, the initialization phase is \textbf{successful}. In that case, the players then determine $M$ and a unique internal rank using Pseudocode~\ref{algo1:estimM}. This is shown by observing that players with external ranks $k$ and $k'$ will exactly collide at round $T_0 + k + k'$.

\begin{lemm}
\label{lemma:musicalchair}
After a time $T_0$, all players pull different arms with probability at least $ 1 - M \exp \left( - \frac{T_0}{K} \right)$.
\end{lemm}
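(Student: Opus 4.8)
The plan is to bound the probability that an orthogonal setting fails to be reached by time $T_0$ via a union bound over the $M$ players, after first arguing that reaching orthogonality is equivalent to every player being \texttt{Fixed}.

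First I would record the structural facts about \musicalchair (Pseudocode~\ref{algo:MC}). Once a player sets $\texttt{Fixed} \gets k$ she pulls $k$ at every subsequent round and never changes. Moreover, a player only becomes fixed on an arm $k$ when $\eta_k(t) = 0$, i.e.\ when she is the sole player pulling $k$ at that round; in particular no already-fixed player can be sitting on $k$ at that moment. Combining these two observations, distinct fixed players necessarily occupy distinct arms. Hence the event ``all players pull different arms at time $T_0$'' contains the event ``every player is fixed by time $T_0$,'' so it suffices to upper bound the probability that some player is still unfixed at time $T_0$.

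Next I would lower bound, for a single player $i$, the probability of becoming fixed in one round given that she is still unfixed. At any round, conditioning on the (history-dependent but simultaneously chosen) positions of the other $M-1$ players, at most $M-1$ arms are occupied by them, so at least $K-(M-1) \geq 1$ arms are collision-free for player $i$. Since she samples her arm uniformly in $[K]$ independently of the others' current choices, she avoids a collision — and thus becomes fixed — with probability at least $\frac{K-M+1}{K} \geq \frac{1}{K}$. Chaining these conditional estimates over rounds $t \leq T_0$, each conditional probability of \emph{failing} to fix is at most $1 - \frac{1}{K}$, so $\mathbb{P}[\text{player } i \text{ unfixed at } T_0] \leq (1-1/K)^{T_0} \leq \exp(-T_0/K)$. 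A union bound over the $M$ players then yields a failure probability of at most $M \exp(-T_0/K)$, which is exactly the claimed bound.

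The only delicate point, and the main obstacle, is the probabilistic bookkeeping in the chaining step: the rounds are not independent, since the set of occupied arms evolves along the random trajectory. The argument must therefore be phrased in terms of conditional probabilities given the entire past, exploiting that the uniform draw of player $i$ at each round is independent of the other players' simultaneous draws and of the history. What makes the telescoping product valid is precisely that the per-round lower bound $\frac{1}{K}$ holds \emph{uniformly} over all possible histories.
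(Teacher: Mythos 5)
Your proof is correct and takes essentially the same approach as the paper's: a per-round lower bound of $1/K$ on the probability that a not-yet-fixed player avoids a collision, chained over the $T_0$ rounds to obtain $\left(1-\frac{1}{K}\right)^{T_0} \leq \exp\left(-\frac{T_0}{K}\right)$, followed by a union bound over the $M$ players. The paper states this more tersely, leaving implicit both the structural fact that fixed players occupy distinct arms and the uniform-over-histories conditioning that you spell out explicitly.
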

\begin{proof}
\label{proof:musicalchair}
As there is at least one arm that is not played by all the other players at each time step, the probability of having no collision at time $t$ for a single player $j$ is lower bounded by $\frac{1}{K}$. 
It thus holds:
\begin{equation*}
\mathbb{P} \left[ \forall t \leq T_0, \eta^j(t) = 1 \right]  \leq \left( 1 - \frac{1}{K} \right) ^{T_0}  \leq \exp \left( -\frac{T_0}{K} \right).
\end{equation*}
For a single player $j$, her probability to encounter only collisions until time $T_0$ is at most $\exp \left( -\frac{T_0}{K} \right)$. The union bound over the $M$ players then yields the desired result.
\end{proof}

\subsubsection{Exploration regret}
\label{app:exploregret}

This section aims at proving Lemma~\ref{lemma:exploregretopt}, which bounds the exploration regret. 

\begin{lemm}
\label{lemma:exploregretopt}
With probability $1 - \mathcal{O} \left( \frac{K \log(T)}{T} + M \exp \left( - \frac{T_0}{K} \right) \right)$,

\begin{equation*}
R^{\mathrm{explo}} = \mathcal{O} \left( {\mathlarger\sum_{k >M}} \min \bigg\lbrace \frac{\log(T)}{\mu_{(M)} - \mu_{(k)}}, \sqrt{T \log(T)} \bigg\rbrace \right).
\end{equation*}
\end{lemm}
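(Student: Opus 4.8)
The plan is to work on a high-probability \textbf{good event} and show that on it the realized exploration/exploitation regret is deterministically $O\big(\sum_{k>M}\min\{\log(T)/(\mu_{(M)}-\mu_{(k)}),\sqrt{T\log(T)}\}\big)$, with the complement probability matching the two terms in the statement. The good event has two ingredients. First, the initialization is \emph{successful}, i.e. the \musicalchair phase reaches an orthogonal setting at time $T_0$; by Lemma~\ref{lemma:musicalchair} this fails with probability at most $M\exp(-T_0/K)$, and on this event every player knows $M$, holds a distinct internal rank, the exploration phases are collision-free, and the centralized counts $T_k(p)$ and quantized means $\widetilde{\mu}_k(p)$ are well defined. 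Second, all confidence intervals hold: for every active arm $k$ and phase $p$, $|\widetilde{\mu}_k(p)-\mu_k|\le B_{T_k(p)}$. I would prove this by splitting $\widetilde{\mu}_k(p)-\mu_k$ into the empirical deviation $\hat{\mu}_k(p)-\mu_k$ (Hoeffding over the $T_k(p)$ i.i.d. samples, of order $\sqrt{\log(T)/(2T_k(p))}$, failing with probability $2/T$) and the quantization deviation $\widetilde{\mu}_k(p)-\hat{\mu}_k(p)$ (an average of mean-zero, unit-range independent terms, of the same order), so that the factor $3$ in $B_s=3\sqrt{\log(T)/(2s)}$ absorbs both; a union bound over the $K$ arms and the $O(\log(T))$ phases yields failure probability $O(K\log(T)/T)$.

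Next I would establish \textbf{correctness of the accept/reject rules} by induction on the phases. On the good event, if $k$ is accepted then $\mu_k\ge\widetilde{\mu}_k(p)-B_{T_k(p)}\ge\widetilde{\mu}_i(p)+B_{T_i(p)}\ge\mu_i$ for at least $K_p-M_p$ active arms $i$, so $k$ lies among the top-$M_p$ active arms; combined with the induction hypothesis (previously accepted arms are optimal, previously rejected ones sub-optimal) this makes $k$ one of the top-$M$ arms, and symmetrically rejected arms are sub-optimal. Hence every exploited arm is optimal, so \emph{exploitation steps incur no regret}, and the $M_p$ active optimal arms are exactly the top-$M_p$ active arms at each phase.

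I would then bound the \textbf{pull counts}. Because sequential hopping pulls all active arms equally, $T_i(p)=T_k(p)=:T(p)$ for any two arms active at phase $p$. A sub-optimal arm $k$ is rejected once the $M_p$ active optimal arms (each with gap at least $\Delta_k:=\mu_{(M)}-\mu_{(k)}$) dominate it, which holds as soon as $4B_{T(p)}\le\Delta_k$, i.e. $T(p)\ge c\log(T)/\Delta_k^2$; by the geometric growth of $T(p)$, $k$ is rejected with $T(p_k)\le c'\log(T)/\Delta_k^2$ centralized pulls. An optimal arm $i$ is accepted analogously once $4B_{T(p)}\le\mu_i-\mu_{(M+1)}$, so large-mean arms are accepted early. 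For the regret conversion, the orthogonality gives, over one length-$K_p$ cycle, active reward $M_p\sum_{k\in[K_p]}\mu_k$ against the optimal $K_p\sum_{\text{top-}M_p}\mu$, so the per-cycle regret is
$$\sum_{i\in\text{opt act}}\ \sum_{l\in\text{subopt act}}(\mu_i-\mu_l).$$
Summing over the $2^p$ cycles of phase $p$ and over all phases, the part attributable to each sub-optimal arm $k$ telescopes into $\Delta_k\,T(p_k)\le c'\log(T)/\Delta_k$ (the $M_p$ pulls-per-cycle cancelling the $1/M_p$ from $2^p=T(p)/M_p$-type counting), mirroring the centralized Successive-Accepts-and-Rejects bound. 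The $\sqrt{T\log(T)}$ cap comes from also bounding arm $k$'s regret by $\Delta_k$ times its total pulls, at most $O(T)$, and taking the smaller of $\log(T)/\Delta_k$ and $\Delta_k T$, whose worst case over $\Delta_k$ is $\sqrt{T\log(T)}$.

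The hard part will be the bookkeeping of the regret from \emph{optimal} arms explored before acceptance: the per-cycle formula carries a term $\sum_{i\in\text{opt act}}(\mu_i-\mu_{(M)})$ that naively seems to create spurious factors of $M$ or $K$. I expect this to be the main obstacle, and I would resolve it by charging it against the same per-arm budget $\log(T)/\Delta_k$, exploiting two cancellations: that the excess $\mu_i-\mu_{(M)}$ of an optimal arm is offset by its fast acceptance (large $\mu_i-\mu_{(M+1)}$ forces a small $p_i$, hence $T(p_i)\lesssim\log(T)/(\mu_i-\mu_{(M+1)})^2$ and $(\mu_i-\mu_{(M)})\,T(p_i)\lesssim\log(T)/(\mu_i-\mu_{(M+1)})$), and that the factors of $M_p$ from the centralized pull counts cancel against the $1/M_p$ in the number of cycles, so that no extra multiplicative $M$ survives.
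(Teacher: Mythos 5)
Your overall architecture matches the paper's proof: the good event (successful initialization plus confidence intervals with quantization absorbed into the factor $3$ of $B_s$), the correctness of accept/reject, the pull-count bounds of Proposition~\ref{prop:explo}, and your per-cycle identity --- a cycle of phase $p$ costs $\sum_{i}\sum_{l}(\mu_i-\mu_l)$ over active optimal $i$ and active sub-optimal $l$ --- is exactly the combination of the centralized decomposition \eqref{eq:regdecomposition} with Lemma~\ref{lemma:auxregret1}, and your treatment of the sub-optimal half is Lemma~\ref{lemma:exploregretopt1}~i). The genuine gap is in the part you yourself flag as the main obstacle, and the mechanism you propose there would not survive. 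You want to charge the excess of each optimal arm $i$ against its own fast acceptance, via $(\mu_i-\mu_{(M)})T(p_i)\lesssim\log(T)/(\mu_i-\mu_{(M+1)})$. This produces a bound indexed by \emph{optimal} arms, of the form $\sum_{i\le M}\log(T)/(\mu_{(i)}-\mu_{(M+1)})$ (times the multiplicity of active sub-optimal arms), whereas the target is indexed by \emph{sub-optimal} arms; these two sums differ by a factor of order $M$ in general. Concretely, take $K=M+1$ with $\mu_{(1)}=\dots=\mu_{(M-1)}=\mu_{(M)}+b$ and $\mu_{(M+1)}=\mu_{(M)}-b$: your bound is of order $M\log(T)/b$ while the target is $\log(T)/b$. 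Removing exactly this factor $M$ is the whole point of the lemma, so the loss is not benign. Nor can your second cancellation ($M_p$ versus $1/M_p$) rescue the per-arm charging: the cost of arm $i$ is $(\mu_i-\mu_{(M)})\sum_{p\le p_i}(K_p-M_p)(T(p)-T(p-1))/M_p$, and the ratio $(K_p-M_p)/M_p$ is time-varying --- it can reach $K-M$ once $M_p=1$ --- so it cannot be pulled out arm by arm as a constant.

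What closes the argument (and what the paper does in Lemma~\ref{lemma:auxregret2}) is a pointwise-in-time bound rather than a per-arm-lifetime bound: on the good event, any optimal arm still active at phase $p$ satisfies $\mu_{(k)}-\mu_{(M)}<\Delta(p)\coloneqq\sqrt{c\log(T)/T(p-1)}$, since otherwise it would already have been accepted. Hence the per-cycle excess is at most $M_p\Delta(p)$ \emph{no matter which} optimal arms remain; this is where the $M_p$ genuinely cancels, because there are only $M_p$ active optimal arms and each has excess below the common threshold $\Delta(p)$ --- the key point your scheme misses is that whenever $(K_p-M_p)/M_p$ is large, the surviving optimal arms necessarily have small excess. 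One then rewrites the multiplicity as $K_p-M_p=\sum_{j>M}\mathds{1}_{j\text{ active at }p}$, re-attributing the whole cost to sub-optimal arms, and for each sub-optimal $j$ bounds $\sum_{p}\Delta(p)\,(T(p)-T(p-1))\,\mathds{1}_{j\text{ active at }p}$ by a telescoping sum, using the geometric growth $T(p+1)\le 3T(p)$ of Equation~\eqref{eq:Sp}, to get $O(\sqrt{t_j\log(T)})=O\big(\min\big\lbrace\log(T)/(\mu_{(M)}-\mu_{(j)}),\sqrt{T\log(T)}\big\rbrace\big)$. In short: replace ``excess times own acceptance time'' by ``active at phase $p$ implies excess less than $\Delta(p)$'', then re-index by sub-optimal arms and telescope; without that step the proof does not close.
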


The proof of Lemma~\ref{lemma:exploregretopt} is divided in several auxiliary lemmas. It first relies on the correctness of the estimations before taking the decision to accept or reject any arm.

\begin{lemm}
\label{lemm:concentration} For any arm $k$ and positive integer $n$, 
$
\mathbb{P}[\exists p \leq n : |\widetilde{\mu}_k(p) - \mu_k| \geq B_{T_k(p)} ] \leq \frac{4n}{T}$.\end{lemm}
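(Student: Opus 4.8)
The idea is that $\widetilde{\mu}_k(p)$ differs from $\mu_k$ for two distinct reasons: the \emph{statistical} fluctuation of the true empirical mean $\hat{\mu}_k(p)=S_k(p)/T_k(p)$ around $\mu_k$, and the \emph{quantization} noise introduced when the $S_k^j(p)$ are rounded to the integers $\widetilde{S}_k^j(p)$. I would therefore start from the triangle inequality
\[
\big|\widetilde{\mu}_k(p) - \mu_k\big| \;\leq\; \big|\hat{\mu}_k(p) - \mu_k\big| \;+\; \big|\widetilde{\mu}_k(p) - \hat{\mu}_k(p)\big|,
\]
and observe that the second term is exactly $\tfrac{1}{T_k(p)}\big|\sum_{j=1}^M \xi_j\big|$, where $\xi_j := \widetilde{S}_k^j(p) - S_k^j(p)$ is the quantization noise of player $j$. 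By construction each $\xi_j$ equals $1-d_j$ with probability $d_j$ and $-d_j$ otherwise (with $d_j$ the fractional part of $S_k^j(p)$), so that, conditionally on all the rewards, the $\xi_j$ are independent, centered (this is the identity $\mathbb{E}[\widetilde{S}_k^j(p)]=S_k^j(p)$), and each lies in an interval of width at most $1$.

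The plan is then to split the confidence radius $B_s = 3\sqrt{\log(T)/(2s)}$ (with $s=T_k(p)$) between the two contributions and bound each by a Hoeffding inequality. For the statistical term, $\hat{\mu}_k(p)$ is the average of $s$ i.i.d.\ rewards in $[0,1]$, so $\mathbb{P}\big[|\hat{\mu}_k(p)-\mu_k|\geq \sqrt{\log(T)/(2s)}\big]\leq 2\exp(-\log T)=2/T$. For the quantization term, the conditional Hoeffding bound (uniform in the conditioning, hence unconditional) gives $\mathbb{P}\big[\tfrac1s|\sum_j\xi_j|\geq \tfrac1s\sqrt{M\log(T)/2}\big]\leq 2\exp(-\log T)=2/T$. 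It remains to check that these two radii add up to at most $B_s$:
\[
\sqrt{\tfrac{\log T}{2s}} + \tfrac1s\sqrt{\tfrac{M\log T}{2}} = \Big(1+\sqrt{\tfrac{M}{s}}\Big)\sqrt{\tfrac{\log T}{2s}} \;\leq\; 3\sqrt{\tfrac{\log T}{2s}} = B_s,
\]
which holds as soon as $s=T_k(p)\geq M/4$. Since $T_k(p)=\sum_{j=1}^M T_k^j(p)\geq M$ (every player contributes at least one recorded pull of arm $k$, exploiting players via their frozen statistic), the constant $3$ is exactly enough. Hence $\mathbb{P}\big[|\widetilde{\mu}_k(p)-\mu_k|\geq B_{T_k(p)}\big]\leq 4/T$ for each fixed $p$, and a union bound over $p\leq n$ yields the claimed $4n/T$.

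The main obstacle I expect is the statistical term rather than the quantization one: because the number of samples $T_k(p)$ is itself data-dependent (it is governed by the random accept/reject and exploitation history of earlier phases), one cannot apply Hoeffding to a fixed sample size without care. I would handle this either by a peeling/union bound over the at most $T$ possible values of the sample count (absorbing the extra factor into the universal constant, which the $4n/T$ budget comfortably allows) or by a time-uniform (maximal) concentration inequality for the i.i.d.\ reward stream of arm $k$. The quantization step, by contrast, is routine once one notes the conditional independence and the unit width of the $\xi_j$, and the only genuinely delicate bookkeeping is the budget split above, where the bound $T_k(p)\geq M$ is precisely what makes the prefactor $3$ in $B_s$ the right choice.
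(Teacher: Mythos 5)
Your decomposition is exactly the paper's: triangle inequality splitting $|\widetilde{\mu}_k(p)-\mu_k|$ into the statistical error $|\hat{\mu}_k(p)-\mu_k|$ and the quantization error, Hoeffding for each, then a union bound over the $n$ phases; your treatment of the quantization term (conditionally independent, centered, unit-width variables, then $T_k(p)\geq M$ to convert $\frac{1}{s}\sqrt{M\log(T)/2}$ into $\sqrt{\log(T)/(2s)}$) matches the paper's verbatim. The gap is in the statistical term, and it is exactly the obstacle you flagged but did not resolve. You allocate it the radius $\sqrt{\log(T)/(2s)}$, whose fixed-sample-size Hoeffding probability is $2/T$; but since $s=T_k(p)$ is random, your fix (a) — a union bound over the at most $T$ possible sample counts — multiplies this by $T$, giving a bound of $2$, which is vacuous. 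Nothing can be ``absorbed into the universal constant'': the lemma asserts the explicit constant $4n/T$, and the union-bound overhead is a factor of $T$, not a constant. Your fix (b) fares no better: at that radius the per-$s$ failure probability is $2/T$ \emph{for every} $s$, so no time-uniform inequality can hold at the $\mathcal{O}(1/T)$ level (uniformity over the $\Theta(\log T)$ effectively independent scales costs at least logarithmic factors).

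The repair is a reallocation of the confidence budget, and your own slack computation shows it is available. Since $T_k(p)\geq M$ you have $\sqrt{M/s}\leq 1$ (not merely $\leq 2$), so the quantization term needs only $1\times\sqrt{\log(T)/(2s)}$, leaving $2\sqrt{\log(T)/(2s)}=\sqrt{2\log(T)/s}$ for the statistical term. This is precisely the split in the paper: it invokes the ``classical inequality in MAB''
\begin{equation*}
\mathbb{P}\Big[\exists p \leq n : |\hat{\mu}_k(p) - \mu_k| \geq \sqrt{\tfrac{2\log(T)}{T_k(p)}}\Big] \leq \tfrac{2n}{T},
\end{equation*}
whose larger radius is designed exactly so that the per-$s$ probability ($2/T^4$) survives the union bound over all possible random sample sizes $s\leq T$, and then gives the quantization error the remaining $\sqrt{\log(T)/(2T_k(p))}$, the two radii summing to $B_{T_k(p)}=3\sqrt{\log(T)/(2T_k(p))}$. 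With that 2:1 split your argument goes through and yields $4n/T$; with your 1:1 split it cannot be completed.
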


\begin{proof}
For any arm $k$ and positive integer $n$, Hoeffding inequality gives the following, classical inequality in MAB: $
\mathbb{P}[\exists p \leq n : |\hat{\mu}_k(p) - \mu_k| \geq \sqrt{\frac{2\log(T)}{T_k(p)}} ] \leq \frac{2n}{T}
$. It remains to bound the estimation error due to quantization.

Notice that $\sum_{j=1}^M (\widetilde{S}_k^j - \lfloor S_k^j \rfloor)$ is the sum of $M$ independent Bernoulli at each phase $p$. Hoeffding inequality thus also claims that $\mathbb{P}[|\sum_{j=1}^M(\widetilde{S}_k^j(p) - S_k^j(p))| \geq \sqrt{\frac{\log(T) M}{2}} ] \leq \frac{2}{T}$. As $T_k(p) \geq M$, it then holds $\mathbb{P}[\exists p \leq n : |\widetilde{\mu}_k^j(p) - \hat{\mu}_k^j(p)| \geq \sqrt{\frac{\log(T)}{2T_k(p)}} ] \leq \frac{2n}{T}$. Using the triangle inequality with this bound and the first Hoeffding inequality of the proof yields the final result.
\end{proof}

For both exploration and exploitation phases, we control the number of times an arm is pulled before being accepted or rejected.

\begin{prop}
\label{prop:explo}
With probability $1 - \mathcal{O} \left( \frac{K \log(T)}{T} +M \exp \left( - \frac{T_0}{K} \right) \right) $, every optimal arm $k$ is accepted after at most $\mathcal{O} \left( \frac{\log(T)}{(\mu_k - \mu_{(M+1)})^2} \right)$ pulls during exploration phases, and every sub-optimal arm $k$ is rejected after at most $\mathcal{O} \left( \frac{\log(T)}{(\mu_{(M)} - \mu_{k})^2} \right)$ pulls during exploration phases.
\end{prop}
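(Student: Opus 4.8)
The plan is to work on the good event $\mathcal{E}$ on which (i) the Musical Chairs initialization reaches an orthogonal setting at time $T_0$, so that every player correctly recovers $M$ and a distinct internal rank, and (ii) all the quantized estimates are accurate, i.e. $|\widetilde{\mu}_k(p) - \mu_k| < B_{T_k(p)}$ simultaneously for every arm $k$ and every phase $p \le n$. Since the exploration phases double in length, at most $n = \mathcal{O}(\log(T))$ phases occur before the horizon; a union bound of Lemma~\ref{lemm:concentration} over the $K$ arms then bounds the failure of (ii) by $\mathcal{O}(K\log(T)/T)$, while Lemma~\ref{lemma:musicalchair} bounds the failure of (i) by $M\exp(-T_0/K)$. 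This yields exactly the claimed probability for $\mathcal{E}$, so it remains to argue the pull counts deterministically on $\mathcal{E}$.

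Second, I would record the structural fact that, because active players sequentially hop over the common active set $[K_p]$, every arm still active at phase $p$ has been pulled the same centralized number of times $T(p)$; write $B(p)$ for the common radius $B_{T_k(p)}$. The two decision rules then reduce to pairwise comparisons with a single radius: on $\mathcal{E}$, the domination inequality $\widetilde{\mu}_k(p) - B(p) \ge \widetilde{\mu}_i(p) + B(p)$ forces $\mu_k > \mu_i$, and conversely $\mu_k - \mu_i \ge 4B(p)$ is sufficient for it to hold. I would use the first implication to prove, by induction on $p$, the no-mistake invariant: every accepted arm is optimal and every rejected arm is sub-optimal, so that after each phase the number of still-active optimal arms stays equal to $M_p$ (accepting one optimal arm decrements both, rejecting a sub-optimal arm decrements neither). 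This invariant is what guarantees that among $[K_p]$ there are precisely $M_p$ optimal and $K_p - M_p$ sub-optimal active arms.

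Third, I would use the sufficiency direction to show classification actually occurs in time. For an optimal arm $k$, every still-active sub-optimal arm has mean $\le \mu_{(M+1)}$, so once $4B(p) \le \mu_k - \mu_{(M+1)}$ arm $k$ dominates all (at most $K_p - M_p$) of them and is accepted; symmetrically, a sub-optimal arm $k$ is dominated by all $M_p$ active optimal arms (each of mean $\ge \mu_{(M)}$) once $4B(p) \le \mu_{(M)} - \mu_k$, hence rejected. Plugging $B(p) = 3\sqrt{\log(T)/(2T(p))}$, these conditions read $T(p) \ge \mathcal{O}(\log(T)/\Delta^2)$ with $\Delta = \mu_k - \mu_{(M+1)}$ respectively $\Delta = \mu_{(M)} - \mu_k$. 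A standard doubling argument, using $M_p 2^p \le 2 M_{p-1} 2^{p-1} \le 2 T(p-1)$ so that the pull count at most doubles between consecutive phases, shows the first phase crossing the threshold overshoots it by at most a constant factor, giving the stated $\mathcal{O}(\log(T)/\Delta^2)$ bound on the number of exploration pulls.

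The main obstacle I anticipate is the bookkeeping of the second step: making the common-radius claim precise despite arms being removed at different phases, and running the induction so that the count of active optimal arms tracks $M_p$ exactly even when accepts and rejects are interleaved and several arms are processed within a single phase. The probabilistic and doubling parts are routine once $\mathcal{E}$ and the invariant are in place.
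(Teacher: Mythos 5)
Your proposal is correct and follows essentially the same route as the paper's proof: it conditions on the same good event (successful initialization plus concentration of the quantized estimates over the $\mathcal{O}(\log T)$ phases, via Lemmas~\ref{lemma:musicalchair} and~\ref{lemm:concentration}), uses the fact that all active arms share a common centralized pull count $T(p)$ and radius, and concludes through the threshold condition $4B(p)\le\Delta$ combined with constant-factor growth of $T(p)$ between consecutive phases, exactly as the paper does with its quantity $s_k$ and Equation~\eqref{eq:Sp}. One cosmetic slip: since $T(p)=T(p-1)+M_p2^p$ and $M_p2^p\le 2T(p-1)$, the pull count can triple rather than merely double between phases, but this only affects the constant and matches the paper's bound $T(p+1)\le 3T(p)$.
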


\begin{proof}
\label{proof:explo}
With probability at least $1 - M \exp \left( - \frac{T_0}{K} \right)$, the initialization is successful, \ie all players have been assigned different ranks. The remaining of the proof is conditioned on that event.

As there are at most $\log_2(T)$ exploration-communication phases, $|\widetilde{\mu}_k(p) - \mu_k| \leq B_{T_k(p)}$ holds for any arm and phase with probability $1 - \mathcal{O} \left( \frac{K \log(T)}{T} \right)$ thanks to Lemma~\ref{lemm:concentration}. The remaining of the proof is conditioned on that event. 

We first consider an optimal arm $k$. Let $\Delta_k = \mu_k - \mu_{(M+1)}$ be the gap between the arm $k$ and the first sub-optimal arm. We assume $\Delta_k >0$ here, the case of equality holds considering $\frac{\log(T)}{0}=\infty$. Let $s_k$ be the first integer such that $4 B_{s_k} \leq \Delta_k$.

With $T_k(p) = \sum_{l=1}^p M_l 2^l$ the number of times an active arm has been pulled after the $p$-th exploration phase, it holds that \begin{equation}
\label{eq:Sp}
T(p+1) \leq 3 T(p) \qquad \text{ as $M_p$ is non-increasing.}
\end{equation}  For some $p \in \mathbb{N}$, $T(p-1) < s_k \leq T(p)$ or the arm $k$ is active at time $T$. In the second case, it is obvious that $k$ is pulled less than $\mathcal{O}(s_k)$ times.
Otherwise, the triangle inequality for such a $p$, for any active sub-optimal arm $i$, yields
$
\widetilde{\mu}_k(p) - B_{T_k(p)} \geq \widetilde{\mu}_i(p) + B_{T_i(p)}.
$

So the arm $k$ is accepted after at most $p$ phases. Using the same argument as in \citep{batchedbandits}, it holds $s_k = \mathcal{O}\left( \frac{\log(T)}{(\mu_k - \mu_{(M+1)})^2}  \right)$, and also for $T_k(p)$ thanks to Equation~\eqref{eq:Sp}. Also, $k$ can not be wrongly rejected conditionally on the same event, as it can not be dominated by any sub-optimal arm in term of confidence intervals.

\medskip

The proof for the sub-optimal case is similar if we denote $\Delta_k = \mu_{(M)} - \mu_{k}$.
\end{proof}

In the following, we keep the notation $t_k =\min \Big\lbrace \frac{c \log(T)}{\left( \mu_{k} - \mu_{(M)} \right)^2},\ T \Big\rbrace$, where $c$ is a universal constant such that with the probability considered in Proposition~\ref{prop:explo}, the number of exploration pulls before accepting/rejecting $k$ is at most $t_k$.

\medskip

For both exploration and exploitation phases, the decomposition used in the centralized case \citep{anantharam} holds because there is no collision during these two types of phases (conditionally on the success of the initialization phase):
\begin{equation}
\label{eq:regdecomposition}
R^{\text{explo}} = \sum\limits_{k > M} (\mu_{(M)} - \mu_{(k)}) T_{(k)}^{\text{explo}} + \sum\limits_{k \leq M} (\mu_{(k)} - \mu_{(M)}) (T^{\text{explo}}-T_{(k)}^{\text{explo}}),
\end{equation}

where $T^{\text{explo}} = \#\text{Explo}$ and $T_{(k)}^{\text{explo}}$ is the centralized number of time steps where the $k$-th best arm is pulled during exploration or exploitation phases.

\begin{lemm}
\label{lemma:exploregretopt1}
With probability $1 - \mathcal{O} \left( \frac{K \log(T)}{T} + M \exp \left( - \frac{T_0}{K} \right) \right)$, the following hold simultaneously:
\begin{enumerate}[label=\roman*)]
\item for a sub-optimal arm $k$, $(\mu_{(M)} - \mu_k) T_k^{\mathrm{explo}} = \mathcal{O} \left( \min \bigg\lbrace \frac{\log(T)}{\mu_{(M)} - \mu_k},\! \sqrt{T \log(T)} \bigg\rbrace \right).$
\item $\sum\limits_{k \leq M} (\mu_{(k)} - \mu_{(M)}) (T^{\mathrm{explo}} - T_{(k)}^{\mathrm{explo}}) = \mathcal{O} \left( \sum\limits_{k > M} \min \bigg\lbrace \frac{\log(T)}{\mu_{(M)} - \mu_{(k)}}, \sqrt{T \log(T)} \bigg\rbrace \right).$
\end{enumerate}
\end{lemm}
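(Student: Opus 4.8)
The plan is to treat the two items separately, with item (i) a short consequence of Proposition~\ref{prop:explo} and item (ii) carrying the bulk of the work. Throughout I would work on the high-probability event of Proposition~\ref{prop:explo}, i.e. conditioned on a successful initialization and on the concentration event of Lemma~\ref{lemm:concentration}.

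For item (i), Proposition~\ref{prop:explo} guarantees that a sub-optimal arm $k$ is rejected, and hence stops being explored, after $T_k^{\mathrm{explo}} = \mathcal{O}(t_k)$ pulls, where $t_k = \min\{c\log(T)/(\mu_{(M)}-\mu_k)^2,\, T\}$. Multiplying by the gap gives $(\mu_{(M)}-\mu_k)T_k^{\mathrm{explo}} = \mathcal{O}(\min\{c\log(T)/(\mu_{(M)}-\mu_k),\, (\mu_{(M)}-\mu_k)T\})$. The first entry of this minimum is already $\mathcal{O}(\log(T)/(\mu_{(M)}-\mu_k))$; for the $\sqrt{T\log(T)}$ bound I would use $\min\{a,b\}\le\sqrt{ab}$ with $a=c\log(T)/(\mu_{(M)}-\mu_k)$ and $b=(\mu_{(M)}-\mu_k)T$, whose geometric mean is exactly $\sqrt{cT\log(T)}$. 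This delivers both entries of the target minimum at once.

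For item (ii), which charges regret to the optimal arms, the first step is to rewrite the regret more symmetrically than Equation~\eqref{eq:regdecomposition}. On the good event, at phase $p$ there are exactly $M_p$ active optimal arms and $K_p-M_p$ active sub-optimal arms, every active arm has been pulled the same centralized number of times $T(p)=\sum_{l\le p}M_l2^l$, and the exploited arms are optimal; a short computation cancelling the exploited arms then shows the phase-$p$ exploration regret equals $2^p\sum_{i,k}(\mu_i-\mu_k)$, the double sum ranging over active optimal $i$ and active sub-optimal $k$. Splitting $\mu_i-\mu_k=(\mu_i-\mu_{(M)})+(\mu_{(M)}-\mu_k)$ recovers the sub-optimal term of Equation~\eqref{eq:regdecomposition} and isolates item (ii) as $\sum_p 2^p\sum_{i,k}(\mu_i-\mu_{(M)})$.

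The key analytic step is then to bound, for an arm still active at phase $p$, its gap to $\mu_{(M)}$. Since such an arm has been neither accepted nor rejected, the confidence-interval argument behind Proposition~\ref{prop:explo} forces $\mu_i-\mu_{(M)}=\mathcal{O}(B_{T(p)})$ for each active optimal $i$ (and $\mu_{(M)}-\mu_k=\mathcal{O}(B_{T(p)})$ for active sub-optimal $k$). Summing over the $M_p$ active optimal arms gives $\sum_i(\mu_i-\mu_{(M)})=\mathcal{O}(M_p B_{T(p)})$, so the contribution of a fixed sub-optimal arm $k$ to item (ii) is $\mathcal{O}\big(\sum_{p:\,k\text{ active}}2^pM_pB_{T(p)}\big)$. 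I expect the main obstacle to be avoiding a spurious factor $M$ here; the resolution is the inequality $M_p2^p\le T(p)$, which turns each summand into $\mathcal{O}(\sqrt{T(p)\log(T)})$ because $T(p)B_{T(p)}=\mathcal{O}(\sqrt{T(p)\log(T)})$. Since $T(p)$ grows geometrically (Equation~\eqref{eq:Sp}) and remains $\mathcal{O}(t_k)$ while $k$ is active, the geometric sum is dominated by its largest term $\mathcal{O}(\sqrt{t_k\log(T)})=\mathcal{O}(\min\{\log(T)/(\mu_{(M)}-\mu_k),\sqrt{T\log(T)}\})$. Summing over $k>M$ proves item (ii), and items (i)--(ii) together via Equation~\eqref{eq:regdecomposition} yield Lemma~\ref{lemma:exploregretopt}.
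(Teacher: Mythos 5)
Your item (i) is correct and is essentially the paper's own argument (your geometric-mean inequality $\min\{a,b\}\le\sqrt{ab}$ replaces the paper's maximization of $\Delta\mapsto\min\{\log(T)/\Delta,\Delta T\}$, but the two are interchangeable), and the structural part of your item (ii) is also sound: your symmetric rewriting of the phase-$p$ regret as $2^p\sum_{i,k}(\mu_i-\mu_k)$ over active optimal $i$ and active sub-optimal $k$ is an exact-identity version of Lemma~\ref{lemma:auxregret1}, and the bound $\mu_i-\mu_{(M)}=\mathcal{O}(B_{T(p)})$ for an arm still active at phase $p$ is the same observation the paper encodes via $\Delta(p)$. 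The genuine gap is in your last step. Equation~\eqref{eq:Sp} states $T(p+1)\le 3T(p)$: the growth of the centralized pull count is \emph{at most} geometric, and no lower bound of the form $T(p+1)\ge\gamma T(p)$ with $\gamma>1$ holds in general. Hence, once you discard the increment structure by bounding $M_p2^p\le T(p)$, the resulting sum $\sum_{p\le N_k}\sqrt{T(p)\log(T)}$ is \emph{not} dominated by its largest term. Concretely, if $M_1=M$ is large and all but one player move to exploitation after the first phase (so $M_p=1$ for $p\ge2$, a realizable outcome of the accept rule), then $T(p)=2M+2^{p+1}-4$ stays within a constant factor of $2M$ during the $\Theta(\log M)$ phases with $2^p\le M$; for a sub-optimal arm $k$ with $t_k\asymp M$ your bound is $\Theta\big(\log(M)\sqrt{M\log(T)}\big)$, whereas the lemma claims $\mathcal{O}\big(\sqrt{M\log(T)}\big)$. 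So your argument only proves item (ii) up to an extra logarithmic multiplicative factor, which is strictly weaker than the statement.

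The fix is exactly the telescoping of Lemma~\ref{lemma:auxregret2}: do not replace $M_p2^p$ by $T(p)$, but use that it \emph{is} the increment $T(p)-T(p-1)$. In your notation,
\begin{equation*}
\sum_{p\le N_k} \big(T(p)-T(p-1)\big)B_{T(p)} \,\asymp\, \sqrt{\log(T)}\sum_{p\le N_k}\frac{T(p)-T(p-1)}{\sqrt{T(p)}} \,\le\, 2\sqrt{\log(T)}\sum_{p\le N_k}\Big(\sqrt{T(p)}-\sqrt{T(p-1)}\Big) \,=\, 2\sqrt{T(N_k)\log(T)},
\end{equation*}
where the middle inequality uses $\frac{T(p)-T(p-1)}{\sqrt{T(p)}}=\big(\sqrt{T(p)}-\sqrt{T(p-1)}\big)\frac{\sqrt{T(p)}+\sqrt{T(p-1)}}{\sqrt{T(p)}}\le 2\big(\sqrt{T(p)}-\sqrt{T(p-1)}\big)$; the paper performs the same telescoping with $\Delta(p)\propto 1/\sqrt{T(p-1)}$, which is where \eqref{eq:Sp} is used in its correct direction ($\Delta(p)/\Delta(p+1)\le\sqrt3$). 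The conclusion $\mathcal{O}(\sqrt{T(N_k)\log(T)})=\mathcal{O}(\sqrt{t_k\log(T)})$ then holds with no dependence on the number of phases. Note the irony: the step you singled out as the resolution of the factor-$M$ obstacle, namely $M_p2^p\le T(p)$, is precisely what destroys this telescoping; the correct resolution of that obstacle is that $M_p2^p$ is the increment of $T(\cdot)$, so summing it against $B_{T(p)}$ collapses to a single boundary term.
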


\begin{proof}
\label{proof:exploregretopt}
i) From Proposition~\ref{prop:explo}, $T_k^{\mathrm{explo}} \leq \mathcal{O} \left( \min \bigg\lbrace \frac{\log(T)}{(\mu_{(M)} - \mu_k)^2} , T \bigg\rbrace \right)$ with the considered probability, so $(\mu_{(M)} - \mu_k) T_k^{\mathrm{explo}} = \mathcal{O} \left( \min \bigg\lbrace \frac{\log(T)}{(\mu_{(M)} - \mu_k)},\ (\mu_{(M)} - \mu_k)T \bigg\rbrace \right)$. The function $\Delta \mapsto \min \bigg\lbrace \frac{\log(T)}{\Delta},\ \Delta T \bigg\rbrace$ is maximized for $\Delta = \sqrt{\frac{\log(T)}{T}}$ and its maximum is $\sqrt{T \log(T)}$. Thus, the inequality $\min \bigg\lbrace \frac{\log(T)}{\Delta},\ \Delta T \bigg\rbrace \leq \min \bigg\lbrace \frac{\log(T)}{\Delta},\ \sqrt{T \log(T)} \bigg\rbrace$ always holds for $\Delta \geq 0$ and yields the first point.

\medskip

ii) We (re)define the following: $\hat t_k$ the number of exploratory pulls before accepting/rejecting the arm $k$, $M_l$ the number of active player during the $l$-th exploration phase, $T(p) = \sum\limits_{l=1}^p 2^{l} M_l$ and $N$ the total number of exploration phases.

$T(p)$ describes the total number of exploration pulls processed at the end of the $p$-th exploration phase on every active arm for $p<N$. Since the $N$-th phase may remain uncompleted, $T(N)$ is then greater that the number of exploration pulls at the end of the $N$-th phase.

\medskip

With probability $1 - \mathcal{O} \left( \frac{K \log(T)}{T} + M \exp \left( - \frac{T_0}{K} \right) \right)$, the initialization is successful, any arm is correctly accepted or rejected and $\hat{t}_k \leq t_k$ for all $k$. The remaining of the proof is conditioned on that event.
We now decompose the proof in two main parts given by Lemmas~\ref{lemma:auxregret1} and \ref{lemma:auxregret2} proven below. 

\begin{lemm}
Conditionally on the success of the initialization phase and on correct estimations of all arms:
\label{lemma:auxregret1}
\begin{equation*}
\sum\limits_{k \leq M} (\mu_{(k)} - \mu_{(M)})(T^\text{explo} - T_{(k)}^{\text{explo}}) \leq \sum\limits_{j > M}\sum\limits_{k \leq M} \sum\limits_{p=1}^{N} 2^p (\mu_{(k)} - \mu_{(M)}) \mathds{1}_{\min( \hat{t}_{(j)} , \hat{t}_{(k)}) > T(p-1)}.
\end{equation*}
\end{lemm}

\begin{lemm}
Conditionally on the success of the initialization phase and on correct estimations of all arms:
\label{lemma:auxregret2}
\begin{equation*}
\sum\limits_{k \leq M} \sum\limits_{p=1}^{N} 2^p (\mu_{(k)} - \mu_{(M)}) \mathds{1}_{\min( \hat{t}_{(j)}, \hat{t}_{(k)}) > T(p-1)} \leq \mathcal{O}\left( \min \bigg\lbrace \frac{\log(T)}{\mu_{(M)} - \mu_{(j)}}, \sqrt{T \log(T)} \bigg\rbrace \right).
\end{equation*}
\end{lemm}

These two lemmas directly yield the second point in Lemma~\ref{lemma:exploregretopt1}.
\end{proof}

\begin{proof}[Proof of Lemma~\ref{lemma:auxregret1}.]
Let us consider an optimal arm $k$. During the $p$-th exploration phase, there are two possibilities:
\begin{itemize}
\item either $k$ has already been accepted, i.e., $\hat{t}_k \leq T(p-1)$. Then the arm $k$ is pulled the whole phase, \ie $K_p2^p$ times.
\item Or $k$ is still active. Then it is pulled $2^p$ times by each active player, i.e., it is pulled $M_p 2^p$ times in total. This means that it is not pulled $(K_p-M_p)2^p$ times.
\end{itemize}

From these two points, it holds that $
T_k^{\text{explo}} \geq T^{\textrm{explo}} - \sum\limits_{p=1}^{N} 2^p (K_p - M_p)\mathds{1}_{\hat{t}_k > T(p-1)}$.

Notice that $K_p - M_p$ is the number of active sub-optimal arms. By definition, $K_p - M_p = \sum\limits_{j > M} \mathds{1}_{\hat{t}_{(j)} > T(p-1)}$. We thus get that $T_k^{\text{explo}} \geq T^{\text{explo}} - \sum\limits_{j > M} \sum\limits_{p=1}^{N} 2^p \mathds{1}_{\min( \hat{t}_{(j)} , \hat{t}_k) > T(p-1)}$.

The double sum actually is the number of times a sub-optimal arm is pulled instead of $k$. 
This yields the result when summing over all optimal arms $k$.
\end{proof}

\begin{proof}[Proof of Lemma~\ref{lemma:auxregret2}.]
Let us define $A_j = \sum\limits_{k \leq M} \sum\limits_{p=1}^{N} 2^p (\mu_{(k)} - \mu_{(M)}) \mathds{1}_{\min( \hat{t}_{j} , \hat{t}_{(k)}) > T(p-1)}$ the cost associated to the sub-optimal arm $j$. Lemma~\ref{lemma:auxregret2} upper bounds $A_j$ for any sub-optimal arm $j$.

\medskip

Recall that $t_{(k)} = \min \left( \frac{c \log(T)}{\left( \mu_{(k)} - \mu_{(M)} \right)^2},\ T \right)$ for a universal constant $c$. The proof is conditioned on the event $\hat{t}_{(k)} \leq t_{(k)}$, so that if we define $\Delta(p) = \sqrt{\frac{c \log(T)}{T(p-1)}}$, the inequality $\hat{t}_{(k)} > T(p-1)$ implies $\mu_{(k)} - \mu_{(M)} < \Delta(p)$. We also write $N^j$ the first integer such that $\hat{t}_j \leq T(N^j)$. It follows:
\begin{align*}
A_j & \leq \sum\limits_{k \leq M} \sum\limits_{p=1}^{N^j} 2^p \Delta(p) \mathds{1}_{\hat{t}(k)>T(p-1)} \\
& \leq \sum\limits_{p=1}^{N^j} \Delta(p) \left( T(p) - T(p-1) \right) \hspace{4.8cm} \text{as } \sum\limits_{k \leq M} \mathds{1}_{\hat{t}(k)>T(p-1)} = M_p. \\
&  = c \log(T) \sum\limits_{p=1}^{N^j} \Delta(p) \left( \frac{1}{\Delta(p+1)} + \frac{1}{\Delta(p)} \right) \left( \frac{1}{\Delta(p+1)} - \frac{1}{\Delta(p)} \right) \\
& \leq (1+\sqrt{3}) c \log(T) \sum\limits_{p=1}^{N^j} (\frac{1}{\Delta(p+1)} - \frac{1}{\Delta(p)}) \hspace*{3.85cm} \text{thanks to Equation~\eqref{eq:Sp}.} \\
& \leq (1+\sqrt{3}) c \log(T) \frac{1}{\Delta(N^j +1)} \hspace*{5.2cm} \text{by convention, } \frac{1}{\Delta(1)} = 0.
\end{align*}

By definition of $N^j$, we have $t_j \geq T(N^j-1)$. Thus, $\Delta(N^j) \geq \sqrt{\frac{c \log(T)}{t_j}}$ and Equation~\eqref{eq:Sp} gives $\Delta(N^j+1) \geq \sqrt{\frac{c \log(T)}{3t_j}}$. It then holds $A_j \leq (3+\sqrt{3}) \sqrt{c\ t_j \log(T)}$. The result follows since $t_j = \mathcal{O}\left(\min\big\lbrace \frac{\log(T)}{(\mu_{(M)} - \mu_j)^2}, T \big\rbrace \right)$.
\end{proof}

Using the two points of Lemma~\ref{lemma:exploregretopt1}, along with Equation~\eqref{eq:regdecomposition}, yields Lemma~\ref{lemma:exploregretopt}.

\subsubsection{Communication cost}
We now focus on the $R^{\mathrm{comm}}$ term in Equation~\eqref{eq:regdec}. Lemma~\ref{lemma:commregret} states it is negligible compared to $\log(T)$ and has a significant impact on the regret only for small values of $T$.
\begin{lemm}
\label{lemma:commregret}
With probability $1 - \mathcal{O} \left( \frac{K \log(T)}{T} + M \exp \left( - \frac{T_0}{K} \right) \right)$, the following holds:$$R^{\mathrm{comm}} = \mathcal{O} \left( KM^3 \log^2 \left( \min \bigg\lbrace \frac{\log(T)}{(\mu_{(M)}-\mu_{(M+1)})^2},\ T \bigg\rbrace \right) \right).$$
\end{lemm}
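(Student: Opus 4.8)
The plan is to bound $R^{\mathrm{comm}}$ as a product of (i) a crude per-step regret bound and (ii) the total number of communication steps, and then to control the latter through the number of phases via Proposition~\ref{prop:explo}. Throughout, I would work on the good event of Proposition~\ref{prop:explo} (initialization successful, every arm correctly accepted/rejected after at most $t_k$ centralized pulls), which carries exactly the probability $1 - \mathcal{O}(K\log(T)/T + M\exp(-T_0/K))$ appearing in the statement, so no new probabilistic argument is needed.

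First I would compute the length of the $p$-th communication phase. It iterates over $E_p = \{(i,l,k) \in [M_p]\times[M_p]\times[K_p] \mid i\neq l\}$, handling each triple in exactly $p+1$ time steps (each player either sends, receives, or waits), so the phase lasts $|E_p|(p+1) = M_p(M_p-1)K_p(p+1)$ steps, all of which are communication steps. Since rewards lie in $[0,1]$, the instantaneous regret $\sum_{j=1}^M(\mu_{(j)} - r^j(t))$ is at most $\sum_{j=1}^M \mu_{(j)} \leq M$ at any such step. Using $M_p \leq M$ and $K_p \leq K$ then gives
\begin{equation*}
R^{\mathrm{comm}} \leq M \sum_{p=1}^N M_p(M_p-1)K_p(p+1) \leq M^3 K \sum_{p=1}^N (p+1) = \mathcal{O}\!\left(M^3 K N^2\right),
\end{equation*}
where $N$ is the total number of exploration--communication phases.

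The crucial remaining step is to bound $N$. On the good event the algorithm stops once every arm is decided, so $N$ is governed by the hardest arm, i.e.\ the one with the largest $t_k$. The largest $t_k$ corresponds to the smallest gap to $\mu_{(M)}$, attained at the boundary arms and equal to $\mu_{(M)}-\mu_{(M+1)}$; hence $\max_k t_k = \mathcal{O}(\min\{\log(T)/(\mu_{(M)}-\mu_{(M+1)})^2,\, T\})$. Since each active arm is pulled $M_l 2^l \geq 2^l$ times (centralized) during the $l$-th exploration phase, the cumulative count obeys $T(N-1) \geq 2^N - 2$; comparing with $T(N-1) < \max_k t_k$ yields $2^N = \mathcal{O}(\max_k t_k)$, i.e.
\begin{equation*}
N = \mathcal{O}\!\left( \log\Big( \min\big\{ \tfrac{\log(T)}{(\mu_{(M)}-\mu_{(M+1)})^2},\, T \big\}\Big)\right).
\end{equation*}
Plugging this into the previous display gives exactly the claimed bound.

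The main obstacle is this last step: one must argue, still on the high-probability event of Proposition~\ref{prop:explo}, both that the binding gap is $\mu_{(M)}-\mu_{(M+1)}$ and that the geometric (doubling) growth of the per-phase pull counts — quantified by Equation~\eqref{eq:Sp} — converts the pull bound $t_k$ into a merely logarithmic bound on $N$. The per-step regret bound of $M$ and the phase-length count are routine by comparison; the only care needed there is to note that, even though collisions are deliberately created, the crude bound of $M$ per step suffices and no finer accounting of active versus exploiting players is required to reach the $M^3 K$ dependence.
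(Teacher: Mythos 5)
Your proposal is correct and follows essentially the same route as the paper's proof: bound the length of the $p$-th communication phase by $KM^2(p+1)$, the per-step regret crudely by $M$, sum to get $\mathcal{O}(KM^3N^2)$, and bound $N$ logarithmically on the event of Proposition~\ref{prop:explo}. The only difference is cosmetic — the paper cites Proposition~\ref{prop:explo} directly for the bound on $N$, while you make explicit the conversion from the pull bound $t_k$ to the phase count via the geometric growth $T(N-1)\geq 2^N-2$, a detail the paper leaves implicit.
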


\begin{proof}
As explained in Section~\ref{Phase comm}, the length of the communication phase $p \in [N]$ is at most $K M^2 (p+1)$, where $N$ is the number of exploration phases. The cost of communication is then smaller than
$KM^3 \sum_{p=1}^{N} (p+1) \leq \mathcal{O} \left( KM^3 N^2 \right)$.
Proposition~\ref{prop:explo} in Appendix~\ref{app:exploregret}, claims with the considered probability that $N$ is at most
$\mathcal{O} \Big( \log \Big( \min \bigg\lbrace \frac{\log(T)}{(\mu_{(M)} - \mu_{(M+1)})^2},\ T \bigg\rbrace \Big)  \Big)$, which yields Lemma~\ref{lemma:commregret}.
\end{proof}

\subsubsection{Total regret}

The choice $T_0 = \lceil K \log(T) \rceil$ along with Lemmas~\ref{lemma:musicalchair}, \ref{lemma:exploregretopt} and \ref{lemma:commregret} claim that a bad event occurs with probability at most $\mathcal{O} \left( \frac{K \log(T)}{T} + \frac{M}{T}\right)$. The average regret due to bad events is thus upper bounded by $\mathcal{O}(KM \log(T))$. Using these lemmas along with Equation~\eqref{eq:regdec} finally yields the bound in Theorem~\ref{thm:upperbound}.

\subsection{Experiments}
\label{sec:expe}
We compare in Figure~\ref{fig:synchcommexp} the empirical performances of \algoone with the \textsc{mct}op\textsc{m} algorithm\citep{besson} on generated data\footnote{The code is available at \url{https://github.com/eboursier/sic-mmab}.}. 
We also compared with the \textsc{m}usical\textsc{c}hairs algorithm \citep{musicalchair}, but its performance was irrelevant and out of scale. This is mainly due to its scaling with $1/\Delta^2$, besides presenting large constant terms in its regret. Also, its main advantage comes from its scaling with $M$, which is here small for computational reasons. All the considered regret values are averaged over 200 runs.
The experiments are run with Bernoulli distributions. Thus, there is no need to quantize the sent statistics and a tighter confidence bound $B_s = \sqrt{\frac{2\log(T)}{s}}$ is used.

Figure~\ref{fig:synchcommexp1} represents the evolution of the regret for both algorithms with the following problem parameters: $K=9$, $M=6$, $T=5 \times 10^5$. The means of the arms are linearly distributed between $0.9$ and $0.89$, so the gap between two consecutive arms is $1.25 \times 10^{-3}$. The switches between exploration and communication phases for \algoone are easily observable. A larger horizon (near $40$ times larger) is required for \algoone to converge to a constant regret, but this alternation between the phases could not be visible for such a value of $T$.

Figure~\ref{fig:synchcommexp2} represents the evolution of the final regret as a function of the gap $\Delta$ between two consecutive arms in a logarithmic scale. The problem parameters $K$, $M$ and $T$ are the same. Although \textsc{mct}op\textsc{m} seems to provide better results with larger values of $\Delta$, \algoone seems to have a smaller dependency in $1/\Delta$. This confirms the theoretical results claiming that \textsc{mct}op\textsc{m} scales with $\Delta^{-2}$ while \algoone scales with $\Delta^{-1}$. This can be observed on the left part of Figure \ref{fig:synchcommexp2} where the slope for \textsc{mct}op\textsc{m} is approximately twice as large as for \algoone[.]Also, a different behavior of the regret appears for very low values of $\Delta$ which is certainly due to the fact that the regret only depends on $T$ for extremely small values of $\Delta$ (minmax regret).

   
\begin{figure}[H]
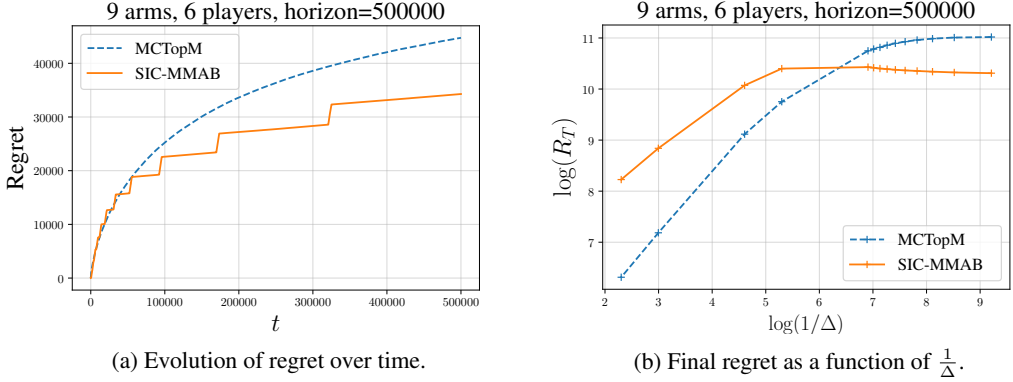

   \begin{subfigure}{0.5\textwidth}
   \centering
     \resizebox{\linewidth}{!}{\input{synchcommexp1.pgf}}
         \caption{Evolution of regret over time.}    \label{fig:synchcommexp1}
   \end{subfigure}
   \begin{subfigure}{0.5\textwidth}
     \centering
         \resizebox{\linewidth}{!}{\input{synchcommexp2.pgf}}
         \caption{Final regret as a function of $\frac{1}{\Delta}$.} \label{fig:synchcommexp2}
         \end{subfigure}
         \caption{Performance comparison between \algoone and \textsc{mct}op\textsc{m} algorithms. \label{fig:synchcommexp}}
\end{figure}
%
\section{Complementary material for Section~\ref{sec:nosens2}}
\label{app:alg30}
\subsection{\algothree description}
\label{app:alg3}

This section thoroughly describes \algothree algorithm. Its pseudocode is given in Algorithm~\ref{algo:nosens2} below.
\begin{algorithm}[h]
\caption{\label{algo:nosens2}\algothree algorithm}
\hspace*{\algorithmicindent} \textbf{Input:} $T^j$ (personal horizon)
\begin{algorithmic}[1]
\STATE $p \gets 1$, $\texttt{Fixed} \gets -1$ and initialize $\texttt{Preferences},\ \texttt{Occupied}$ as empty lists
\STATE $\mathbf{T}, \mathbf{T}^{\text{temp}}, \mathbf{S}, \mathbf{S}^{\text{temp}} \gets \text{Zeros}(K)$ and define $\mathbf{L}$ as a vector of $K$ elements equal to $\infty$
\STATE $r_{\inf}[k] \gets 0$ and $r_{\sup}[k] \gets 1$ for every arm $k$ \COMMENT{Initialize the confidence intervals} 
\STATE \textbf{Exploration Phase:} \COMMENT{$B^j(t) = 2\sqrt\frac{6 \ K \log(T^j)}{t}$ here}
\WHILE{\texttt{Fixed} $=-1$}
	\STATE Pull $k \sim \mathcal{U}([K])$; $T^{\text{temp}}[k] \gets T^{\text{temp}}[k] + 1$ and $T[k] \gets T[k] + 1$
	\STATE $S^{\text{temp}}[k] \gets S^{\text{temp}}[k] + r_k(t)$ and $S[k] \gets S[k] + r_k (t)$
	\STATE For all arms $i$, $r_{\inf}[i] \gets \left(\frac{S[i]}{T[i]} - B^j(t)\right)^+$ and $r_{\sup}[i] \gets \min \left( \frac{S[i]}{T[i]} + B^j(t), 1 \right)$
	\STATE $L[k] \gets \min \left( \frac{2e \log(T^j)}{r_{\inf}[k]}, \ L[k] \right)$
	\STATE \algorithmicif\ $k = \texttt{Preferences}[p]$ and $r_k(t)>0$\ \algorithmicthen\ \texttt{Fixed} $\gets k$ \algorithmicendif	\COMMENT{no collision\\ \hfill on the arm to exploit} \label{alg3:point0}
	  \STATE \algorithmicif\ $\texttt{Preferences}[p] \in \texttt{Occupied}$\ \algorithmicthen\ $p \gets p + 1$\ \algorithmicendif \COMMENT{exploited by another player} \label{alg3:point1}
	 
	 \IF[end of sliding window]{$T^{\text{temp}}[k] \geq L[k]$} 	\label{alg3:rule10}
	 \STATE \algorithmicif\ $S^{\text{temp}}[k] = 0$\ \algorithmicthen\ Add $k$ to $\texttt{Occupied}$\ \algorithmicendif		 \COMMENT{estimate that $k$ is occupied}
	 \STATE Reset $S^{\text{temp}}[k], T^{\text{temp}}[k] \gets 0$
	 \ENDIF \label{alg3:rule11}
	 
	 \IF{for some active arm $i$ and all other active arms $l$, $r_{\inf}[i] > r_{\sup}[l]$} \label{alg3:rule20}
	 \STATE Add $i$ to $\texttt{Preferences}$ (last position) \COMMENT{$i$ is better than any other active arm}
	 \ENDIF \label{alg3:rule21}
	 \STATE \algorithmicif\ {there is some $l$ not in $\texttt{Preferences}[1:p], \text{ such that }  r_{\inf}[l] > r_{\sup}[\texttt{Preferences}[p]]$}\ \label{alg3:rule30} \\
	 \algorithmicthen\ add $\texttt{Preferences}[p]$ to \texttt{Occupied}
	 \STATE \algorithmicendif \label{alg3:rule31}  \COMMENT{the mean of the available best arm has significantly dropped}
\ENDWHILE
\vspace{0.2cm}

\STATE \textbf{Exploitation Phase:} Pull \texttt{Fixed} until $T^j$

\end{algorithmic}
\end{algorithm}
We first describe the rules explaining when a player adds an arm to \texttt{Occupied} or \texttt{Preferences.}

\medskip

An arm $k$ is added to \texttt{Occupied} (it may already be in \texttt{Preferences}) if only $0$ rewards have been observed during a whole block of $L_k^j$ pulls on arm $k$ for player $j$. Such a block ends when $L_k^j$ observations have been gathered on arm $k$ and a new block is then restarted. $L_k^j$ is an estimation of the required number of successive $0$ to observe before considering an arm as occupied with high probability. Its value at stage $t+\tau_j$, $L_k^j(t)$, is thus constantly updated using the current estimation of a lower bound of $\mu_k$:
\begin{equation}
\label{eq:updateTf}
L_{k}^j(t+1) \gets \min \left( \frac{2e \log(T^j)}{\left(\hat{r}_k^j(t+1) - B^j(t+1)\right)^+}, \ L_{k}^j(t) \right)
\quad \text{and } L_{k}^j(0) = + \infty,
\end{equation}
where $\hat{r}_k^j(t)$ is the empirical mean reward  on the arm $k$ at stage $t+\tau_j$, $B^j(t) = 2\sqrt\frac{6 \ K \log(T^j)}{t}$, $x^+ = \max(x, 0)$ and $\frac{2e \log(T^j)}{0} = + \infty$. This rule is described at lines~\ref{alg3:rule10}-\ref{alg3:rule11} in Algorithm~\ref{algo:nosens2}.

\medskip

An active arm $k$ is added to \texttt{Preferences} (at last position) if it is better than any other active arm, in term of confidence interval.
This rule is described at lines~\ref{alg3:rule20}-\ref{alg3:rule21} in Algorithm~\ref{algo:nosens2}.

\medskip

Another rule needs to be added to handle the possible case of an arm in \texttt{Preferences} already exploited by another player. As soon as an arm $k$ in \texttt{Preferences} becomes worse (in terms of confidence intervals) than an active arm or an arm with a higher index in \texttt{Preferences}, then $k$ is added to \texttt{Occupied}. This rule is described at lines~\ref{alg3:rule30}-\ref{alg3:rule31} in Algorithm~\ref{algo:nosens2}.

\medskip

Following these rules, as soon as there is an arm in \texttt{Preferences}, player $j$ tries to occupy the $p$-th arm in $\texttt{Preferences}$ (starting with $p=1$), yet she still continues to explore. As soon as she encounters a positive reward on it, she occupies it and starts the exploitation phase. If she does not end up occupying an optimal arm, this arm will be added to \texttt{Occupied} at some point. The player then increments $p$ and tries to occupy the next available best arm. This point is described at lines~\ref{alg3:point0}-\ref{alg3:point1} in Algorithm~\ref{algo:nosens2}. 
Notice that \texttt{Preferences} can have more than $p$ elements, but the player must not exploit the $q$-th element of \texttt{Preferences} with $q > p$ yet as it can lead the player in exploiting a sub-optimal arm.

\subsection{Theoretical analysis}
\label{app:alg3proofs}
\subsubsection{Auxiliary lemmas}
This section is devoted to the proof of Theorem~\ref{thm:dynamicregret}. It first proves the first point of Lemma~\ref{lemma:nosensoccupied}.

\begin{proof}[Proof of Lemma~\ref{lemma:nosensoccupied}.1.]
\label{proof:nosensoccupied} We first introduce $Z_t \coloneqq X_k(t+\tau_j)(1-\eta_k(t+\tau_j)) \mathds{1}_{\pi^j(t+\tau_j)=k}$ and $p_t \coloneqq \mathbb{E} [Z_t]$. Notice that $p_t \leq \frac{1}{K}$ because $\mathds{1}_{\pi^j(t+\tau_j)=k}$ is a Bernoulli of parameter $\frac{1}{K}$ in the exploration phase. Chernoff bound states that:
\begin{equation*}
\mathbb{P}\Big[ \sum\limits_{t'=1}^t (Z_{t'} - \mathbb{E}[Z_{t'}]) \geq t \delta \Big] \leq \min\limits_{\lambda>0} e^{-\lambda t \delta} \ \mathbb{E}\big[ \prod\limits_{t'=1}^{t} e^{\lambda(Z_{t'} - \mathbb{E}[Z_{t'}])}\big].
\end{equation*}

By convexity, $e^{\lambda z} \leq 1+ z(e^{\lambda}-1)$ for $z \in [0,1]$. It thus holds:
\begin{align*}
\mathbb{E}\Big[e^{\lambda (Z_t-\mathbb{E}[Z_t])}\Big] & \leq e^{-\lambda p_t} \left( 1 + p_t(e^\lambda -1)\right) \leq e^{-\lambda p_t} e^{p_t(e^\lambda -1)} \qquad \text{ as } 1+x \leq e^x .\\
& \leq e^{p_t (e^\lambda -1 -\lambda)} \leq e^{\frac{e^\lambda -1-\lambda}{K}} \hspace{0.9cm} \text{ as } p_t\leq\frac{1}{K} \text{ and } e^\lambda -1-\lambda\geq 0.
\end{align*}

It can then be deduced:
\begin{align*}
\mathbb{P}\Big[ \sum\limits_{t'=1}^t (Z_{t'} - \mathbb{E}[Z_{t'}])\geq t \delta  \Big] & \leq \min\limits_{\lambda>0} e^{-\lambda t \delta} e^{t \frac{e^{\lambda}-1-\lambda}{K}}. & \qquad \text{For } \lambda = \log(1 + K\delta): \\
& \leq \exp\left( -\frac{t}{K} h(K\delta) \right) & \qquad \text{with } h(u) = (1+u) \log(1+u) - u .
\end{align*}

Similarly, we show for the negative error: $\mathbb{P}\Big[ \sum\limits_{t'=1}^t (Z_{t'} - \mathbb{E}[Z_{t'}])\leq - t \delta  \Big] \leq \exp\left( -\frac{t}{K} h(-K\delta) \right)$.
\medskip

Either $t \leq \frac{16}{3}K\log(T^j)$ and the desired inequality holds almost surely, or $K\delta < 1$ with $\delta = \sqrt{\frac{16 \log(T^j)}{3tK}}$. As $h(x) \geq \frac{3x^2}{8}$ for $|x|<1$, it then holds
\begin{align*}
\mathbb{P}\Big[ \Big|\sum\limits_{t'=1}^t (Z_{t'} - \mathbb{E}[Z_{t'}]) \Big| \geq t\delta \Big] \leq 2e^{-\frac{3t (K\delta)^2}{8K}} \qquad \text{and after multiplication with } \frac{K}{t}:
\end{align*}
\begin{equation}
\label{eq:nosens2explo1part1}
  \mathbb{P}\Bigg[  \Big| \frac{K}{t} {\mathlarger\sum_{t'= 1 + \tau_j}^{t+\tau_j}} X_k(t')(1-\eta_k(t')) \mathds{1}_{\pi^j(t')=k} - \gamma_j(t) \mu_k \Big| \geq \sqrt{\frac{16K \log(T^j)}{3t}}\Bigg] \leq \frac{2}{(T^j)^2}.
\end{equation}

Chernoff bound also provides a confidence interval on the number of pulls on a single arm:
\begin{equation}
\label{eq:chernoffpulls}
\mathbb{P}\Bigg[ \Big|T_k^j(t) - \frac{t}{K} \Big| \geq \sqrt{\frac{6t \log(T^j)}{K}}\Bigg] \leq \frac{2}{(T^j)^2}.
\end{equation}

From Equation~\eqref{eq:chernoffpulls}, it can be directly deduced that $\mathbb{P}\Big[|\frac{K T_k^j(t)}{t} - 1| \geq \sqrt{\frac{6K \log(T^j)}{t}}\Big] \leq \frac{2}{(T^j)^2}$. As $\hat{r}^j_k(t) \leq 1$,
\begin{equation}
\label{eq:nosens2explo1part2}
\mathbb{P}\Bigg[\Big| \frac{K T_k^j(t)}{t}\hat{r}_k^j(t) - \hat{r}_k^j(t)\Big| \geq \sqrt{\frac{6K \log(T^j)}{t}}\Bigg] \leq \frac{2}{(T^j)^2}. 
\end{equation}

As $\frac{K T_k^j(t)}{t}\hat{r}_k^j(t) = \frac{K}{t} \sum\limits_{t'= 1 + \tau_j}^{t+\tau_j} X_k(t')(1-\eta_k(t')) \mathds{1}_{\pi^j(t')=k}$, using the triangle inequality with Equations~\eqref{eq:nosens2explo1part1} and \eqref{eq:nosens2explo1part2} finally yields $
	\mathbb{P}\Big[|\hat{r}^j_k(t) - \gamma^j(t) \mu_k| \geq 2\sqrt\frac{6 \ K \log(T^j)}{t}\Big] \leq \frac{4}{(T^j)^2}$.
\end{proof}

The second point of Lemma~\ref{lemma:nosensoccupied} is proved below.

\begin{proof}[Proof of Lemma~\ref{lemma:nosensoccupied}.2.]
The previous point gives that with probability $1- \mathcal{O}\left( \frac{K}{T^j} \right)$, player $j$ correctly estimated all the free arms until stage $T$. The remaining of the proof is conditioned on this event.
We also assume that $t^0$ is the first stage where $k$ is occupied for the proof. The general result claimed in Lemma~\ref{lemma:nosensoccupied} directly follows.

When $t^0$ is small, the second case will happen, \ie the number of pulls on the arm $k$ is small and its average reward can quickly drop to $0$. When $t^0$ is large, $\gamma_j(t) \mu_k$ is tightly estimated so that $L_k^j$ is small. Then, the first case will happen, \ie the arm $k$ will be quickly detected as occupied.

\medskip

a) We first assume $t^0 \leq 12 K \log(T^j)$. The empirical reward after $T_k^j(t) \geq T_k^j(t^0)$ pulls is $\hat{r}_k^j(t) = \frac{\hat{r}_k^j(t^0) T_k^j(t^0)}{T_k^j(t)}$, because all pulls after the stage $t^0+\tau_j$ will return $0$ rewards.
However, using Chernoff bound as in Equation~\eqref{eq:chernoffpulls}, it appears that if $t^0 \leq 12 K \log(T^j)$ then $T_k^j(t^0) \leq 18 \log(T^j)$ with probability $1-\mathcal{O}\left( \frac{1}{T^j}\right)$, so $\hat{r}_k^j(t) \leq \frac{18 \log(T^j)}{T_k^j(t)}$.

Conditionally on the correct estimations of the arms, there is at least an unoccupied arm $i$ with $\mu_i \leq \mu_k$. Therefore with $t_i = \frac{72K e \log(T^j)}{\mu_i^2}$, as $t_i \geq 12 K \log(T^j)$, Chernoff bound guarantees that the following holds, with probability at least $1 - \frac{2}{T^j}$,
\begin{equation}
\label{eq:chernoff2}
\frac{3t_i}{2K} \geq T_k^j(t_i) \geq \frac{t_i}{2K} = \frac{36 e \log(T^j)}{\mu_i^2}.
\end{equation}
This gives that $\hat{r}_k^j(t_i) \leq \frac{\mu_i}{2e}$. After stage $\tau_j+ \frac{d' K \log(T^j)}{\mu_i^2}$, where $d'$ is some universal constant, the error bounds of both arms are upper bounded by $\frac{\mu_i}{8e}$. The confidence intervals would then be disjoint for the arms $k$ and $i$. So $k$ will be detected as worse than $i$ after a time at most $\mathcal{O} \left( \frac{K \log(T)}{\mu_i^2} \right)$ as $T^j \leq T$.

\medskip

b) We now assume that $12 K \log(T^j) \leq t^0 \leq \frac{24 \lambda K\log(T^j)}{\mu_k^2}$ with $\lambda = 16e^2$. It still holds $\hat{r}_k^j(t) = \frac{\hat{r}_k^j(t^0) T_k^j(t^0)}{T_k^j(t)}$. 
Correct estimations of the free arms are assumed in this proof, so in particular
\begin{equation}
\label{eq:empiricalrewardbond}
\hat{r}_k^j(t) \leq \frac{(\mu_k + B^j(t^0)) T_k^j(t^0)}{T_k^j(t)}.
\end{equation}

As in Equation~\eqref{eq:chernoff2}, it holds that $T_k^j(t^0) \leq \frac{3 t^0}{2K}$ with probability $1 - \mathcal{O} \left( \frac{1}{T^j} \right)$ and thus $B^j(t^0) \leq 6\sqrt{\frac{\log(T^j)}{T_k^j(t^0)}}$. Also, $T_k^j(t) \geq \frac{d \log(T^j)}{2 \mu_i \mu_k}$ for $t = d\frac{K \log(T^j)}{\mu_i^2}$. Equation~\eqref{eq:empiricalrewardbond} then becomes
\begin{equation*}
\hat{r}_k^j(t)  \leq  \frac{\mu_k T_k^j(t^0)}{T_k^j(t)}  +  \frac{B^j(t^0) T_k^j(t^0)}{T_k^j(t)} \leq \frac{36 \lambda}{d}\mu_i  +  \frac{6\sqrt{ T^j_k(t^0) \log(T^j)}}{T^j_k(t)}
 \leq \left( \frac{36 \lambda}{d}  +  \frac{72 \sqrt{\lambda}}{d} \right) \mu_i.
\end{equation*}

Thus, for a well chosen $d$, the empirical reward verifies $\hat{r}_k^j(t) \leq \frac{\mu_i}{2e}$. We then conclude as for the first case that the arm $k$ would be detected as worse than the free arm $i$ after a time $\mathcal{O} \left( \frac{K \log(T)}{\mu_i^2} \right)$.

\medskip

c) The last case corresponds to $t^0 > \frac{24 \lambda K\log(T^j)}{\mu_k^2}$. It then holds $B^j(t^0) \leq \frac{\mu_k}{\sqrt{\lambda}} = \frac{\mu_k}{4e}$.

By definition, $L_k^j \leq \frac{2e \log(T^j)}{\hat{r}_k^j - B^j(t)}$.
Conditionally on the correct estimation of the free arms, it holds that $\gamma_j(t)\mu_k - 2 B^j(t) \leq \hat{r}_k^j - B^j(t) \leq \mu_k$. So with the choice of $L_k^j$ described by Equation~\eqref{eq:updateTf}, as long as $k$ is free,
\begin{equation}
\label{eq:boundTf}
\begin{aligned}
\frac{2e \log(T^j)}{\mu_k} & \leq L_k^j & \leq \frac{2e \log(T^j)}{\gamma_j(t)\mu_k - 2 B^j(t)} \\
& & \leq \frac{2e^2 \log(T^j)}{\mu_k - 2e B^j(t)}.
\end{aligned}
\end{equation}

As $B^j(t^0) \leq \frac{\mu_k}{4e}$, it holds that $L_k^j(t^0) \leq \frac{4e^2 \log(T^j)}{\mu_k}$. Since $L_k^j$ is non-increasing by definition, this actually always holds for any $t$ larger than $ t^0$.

From that point, Equation~\eqref{eq:chernoff2} gives that with probability $1 - \mathcal{O} \left( \frac{1}{T^j} \right)$, the arm $k$ will be pulled at least $2L_k^j$ times between stage $t^0 + 1$ and $t^0 + 24K L_k^j$ with probability $1-\mathcal{O}\left( \frac{1}{T^j}\right)$. Thus, a whole block of $L_k^j$ pulls receiving only $0$ rewards on $k$ happens before stage $t^0 + 24K L_k^j$.

The arm $k$ is then detected as occupied after a time  $\mathcal{O}\left( \frac{K \log(T^j)}{\mu_k} \right)$ from $t^0$, leading to the result.
\end{proof}


\begin{lemm}
\label{lemma:nosensoccupied2}
At any stage, no free arm $k$ is falsely detected as occupied by player $j$ with probability $1 - \mathcal{O}\left( \frac{K}{T^j} \right)$.
\end{lemm}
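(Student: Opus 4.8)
The plan is to control the probability that a free arm $k$ triggers the sliding-window rule of lines~\ref{alg3:rule10}--\ref{alg3:rule11}, i.e.\ that a whole block of $L_k^j$ consecutive pulls on $k$ returns only $0$ rewards, since this is the mechanism by which an unexploited arm gets declared occupied. First I would condition on the correctness event of Lemma~\ref{lemma:nosensoccupied}.1, namely that every free arm is correctly estimated up to the personal horizon; a union bound of the per-arm per-stage estimate $4/(T^j)^2$ over the $K$ arms and the at most $T^j$ stages shows this event has probability $1-\mathcal{O}(K/T^j)$. On this event, for any free arm $k$ one has $\hat r_k^j(t) - B^j(t) \leq \gamma^j(t)\mu_k \leq \mu_k$, so that $(\hat r_k^j(t) - B^j(t))^+ \leq \mu_k$ and the update rule~\eqref{eq:updateTf} guarantees $L_k^j \geq 2e\log(T^j)/\mu_k$ for as long as $k$ stays free.

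The core computation is to lower bound the probability of a positive reward on a free arm. When player $j$ pulls a free arm $k$, a collision on $k$ can only be caused by another player in her exploration phase, each of which lands on $k$ independently with probability $1/K$; hence, writing $m_t\le K$ for the number of exploring players, the no-collision probability is at least $(1-1/K)^{m_t-1}\ge (1-1/K)^{K-1}\ge e^{-1}$. Since the statistic $X_k$ is independent of the collision event and satisfies $\mathbb{P}[X_k>0]\ge \mathbb{E}[X_k]=\mu_k$, the reward is positive with probability at least $\mu_k/e$, so a single pull returns $0$ with probability at most $1-\mu_k/e$, uniformly in the current number of players.

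I would then bound the probability that an entire block of $L_k^j$ pulls on a free arm returns only zeros. Because the pointwise bound $\mathbb{P}[r^j(t)=0\mid \mathcal{H}_{t-1},\,\pi^j(t)=k,\,k\text{ free}]\le 1-\mu_k/e$ holds at every pull regardless of the past history $\mathcal{H}_{t-1}$, conditioning sequentially gives that the block is all-zero with probability at most $(1-\mu_k/e)^{L_k^j}\le \exp(-\tfrac{\mu_k}{e}L_k^j)\le \exp(-2\log(T^j))=(T^j)^{-2}$, using the lower bound on $L_k^j$ obtained above. Finally, since the blocks on each arm are disjoint windows there are at most $T^j$ of them on a given arm; a union bound over these blocks and over the $K$ arms controls the total false-detection probability by $\mathcal{O}(K/T^j)$, and adding the $\mathcal{O}(K/T^j)$ probability of the conditioning event yields the claim.

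The main obstacle I anticipate is the adaptivity of the scheme: the threshold $L_k^j$ and the placement of the blocks are data-dependent random quantities, so the all-zeros estimate must be argued through a sequential, martingale-style conditioning in which the \emph{uniform} per-pull bound $1-\mu_k/e$ is the key ingredient, and one must ensure the estimation-based inequality $L_k^j\ge 2e\log(T^j)/\mu_k$ is in force throughout the window — which is precisely why the whole argument is carried out on the correct-estimation event of Lemma~\ref{lemma:nosensoccupied}.1.
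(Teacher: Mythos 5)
Your analysis of the sliding-window rule (lines~\ref{alg3:rule10}--\ref{alg3:rule11}) is correct and is essentially the paper's own argument: on the correct-estimation event of Lemma~\ref{lemma:nosensoccupied}.1 one gets $(\hat r_k^j(t)-B^j(t))^+ \leq \gamma^j(t)\mu_k \leq \mu_k$, hence $L_k^j \geq 2e\log(T^j)/\mu_k$ while $k$ is free (this is Equation~\eqref{eq:boundTf}); the probability that a single pull of a free arm returns $0$ is at most $1-\mu_k/e$ uniformly in the number of players; so an all-zero block has probability at most $\exp(-\mu_k L_k^j/e)\leq (T^j)^{-2}$, and a union bound over blocks and arms gives $\mathcal{O}(K/T^j)$. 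Your point about handling the adaptivity of $L_k^j$ and of the block boundaries via sequential conditioning on the uniform per-pull bound is a legitimate refinement that the paper leaves implicit.

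However, there is a genuine gap: the algorithm has \emph{two} rules that add an arm to \texttt{Occupied}, and you analyze only one of them. Besides the sliding-window rule, the arm $\texttt{Preferences}[p]$ is also added to \texttt{Occupied} by the rule of lines~\ref{alg3:rule30}--\ref{alg3:rule31} as soon as some arm $l$ not in $\texttt{Preferences}[1\!:\!p]$ satisfies $r_{\inf}[l] > r_{\sup}[\texttt{Preferences}[p]]$. Your opening assertion that the all-zeros window ``is the mechanism by which an unexploited arm gets declared occupied'' is therefore false as stated: a free arm that the player is currently trying to occupy could, in principle, be falsely declared occupied because some other arm's confidence interval rises above its own, and your proof bounds the probability of the wrong event (only part of it). The paper closes this second case deterministically on the same correct-estimation event: while $k$ is free its confidence interval contains $\gamma^j(t)\mu_k$, so for a free, correctly estimated arm $l$ to dominate it one would need $\gamma^j(t)\mu_l \geq r_{\inf}[l] > r_{\sup}[k] \geq \gamma^j(t)\mu_k$, i.e.\ $\mu_l > \mu_k$, which is excluded because $k$ was appended to \texttt{Preferences} precisely when it dominated every arm not already in \texttt{Preferences} (and an arm dominated by $k$ under correct estimation has a smaller mean). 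Hence that rule never fires on a free arm and contributes no probability. You need to add this short case analysis; without it the event of the lemma is not covered by your bound.
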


\begin{proof}
\label{proof:nosensoccupied2}
As shown above, with probability $1- \mathcal{O}\left( \frac{K}{T^j} \right)$, player $j$ correctly estimated the average rewards of all the free arms until stage $T$. The remaining of the proof is conditioned on that event. As long as $k$ is free, it can not become dominated by some arm that was not added to \texttt{Preferences} before $k$, so it can not be added to \texttt{Occupied} from the rule given at lines~\ref{alg3:rule30}-\ref{alg3:rule31} in Algorithm~\ref{algo:nosens2}.

For the rule of lines~\ref{alg3:rule10}-\ref{alg3:rule11}, Equation~\eqref{eq:boundTf} gives that
\begin{equation}
\label{eq:nosensoccupied21}
L_k^j(t') \geq \frac{2e \log(T^j)}{\mu_k} \qquad \text{ at each stage } t' \leq t.
\end{equation}

As in Appendix~\ref{proof:musicalchair}, the probability of detecting $L$ successive $0$ rewards on a free arm $k$ is then smaller than  $\left( 1-\frac{\mu_k}{e} \right)^{L} \leq \exp \left( -\frac{L \mu_k}{e} \right)$.

Using this along with Equation~\eqref{eq:nosensoccupied21} yields that with probability $1 - \mathcal{O} \left( \frac{1}{(T^j)^2} \right)$, at least one positive reward will be observed on arm $k$ in a single block. The union bound over all blocks yields the result.
\end{proof}

Finally, Lemma~\ref{lemma:nosens2} yields that, after some time, any player starts exploiting an arm while all the better arms are already occupied by other players.

\begin{lemm}
\label{lemma:nosens2}
We denote $\bar{\Delta}_{(k)} = \min\limits_{i=1,..., k}(\mu_{(i)} - \mu_{(i+1)})$. With probability $1 - \mathcal{O}\left( \frac{K}{T^j} \right)$, it holds that for a single player $j$, there exists $k_j$ such that after a stage at most $\bar{t}_{k_j} + \tau_j$, she is exploiting the $k_j$-th best arm and all the better arms are also exploited by other players, where $\bar{t}_{k_j} = \mathcal{O}\left( \frac{K \log(T)}{\bar{\Delta}_{(k_j)}^2} + k_j \frac{K \log(T)}{\mu_{(k_j)}} \right)$.
\end{lemm}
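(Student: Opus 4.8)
The plan is to condition on a single high-probability \emph{good event} and then argue deterministically. First I would take the intersection of the three estimation guarantees already established: the correct estimation of every free arm at every stage (first point of Lemma~\ref{lemma:nosensoccupied}, union-bounded over the $K$ arms and the at most $T^j$ stages, contributing $\mathcal{O}(K/T^j)$), the fact that no free arm is ever falsely declared occupied (Lemma~\ref{lemma:nosensoccupied2}, $\mathcal{O}(K/T^j)$), and the quick detection-or-domination of every genuinely occupied arm (second point of Lemma~\ref{lemma:nosensoccupied}, union-bounded over the $\le K$ arms, $\mathcal{O}(K/T^j)$). On this intersection, which holds with probability $1-\mathcal{O}(K/T^j)$, every subsequent claim is purely deterministic.

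Second, I would record the structural consequences on this event. Since players never leave, once an arm is detected as occupied it stays occupied forever, and by Lemma~\ref{lemma:nosensoccupied2} every arm placed in player $j$'s \texttt{Occupied} is \emph{genuinely} exploited by another player. The factor $\gamma^j(t)$ is common to all arms, so comparing confidence intervals of free arms compares their true means; hence the admission rule (lines~\ref{alg3:rule20}--\ref{alg3:rule21}) appends arms to \texttt{Preferences} in decreasing order of $\mu$, each new arm being the best arm that is currently neither occupied nor already listed.

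Third, I would derive the ``all better arms are exploited'' conclusion. Let $a$ be the arm player $j$ eventually fixes and set $k_j$ to be its rank. Consider any arm $b$ with $\mu_b>\mu_a$. At the stage when $a$ was appended to \texttt{Preferences}, $b$ was either already occupied (then it remains occupied), already ahead of $a$ in \texttt{Preferences}, or still active; the last case is impossible, since $a$ could not then dominate $b$ in confidence intervals on the good event. In the second case, for player $j$'s target pointer $p$ to reach $a$, every earlier entry of \texttt{Preferences}---including $b$---must have been moved to \texttt{Occupied}, hence occupied. In all cases $b$ is exploited by another player, which is exactly the claim, and it forces $k_j\le M$.

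Finally, I would establish the time bound by an induction on the rank $k=1,2,\dots$ showing that by stage $\bar t_k+\tau_j$ each of the $k$ best arms has been occupied (unless $j$ has already fixed). The exploration cost is controlled through $B^j(t)=2\sqrt{6K\log(T^j)/t}$: after $t=\mathcal{O}\!\big(K\log(T)/\bar{\Delta}_{(k_j)}^2\big)$ stages the confidence intervals of the top $k_j+1$ arms are separated, so \texttt{Preferences} lists them in the correct order, giving the first term. The occupation cost is the second term: player $j$ advances her pointer through at most $k_j$ entries, and for each entry the relevant arm is either grabbed---which takes $\mathcal{O}(K\log(T)/\mu)$ stages, since a positive reward on a free arm $a$ occurs with probability at least $\mu_a/(Ke)$ at each stage---or detected as occupied within $\mathcal{O}(K\log(T)/\mu)$ stages by the second point of Lemma~\ref{lemma:nosensoccupied}; as each processed arm has mean at least $\mu_{(k_j)}$, the total is $\mathcal{O}(k_j K\log(T)/\mu_{(k_j)})$. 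The main obstacle is precisely the inter-player coupling in this last step: I must show that player $j$'s pointer cannot stall indefinitely and that the endogenous index $k_j$ is well defined, which rests on occupations being permanent and on at most $M-1<K$ other players being able to occupy arms, so that a free arm among the top $M$ is always available for $j$ to seize within the stated horizon.
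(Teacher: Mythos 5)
Your proposal is correct and rests on exactly the same pillars as the paper's proof: conditioning on correct estimation (first point of Lemma~\ref{lemma:nosensoccupied}), no false occupation detections (Lemma~\ref{lemma:nosensoccupied2}), quick detection-or-domination of exploited arms (second point of Lemma~\ref{lemma:nosensoccupied}), and the same time accounting (one separation term in $K\log(T)/\bar{\Delta}_{(k_j)}^2$ plus at most $k_j$ occupation/grab terms in $K\log(T)/\mu_{(k_j)}$). Where you genuinely differ is in how $k_j$ is pinned down. You define $k_j$ a posteriori as the rank of the arm player $j$ eventually fixes, which forces you to (i) prove separately that fixing happens at all, (ii) run a strong induction over ranks to obtain the time bound, and (iii) argue through the \texttt{Preferences}-pointer structure that all better arms are exploited. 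The paper instead defines $k_j \coloneqq \min\{i \in [K] \ : \ i\text{-th best arm not exploited by another player at stage } \bar{t}_i + \tau_j\}$: this set is non-empty because $M \leq K$, minimality makes ``every better arm is occupied by its own deadline $\bar{t}_i + \tau_j$'' true by definition, and permanence of occupation makes ``the $k_j$-th best arm stays free throughout $[0, \bar{t}_{k_j}+\tau_j]$'' automatic, which is exactly what the final grab argument needs. This one definitional trick collapses your induction, your well-definedness concern, and your closing worry about the pointer stalling into a single line; your version buys a more constructive description of the dynamics (and makes explicit the pointer argument for ``all better arms are exploited,'' which the paper leaves implicit), at the cost of more bookkeeping. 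One imprecision to fix in your third step: the case ``$b$ still active'' is not impossible per se, since an arm can remain in player $j$'s active set while already being exploited by another player, and then $a$ can legitimately dominate $b$ (its empirical mean drops once it yields only zeros); the case is harmless because such a $b$ is exploited anyway, but the impossibility claim should be restricted to arms that are active \emph{and free}.
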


\begin{proof}
\label{proof:lemmanosens2}
Player $j$ correctly estimates all the arms until stage $T$, with probability $1 - \mathcal{O}\left(\frac{K}{T^j} \right)$. The remaining of the proof is conditioned on that event. We define $\bar{t}_{i} = \frac{cK \log(T^j)}{\bar{\Delta}_{(i)}^2} + i \frac{cK \log(T^j)}{\mu_{(i)}}$ for some universal constant $c$ and $k_j$ (random variable) defined as
\begin{equation}
\label{eq:choicek}
k_j = \min \Big\{ i \in [K] \ | \ i \text{-th best arm not exploited by another player at stage }  \bar{t}_i + \tau_j \Big\}.
\end{equation}

$k_j^*$ ($k_j$-th best arm) is the best arm not exploited by another player (than player $j$) after the stage $\bar{t}_{k_j} + \tau_j$. The considered set is not empty as $M \leq K$.

Lemma~\ref{lemma:nosensoccupied2} gives that with probability $1 - \mathcal{O} \left( \frac{K}{T^j} \right)$, $k_j^*$ is not falsely detected as occupied until stage $T$. All arms below $k_j^*$ will be detected as worse than $k_j^*$ after a time $\frac{d K \log(T^j)}{\bar{\Delta}_{(k_j)}^2}$ for some universal constant $d$.

\medskip

By definition of $k_j$, any arm $i^*$ better than $k_j^*$ is already occupied at stage $\bar{t}_i + \tau_j$. Lemma~\ref{lemma:nosensoccupied}, gives that with probability $1 - \mathcal{O}\left( \frac{1}{T^j} \right)$, either $i^*$ is detected as occupied after stage $\bar{t}_i + \tau_j+ \frac{d'K \log(T^j)}{\mu_{(i)}}$ or dominated by $k_j^*$ after stage $\frac{d_2 K \log(T^j)}{\bar{\Delta}_{(k_j)}^2} + \tau_j$ for some universal constants $d'$ and $d_2$. 

Thus the player detects the arm $k_j^*$ as optimal and starts trying to occupy $k_j^*$ at a stage at most $\tilde{t} = \max\Big(\bar{t}_{k_j-1} + \frac{d'K \log(T^j)}{\mu_{(k_j)}}, \max(d, d_2) \frac{K \log(T^j)}{\bar{\Delta}_{(k_j)}^2} \Big) + \tau_j$ with probability $1 - \mathcal{O}\left( \frac{K}{T^j} \right)$ (where $\bar{t}_0 = 0$). 

Using similar arguments as for Lemma~\ref{lemma:nosensoccupied2}, player $j$ will observe a positive reward on $k_j^*$ with probability $1-\mathcal{O}\left( \frac{1}{T^j} \right)$ after a stage at most $\tilde{t} + \frac{d'_2K \log(T^j)}{\mu_{(k_j)}}$ for some constant $d'_2$, if $k_j$ is still free at this stage. With the choice $c=\max(d, d_2, d' + d'_2)$, this stage is smaller than $\bar{t}_{k_j}$ and $k_j^*$ is then still free. Thus, player $j$ will start exploiting $k_j^*$ after stage at most $\bar{t}_{k_j}$ with the considered probability.
\end{proof}

\subsubsection{Regret in dynamic setting}
\label{sec:dynamicsetting}

\begin{proof}[Proof of Theorem~\ref{thm:dynamicregret}.]
Lemma~\ref{lemma:nosens2} states that a player only needs an exploration time bounded as $\mathcal{O}\Big(\frac{K \log(T)}{\bar{\Delta}_{(k)}^2} + k \frac{K \log(T)}{\mu_{(k)}}\Big)$ before starting exploiting, with high probability. Furthermore, the better arms are already exploited when she does so. Thus, the exploited arms are the top-$M$ arms. The regret is then upper bounded by twice the sum of exploration times (and the low probability events of wrong estimations), as a collision between players can only happen with at most one player in her exploitation phase.

The regret incurred by low probability events mentioned in Lemma~\ref{lemma:nosens2} is in $\mathcal{O}(KM^2)$ and is thus dominated by the exploration regret.
\end{proof}
%
\section{No Sensing: communication through synchronization }
\label{sec:nosens1}

This section focuses on the static No Sensing model. First of all, we claim that a communication  protocol similar to the one of \algoone can be devised here, under a mild extra assumption: a lower bound $\mu_{\min}$ of the average rewards $\mu_k$ is known\footnote{Actually, a lower bound of $\mathbb{P}[X_k>0]$ is enough. We instead use $\mu_{\min}$, as $\mathbb{P}[X_k>0] \geq \mu_k$.}. Indeed, in the \sensingtwo model, a bit is  sent through a single collision. Without sensing,  it can be done with probability $1- \frac{1}{T}$ using $\frac{\log(T)}{\mu_{\text{min}}}$ time steps. This adds a  multiplicative factor of $\frac{\log(T)}{\mu_{\text{min}}}$ to the communication regret\footnote{The length of the Musical chairs and the estimation protocol in the initialization will also be respectively multiplied by $\frac{1}{\mu_{\min}}$ and $\frac{\log(T)}{\mu_{\min}}$.}, which would then dominate the new initialization regret. So, \algoone can be easily adapted for the No Sensing model into the \algooneadapted algorithm with a regret scaling as
\begin{equation}
\label{eq:algooneadapted}
\mathcal{O} \left( {\mathlarger\sum_{k > M}} \frac{\log(T)}{\mu_{(M)} - \mu_{(k)}} +  
\frac{KM^3 \log(T)}{\mu_{\min}} \log^2 \left( \frac{\log(T)}{(\mu_{(M)}-\mu_{(M+1)})^2} \right) \right).
\end{equation}

The exploration regret is still similar to the centralized algorithm, but the communication cost is no longer sub-logarithmic. In this section, we introduce an alternative algorithm for the No sensing setting. It also relies on a communication protocol, but with more limited information, which thus incurs a much better dependency in $M$ as well as a logarithmic regret. 

In the No Sensing setting, the \textsc{selfish} strategy, where all players follow independently \ucb seems to perform well (on generated data) but appears to incur a linear regret with some constant probability \citep{besson}. In Appendix \ref{sec:selfish}, the discussion about the \textsc{selfish} algorithm is extended and some reasons of its failure are explained, using algebraic arguments (Lindemann-Weierstrass Theorem).

\subsection{Adapted communication protocol}

The algorithm \algotwo is formally described in Appendix~\ref{app:alg2}. It relies on several subroutines that are detailed in the next section. 
Similarly to \algoone[,]it starts with an initialization phase to estimate $M.$
It then alternates between exploration and communication phases, but the goal of the communication phases is here to communicate to other players that an arm is optimal or sub-optimal (instead of transmitting statistics). This allows to share common sets of active arms and players. Protocols to declare such arms and to detect declarations from other players are detailed in Appendix~\ref{sec:declarebad}. The algorithm then ends with an exploitation phase.

An additional assumption is required for \algotwo and is quite similar to an assumption made by \citet{gabor} for the No Sensing model.
\begin{hyp}
\label{hyp:mumin}
A lower bound of $\mu_{(K)}$ is known to all players: $0 < \mu_{\min} \leq \min\limits_{i} \mu_i$.
\end{hyp}

The regret incurred by \algotwo is given by Theorem~\ref{thm:algo2}. Its proof is given in Appendix~\ref{app:alg2proofs}.

\begin{thm}
\label{thm:algo2}
With the choice $T_c = \lceil \frac{\log(T)}{\mu_{\min}}\rceil$ for the initialization, \algotwo has a regret scaling as
$$
\mathbb{E}[R_T] = \mathcal{O} \left(  {\mathlarger\sum_{k > M}} \min\bigg\lbrace \frac{M \log(T)}{\mu_{(M)} - \mu_{(k)}} , \sqrt{MT \log(T)}\bigg\rbrace + \frac{MK^2}{\mu_{\min}} \log(T) \right).
$$
\end{thm}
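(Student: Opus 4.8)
The plan is to reproduce the three-part decomposition $R_T = R^{\mathrm{init}} + R^{\mathrm{comm}} + R^{\mathrm{explo}}$ used for \algoone in Theorem~\ref{thm:upperbound} and to bound each piece on a single high-probability ``good'' event. That event will require a successful initialization (orthogonal setting reached, $M$ and the internal ranks correctly recovered), that every player's own empirical means stay inside their confidence intervals throughout, and that every bit exchanged during communication is correctly received. The only genuinely new ingredient relative to the \sensingtwo analysis is that, under No Sensing, one bit is sent by repeatedly forcing a collision for $T_c = \lceil \log(T)/\mu_{\min}\rceil$ steps, which succeeds with probability at least $1-1/T$ thanks to Assumption~\ref{hyp:mumin}. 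A union bound over the polynomially many transmitted bits keeps the good event's failure probability at $\mathcal{O}(\mathrm{poly}(K,M)\log(T)/T)$, so the regret from its complement is of lower order and is absorbed into the stated bound.

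The heart of the argument is the exploration term, and here the key observation I would exploit is that exploration is collision-free: using the ranks, the $M_p$ active players sequentially hop over the $K_p$ active arms, so no two players ever pull the same arm at the same step. This has two consequences. First, each suboptimal arm $k$ receives at most one pull per time step, hence at most $T$ pulls in total across all players. Second, since statistics are no longer pooled, each player must classify arms from her own samples; a successive-accept-reject argument identical to Proposition~\ref{prop:explo}, with Hoeffding and a union bound over the $\mathcal{O}(\log T)$ phases, shows that on the good event each player needs only $\mathcal{O}(\log(T)/\Delta_k^2)$ individual pulls of arm $k$ before it is accepted or rejected, where $\Delta_k = |\mu_{(M)}-\mu_k|$. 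Summing over the $M$ players gives a centralized bound $T_k^{\mathrm{explo}} = \mathcal{O}(\min\{M\log(T)/\Delta_k^2,\ T\})$. I would then insert this into the collision-free decomposition of Equation~\eqref{eq:regdecomposition} (valid verbatim since exploration and exploitation are collision-free) and apply $\min\{a,b\}\le\sqrt{ab}$ with $a = M\log(T)/\Delta_k$, $b = \Delta_k T$ to obtain $\Delta_k T_k^{\mathrm{explo}} = \mathcal{O}(\min\{M\log(T)/\Delta_k,\ \sqrt{MT\log(T)}\})$; the optimal-arm opportunity-cost terms are charged to suboptimal arms exactly as in Lemmas~\ref{lemma:auxregret1} and~\ref{lemma:auxregret2}.

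For the remaining two terms, the communication cost is where \algotwo improves on \algoone: players exchange only one-bit declarations that an arm is optimal or sub-optimal, which suffices to keep the common sets $[K_p]$ and $[M_p]$ synchronized. There are at most $K$ such declarations over the whole run, each must reach all $M$ players, and one reliable bit costs $\mathcal{O}(\log(T)/\mu_{\min})$ steps; scanning the $\mathcal{O}(K)$ arm-slots of a communication phase for each of the $M$ players then bounds $R^{\mathrm{comm}}$ by $\mathcal{O}(MK^2\log(T)/\mu_{\min})$, using the protocols of Appendix~\ref{sec:declarebad}. The initialization regret is $\mathcal{O}(MK\log(T)/\mu_{\min})$ --- the static Musical Chairs and rank-estimation procedures inflated by the No Sensing factors noted in the footnote of Section~\ref{sec:nosens1} --- and is absorbed into $R^{\mathrm{comm}}$. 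Adding the three contributions yields Theorem~\ref{thm:algo2}.

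The hard part will not be the concentration inequalities but maintaining a consistent decentralized state under unreliable communication. Because statistics are never pooled, two players may disagree on a borderline arm, and the proof must show that the declaration protocol nonetheless makes every player's $[K_p]$ and $[M_p]$ coincide at the start of each phase, so that exploration is genuinely orthogonal and~\eqref{eq:regdecomposition} applies. I expect the delicate points to be (a) proving that a correctly-estimating player never issues a false declaration when her confidence intervals hold, and (b) ruling out cascading desynchronization from the rare $\mathcal{O}(1/T)$ event that a bit is lost --- which is precisely why declarations are padded to a fixed length aligned with phase boundaries. Checking that these synchronization invariants survive the repeated No Sensing transmissions, and that all the associated failure probabilities union-bound to $\mathcal{O}(1/T)$, is the technical core of the argument.
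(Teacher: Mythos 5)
Your decomposition, good event, and exploration analysis coincide with the paper's proof: the initialization bound is that of Lemma~\ref{lemma:nosensestimM}, and your exploration argument (individual, non-pooled accept/reject giving the extra factor $M$, the collision-free decomposition~\eqref{eq:regdecomposition}, then the min/$\sqrt{\cdot}$ trick and the charging of optimal-arm opportunity costs as in Lemmas~\ref{lemma:auxregret1} and~\ref{lemma:auxregret2}) is exactly Lemma~\ref{lemma:nonsensexplosynch}. The genuine gap is in the communication term. You model communication as pairwise forced-collision bits, ``one reliable bit in $\lceil \log(T)/\mu_{\min}\rceil$ steps'', which is the mechanism of \algooneadapted (Equation~\eqref{eq:algooneadapted}), not of \algotwo[.] Under No Sensing, a receiver only learns such a bit if she sits on the right arm during the right window, so pairwise bit-sending forces pre-scheduled slots for every (sender, receiver, arm) triple in every phase; moreover, since a player observes nothing when no bit is addressed to her, she cannot tell that a phase is ``empty'' and skip it. Carried out honestly, this scheme reverts to the $KM^3\log(T)\log^2(\log T)/\mu_{\min}$-type communication cost of \algooneadapted rather than the claimed $MK^2\log(T)/\mu_{\min}$. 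Your own accounting quietly assumes the missing ingredient: the step ``at most $K$ declarations over the whole run'' (rather than per sender--receiver pair per phase) only makes sense if a single declaration is a broadcast detected simultaneously by all active players.

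The mechanism the paper introduces to supply exactly this is statistical signaling: to declare arm $k$, a player pulls $k$ with probability $1/2$ (otherwise she keeps sequentially hopping) for a whole block of length $T_d = \lceil 2400 K_p\log(T)/\mu_{\min}\rceil$ (Pseudocode~\ref{alg2:declare}); every other player, whatever block she is in, keeps estimating all arms and detects the declaration as a drop of at least one quarter in arm $k$'s empirical mean (Equation~\eqref{eq:detectdeclare}, Lemma~\ref{lemma:nosens1detectfix}). This makes each declaration a broadcast, lets a phase with no declarations end after a single reception block that is recycled as exploration, and hence caps the number of costly blocks by $K$ overall, giving $M \cdot K \cdot T_d = \mathcal{O}(MK^2\log(T)/\mu_{\min})$. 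A secondary inaccuracy in your sketch: \algotwo assigns no internal ranks (Pseudocode~\ref{alg2:estimm} outputs only $M$), so orthogonal exploration is not obtained ``using the ranks'' but by re-running Musical Chairs on the active arms after every phase containing a declaration; these at most $K$ procedures cost another $\mathcal{O}(MK^2\log(T)/\mu_{\min})$, the same order as communication, a component your plan omits but the paper charges to the exploration term.
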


\subsection{Algorithm description}
\label{app:alg2}

\algotwo algorithm is described in this section. 
We use the same definitions for $M_p$ and $K_p$ as in Section~\ref{sec:synchcomm}.

\subsubsection{Initialization phase}
\label{sec:nosensestimate}

The objective of the initialization phase is to estimate $M$. First, each player follows the Musical Chairs algorithm for a time $K T_c$ with $T_c \coloneqq \lceil \frac{\log(T)}{\mu_{\min}}\rceil$. The algorithm in the No Sensing setting is given by Pseudocode~\ref{algo:MC}, Appendix~\ref{app:alg1}. 
The second protocol of the initialization is then the same as for \algoone[,] but instead of a single time step, a number of $T_c$ time steps is needed to correctly transmit a bit with probability $1 - \frac{1}{T}$. The detailed protocol is given by Pseudocode~\ref{alg2:estimm}. 

\begin{pseudocode}[H]
\vspace{-0,5cm}
\begin{algorithm}[H]
\caption*{\textbf{\estimmnosens Protocol}}
\hspace*{\algorithmicindent} \textbf{Input:} $k$ (external rank), $T_c$ (time to send a bit) \\
\hspace*{\algorithmicindent} \textbf{Output:} $M$ (estimated number of players)
\begin{algorithmic}[1]
\STATE Initialize $M \gets 1$ and $\pi \gets k$
\FOR{$n = 1, \ldots, 2K$}
\STATE Initialize $r \gets 0$
\STATE \algorithmicif\ $n \geq 2k$\ \algorithmicthen\ $\pi \gets \pi + 1 \ (\text{mod } K)$\ \algorithmicendif \COMMENT{sequential hopping}
\STATE \algorithmicfor\ $T_c$ time steps\ \algorithmicdo\ Pull $\pi$ and update $r \gets r + r_\pi (t)$\ \algorithmicendfor
\STATE \algorithmicif\ $r = 0$\ \algorithmicthen\ $M \gets M + 1$\ \algorithmicendif \COMMENT{increases if $T_c$ collisions}
\ENDFOR
\RETURN $M$
\end{algorithmic}
\end{algorithm}
\vspace{-0,5cm}
\caption{\label{alg2:estimm}estimate $M$ for the No Sensing setting.}
\vspace{-0,3cm}
\end{pseudocode}

\subsubsection{Exploration phases}
\label{sec:explnosens1}

Each exploration phase is split into two parts. During the first one, each player fixes to an arm following Musical Chairs procedure. After this procedure, players are in an orthogonal setting and can thus start the second part, where they hop sequentially and explore the active arms without any collision.  The decisions for accepting/rejecting arms are still based on the exploration pulls as in \algoone[.]The differences with the exploration of \algoone are the following:
\begin{enumerate}
\item  statistics are not shared among players; this induces an additional $M$ factor in the regret.
\item A Musical Chairs procedure is added at the beginning of a new exploration phase, if there was at least one declaration or fixation block in the previous communication phase. This procedure is needed to reach an orthogonal setting before the sequential hopping. Figures~\ref{fig:nosensblock1} and \ref{fig:nosensblock2} below illustrate when such a procedure is added. It corresponds to lines~\ref{alg2:explMC1}-\ref{alg2:explMC2} in Algorithm~\ref{algo:nosens1}.
\end{enumerate}

\begin{figure}[h!]
    \centering
   \resizebox{\linewidth}{!}{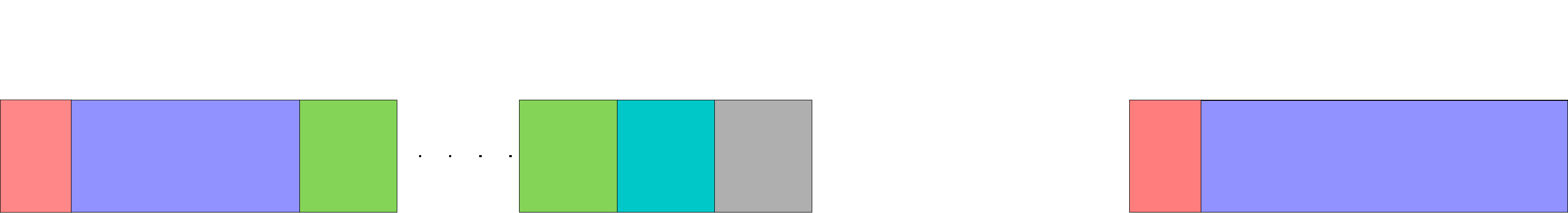}
    \caption{Alternating between fixation, exploration and declaration blocks. Case where a player declares sub-optimal arms and tries to occupy (without success) optimal arms.}
    \label{fig:nosensblock1}
    \end{figure}
    
\begin{figure}[h!]
    \centering
   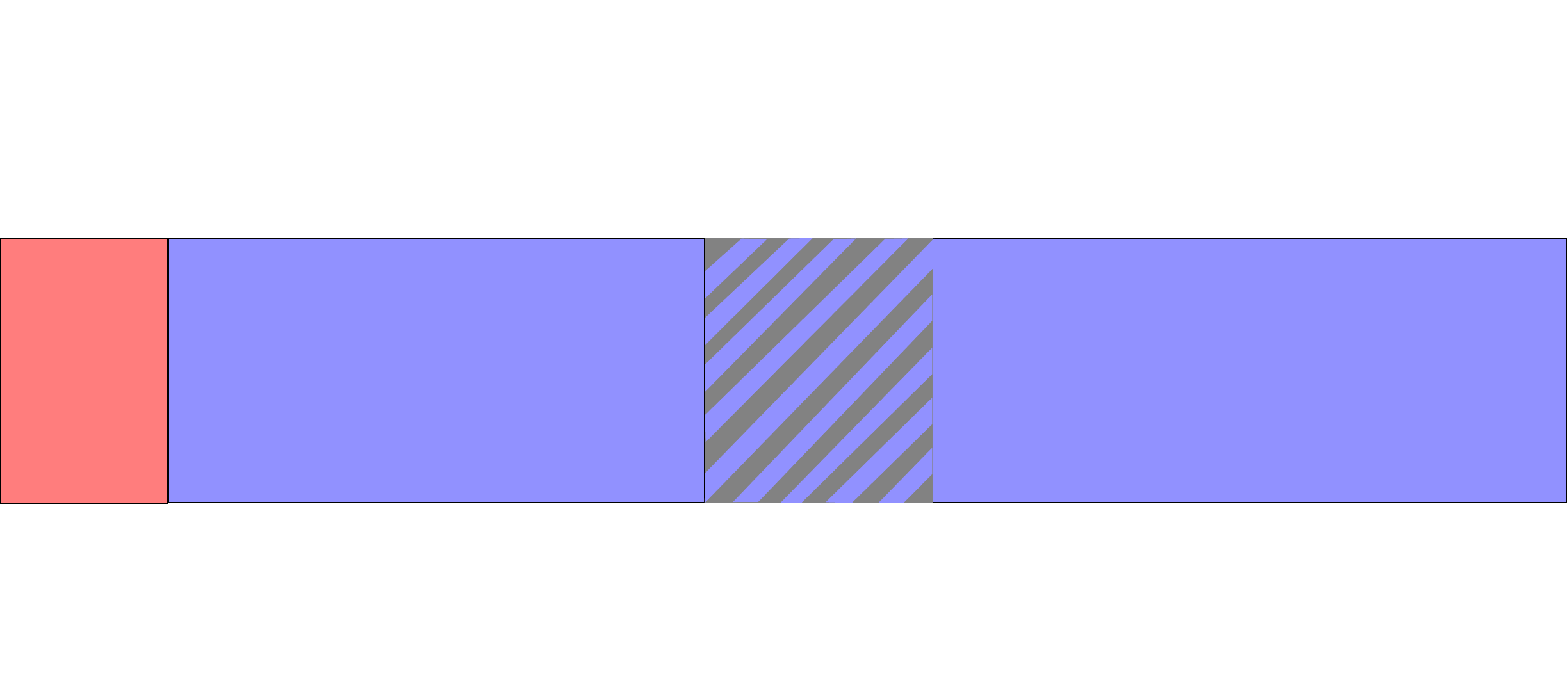
    \caption{Alternating between fixation, exploration and declaration blocks. Case with no declaration. In that case, the single declaration block, which just consists of sequential hopping, is included in the next exploration phase (lines~\ref{alg2:explMC20}-\ref{alg2:explMC2} in Algorithm~\ref{algo:nosens1}). No fixation phase is needed in that case.}
    \label{fig:nosensblock2}
    \end{figure}

\subsubsection{Communication phase}
\label{sec:declarebad}

Notice that all active players are in a communication phase at the same time. However, this phase is decomposed into blocks of same length $T_d$ (to keep synchronization). A block can be of three different types, and the type of a block does not need to be the same for all players, as illustrated in Figure~\ref{fig:nosensblock1}. Types are the following:
\begin{description}
\item[{Declaration block}] for player $j$: she communicates to other players that an arm is sub-optimal.
\item[{Fixation block}] for player $j$: she tries to occupy any arm that she detected as optimal. If she succeeds, she exploits that arm until the end.
\item[{Reception block}] for player $j$: she hops sequentially in order to detect other players' declarations.
\end{description}

Player $j$ starts the communication phase with declaration blocks, one per arm detected as sub-optimal\footnote{Of course, she does not declare any arm previously declared by another player.}. She then proceeds to a fixation block, had she detected any arm as optimal during the last exploitation phase.  She then proceeds to reception blocks until no new declaration is detected. As soon as no new declaration is detected, she starts a new exploration phase.

Notice that players keep receiving declarations from other players in any type of block.

\paragraph{Declaration block:}
In a declaration block, player $j$ follows \textbf{Declare Protocol}, described in Pseudocode~\ref{alg2:declare}. The idea is to frequently sample the sub-optimal arm in order to send a ``signal'' to the other players. They will  detect this signal by observing a significant loss in the empirical reward of this arm. However, a player sending a signal  should also be able to detect signals on other arms sent by other players. That is the reason why in order to declare an arm as sub-optimal, a player randomly chooses between pulling this arm and sequentially hopping.

\begin{pseudocode}[h!]
\vspace{-0,5cm}
\begin{algorithm}[H]
\caption*{\textbf{Declare Protocol}}
\hspace*{\algorithmicindent} \textbf{Input:}  $k$ (arm to declare), $T_d$ (time of block), $\pi$ (first arm to pull in sequential hopping), $\mathbf{S}$ and $\mathbf{T}$ (exploration statistics), $[K_p]$ (active arms) \\
\hspace*{\algorithmicindent} \textbf{Output:} $D$ (signaled arms in this block)
\begin{algorithmic}[1]
\STATE Initialize $\mathbf{s},\mathbf{t} \gets$ Zeros$(K)$
\FOR{$T_d$ time steps}
\STATE Pull arm $i = \begin{cases}
    k \text{ with probability } \frac{1}{2}\\
    \pi \text{ with probability } \frac{1}{2}
    \end{cases}$
\STATE $s[i] \gets s[i] + r_i(t)$; $t[i] \gets t[i] + 1$ and $\pi \gets \pi +1 \ (\textrm{mod } [K_p])$
\ENDFOR 
\STATE $d \gets$ set of active arms $i$ verifying $\left| \frac{S[i]}{T[i]} - \frac{s[i]}{t[i]} \right| \geq \frac{S[i]}{4T[i]}$
\RETURN $d \cup \lbrace k \rbrace$ \COMMENT{arms signaled during the block}
\end{algorithmic}
\end{algorithm}
\vspace{-0,5cm}
\caption{\label{alg2:declare} Declare arm $k$ as sub-optimal.}
\end{pseudocode}
\medskip
Lemma~\ref{lemma:nosens1detectfix} gives an appropriate choice for $T_d$ such that the declaration is detected by every player, without detecting any false positive declaration, no matter the block they are currently proceeding, with high probability.

Let $\hat{\mu}_i$ and $\hat{r}_i$  be respectively the empirical reward during the exploration phases and during the last communication block for arm $i$ and player $j$. Arm $i$ is detected as signaled, \ie another player is declaring or exploiting this arm if:
\begin{equation}
\label{eq:detectdeclare}
|\hat{\mu}_i - \hat{r}_i| \geq \frac{\hat{\mu}_i}{4}.
\end{equation}

Lemma~\ref{lemma:nosens1detectfix} states that players will only detect arms declared as sub-optimal or exploited by a player with high probability. However, using the last reception block where there is no new signal, it is easy to distinguish  exploited arms from declared ones. Indeed, for the exploited arms, only $0$ are observed during this last block; while for the declared ones, no player pulls it except for the sequential hopping. At least a positive reward will thus be observed with probability $1- \frac{1}{T}$ on them during the block, thanks to its length $T_d$, which depends on $\mu_{\min}$.

\paragraph{Fixation block:}

In a fixation block, player $j$ proceeds to \textbf{Occupy Protocol}, described in Pseudocode~\ref{alg2:occupy}. She sequentially hops on the active arms and starts exploiting an optimal arm as soon as it returns a positive reward (\ie without collision at this step). In that case, she pulls this arm until the final horizon $T$. At the end of a block, if she did not manage to exploit any detected optimal arm, then all of them are occupied by other players with high probability thanks to the length of the block. Signals of other players are detected following the rule of Equation~\eqref{eq:detectdeclare}. 

\begin{pseudocode}[h!]
\vspace{-0,5cm}
\begin{algorithm}[H]
\caption*{\textbf{Occupy Protocol}}
\hspace*{\algorithmicindent} \textbf{Input:} $A$ (set to occupy), $T_d$ (time of block), $\pi$ (first arm to pull), $\mathbf{S}$ and $\mathbf{T}$ (exploration statistics), $[K_p]$ (active arms) \\
\hspace*{\algorithmicindent} \textbf{Output:} \texttt{Fixed} (exploited arm) , $D$ (signaled arms in this block)
\begin{algorithmic}[1]
\STATE Initialize $\mathbf{s},\mathbf{t} \gets$ Zeros$(K)$; $\texttt{Fixed} \gets -1$
\FOR{$T_d$ time steps}
\STATE \algorithmicif\ \texttt{Fixed} $=-1$ \algorithmicthen
\STATE \hspace*{\algorithmicindent} Pull $\pi$
\STATE \hspace*{\algorithmicindent} \algorithmicif\ $\pi \in A$ and $r_\pi (t) > 0$\ \algorithmicthen\ \texttt{Fixed} $\gets \pi$ \algorithmicendif \COMMENT{no collision on optimal arm}
\STATE \hspace*{\algorithmicindent} $s[\pi] \gets s[\pi] + r_{\pi}(t)$; $t[\pi] \gets t[\pi] + 1$ and $\pi \gets \pi +1 \ (\textrm{mod } [K_p])$
\STATE \algorithmicelse\ Pull \texttt{Fixed}\ \algorithmicendif
\ENDFOR 
\STATE $d \gets$ set of active arms $k$ verifying $\left|\frac{S[k]}{T[k]} - \frac{s[k]}{t[k]}\right| \geq \frac{S[k]}{4T[k]}$
\RETURN (\texttt{Fixed}, $d$)
\end{algorithmic}
\end{algorithm}
\vspace{-0,5cm}
\caption{\label{alg2:occupy}Try to start exploiting an arm among $A$.}
\vspace{-0,3cm}
\end{pseudocode}

\paragraph{Reception block:}

In a reception block, player $j$ sequentially hops and detects the signals of other players, following the rule of Equation~\eqref{eq:detectdeclare}. This corresponds to \textbf{Receive Protocol}, described in Pseudocode~\ref{alg2:detect}.

\begin{pseudocode}[h!]
\vspace{-0,5cm}
\begin{algorithm}[H]
\caption*{\textbf{Receive Protocol}}
\hspace*{\algorithmicindent} \textbf{Input:} $T_d$ (time of block), $\pi$ (first arm to pull), $\mathbf{S}$ and $\mathbf{T}$ (exploration statistics), $[K_p]$ (active arms) \\
\hspace*{\algorithmicindent} \textbf{Output:} $D$ (signaled arms in this block), $\mathbf{s}$ and $\mathbf{t}$ (statistics of the block)
\begin{algorithmic}[1]
\STATE Initialize $\mathbf{s},\mathbf{t} \gets$ Zeros$(K)$
\FOR{$T_d$ time steps}
\STATE Pull $\pi$
\STATE Update $s[\pi] \gets s[\pi] + r_{\pi}(t);\ t[\pi] \gets t[\pi] + 1$ and $\pi \gets \pi +1 \ (\textrm{mod } [K_p])$
\ENDFOR
\STATE $d \gets$ set of active arms $k$ verifying $\left|\frac{S[k]}{T[k]} - \frac{s[k]}{t[k]}\right| \geq \frac{S[k]}{4T[k]}$
\RETURN  $(d, \mathbf{s}, \mathbf{t})$
\end{algorithmic}
\end{algorithm}
\vspace{-0,5cm}
\caption{\label{alg2:detect}Detect other players' declarations (and wait).}
\vspace{-0,3cm}
\end{pseudocode}

\medskip
Notice that every active player will at least proceed to one reception block per communication phase (if she does not end up occupying an optimal arm). The last reception block is considered as the block where no new signal is detected. This block is thus the same for all active players with high probability. Moreover, the arms giving $0$ reward during this last reception block are the optimal arms exploited by other players. This allows to distinguish exploited arms (which are optimal) from declared ones (which are sub-optimal). This distinction is described in Pseudocode~\ref{alg2:update}. As a consequence, active players share a common set of active arms $[K_p]$ and number of active players $M_p$ at the end of each communication phase. 

\begin{pseudocode}[h!]
\vspace{-0,5cm}
\begin{algorithm}[H]
\caption*{\textbf{Update}}
\hspace*{\algorithmicindent} \textbf{Input:} Decl (declared arms), $s$ (statistics of last reception block), $[K_p]$ (set of active arms), $M_p$ (number of active players) \\
\hspace*{\algorithmicindent} \textbf{Output:} $[K_{p+1}]$ (updated set of active arms), $M_{p+1}$ (updated number of active players)
\begin{algorithmic}[1]
\STATE Opt $\gets \lbrace i \in \text{Decl } | \ s[i] = 0 \rbrace$
\STATE $[K_{p+1}] \gets [K_p] \setminus \text{Decl}$ and $M_{p+1} \gets M_p - \texttt{length}(\text{Opt})$
\RETURN  $([K_{p+1}],\ M_{p+1})$
\end{algorithmic}
\end{algorithm}
\vspace{-0,5cm}
\caption{\label{alg2:update}Update the active sets at the end of a communication phase.}
\vspace{-0,3cm}
\end{pseudocode}

\medskip

The complete description of \algotwo is given in Algorithm~\ref{algo:nosens1} below.

\begin{algorithm}[h!]
\caption{\label{algo:nosens1} \algotwo algorithm}
\hspace{\algorithmicindent} \textbf{Input:}  $T$ (horizon), $\mu_{\min}$ (lower bound of means)

\begin{algorithmic}[1]
\STATE \textbf{Initialization Phase:}
\STATE Set $T_c \gets \lceil \frac{\log(T)}{\mu_{\min}}\rceil$; $\pi \gets \text{\musicalchair}([K], K T_c)$
\STATE $M_p \gets \text{\estimmnosens}(\pi, T_c)$
\STATE Initialize $T_0 \gets \lceil \frac{2400 \log(T)}{\mu_{\min}} \rceil$; Decl $\gets \emptyset$; $T_d \gets 0$; $[K_p] \gets [K]$ and $\mathbf{S}, \mathbf{T}, \mathbf{s}, \mathbf{t} \gets \text{Zeros}(K)$

\vspace{0.2cm}
\FOR{$p = 1, \ldots, \infty$}
\vspace{0.2cm}

\STATE \textbf{Exploration Phase:}
\STATE $T_{\text{expl}} \gets K_p 2^p T_0$

\STATE \algorithmicif\ \texttt{length}(Decl)$>0$ \algorithmicthen \label{alg2:explMC1} \COMMENT{there was a declaration in the previous phase so players need\\ \hfill to reach an orthogonal setting among the new set of active arms}
\STATE \hspace*{\algorithmicindent}  $\pi \gets \text{\musicalchair}\left( [K_p], K_pT_c \right)$
\STATE \algorithmicelse\ \label{alg2:explMC20} $\mathbf{S} \gets \mathbf{S} + \mathbf{s}$; $\mathbf{T} \gets \mathbf{T} + \mathbf{t}$ and $T_{\text{expl}} \gets T_{\text{expl}} - T_d$ \COMMENT{statistics of the last reception block}
\STATE \algorithmicendif \label{alg2:explMC2}

\FOR[start exploration]{$T_{\text{expl}}$ steps}
\STATE Pull $\pi$;\ $S[\pi] \gets S[\pi] + r_\pi (t)$; $T[\pi] \gets T[\pi] + 1$ and $\pi \gets \pi + 1 \ (\text{mod } [K_p])$
\ENDFOR
\vspace{0.2cm}

\STATE \textbf{Communication Phase:} \COMMENT{$B_s = \sqrt{\frac{2 \log(T)}{s}}$ here}
\STATE Initialize $T_d \gets K_p T_0$ and Decl as empty set
\STATE  Rej $\gets$ set of active arms $k$ verifying $\#\{ i \in [K_p]\, \big|\, \frac{S[i]}{T[i]} - B_{T[i]} \geq \frac{S[k]}{T[k]} + B_{T[k]} \} \geq M_p$
\STATE  Acc $\gets$ set of active arms $k$ verifying $\# \{i \in [K_p]\, \big|\, \frac{S[k]}{T[k]} - B_{T[k]} \geq \frac{S[i]}{T[i]} + B_{T[i]} \} \geq K_p-M_p$\WHILE[declaration blocks]{$\text{Rej} \setminus \text{Decl} \neq \emptyset$}
\STATE Let $k \in \text{Rej} \setminus \text{Decl}$
\STATE $d \gets \text{Declare}(k, T_d, \pi, \mathbf{S}, \mathbf{T}, [K_p])$ and add $d$ to Decl
\ENDWHILE
\IF[fixation block]{$\text{Acc} \setminus \text{Decl} \neq \emptyset$}
\STATE $(\texttt{Fixed}, d) \gets \text{Occupy}(\text{Acc} \setminus \text{Decl}, T_d, \pi, \mathbf{S}, \mathbf{T}, [K_p])$ and add $d$ to Decl
\STATE \algorithmicif\ \texttt{Fixed} $\neq -1$\ \algorithmicthen\ go to line~\ref{algo2:exploit} (break)\ \algorithmicendif
\ENDIF
\STATE $d \gets \{0\}$
\WHILE[reception blocks]{$d \neq \emptyset$}
\STATE $(d, \mathbf{s}, \mathbf{t}) \gets \text{Receive}(T_d, \pi, \mathbf{S}, \mathbf{T}, [K_p])$ \COMMENT{$\mathbf{s}$ and $\mathbf{t}$ are the statistics}
\STATE $d \gets d \setminus \text{Decl}$ and add $d$ to Decl \COMMENT{so $d$ contains only the new signals.}
\ENDWHILE
\vspace{0.2cm}

\STATE \textbf{Update Statistics:}
\STATE $([K_p], M_p) \gets \text{Update}(\text{Decl}, s, [K_p], M_p)$ 

\vspace{0.2cm}
\ENDFOR
\vspace{0.2cm}

\STATE\textbf{Exploitation Phase:} Pull \texttt{Fixed} until $T$ \label{algo2:exploit}
\end{algorithmic}
\end{algorithm}
\newpage
\subsection{Regret analysis}
\label{app:alg2proofs}

This section is devoted to the proof of Theorem~\ref{thm:algo2}. It first proves several required lemmas. 

A decomposition of the regret similar to \algoone is used for \algotwo[:]$$R_T = R^{\text{init}} + R^{\text{explo}} + R^{\text{comm}}.$$
But in this section, a communication step is defined as a time step in a communication phase where there is at least a player using \textbf{Declare} or \textbf{Occupy protocol}, and $T_{\text{init}} \coloneqq 3KT_c$. Notice that the last reception block of a communication phase then counts as communication steps only if there were declarations in previous blocks of the communication phase. Otherwise, its statistics are indeed used for the arms estimation, as described in Algorithm~\ref{algo:nosens1}, lines~\ref{alg2:explMC20}-\ref{alg2:explMC2}, and it is then counted as exploration.

\subsubsection{Initialization regret}

The initialization phase lasts $3K T_c$ steps, so $R^{\text{init}} \leq 3MK T_c$. Lemma~\ref{lemma:nosensestimM} claims that the initialization is successful, meaning all players perfectly know $M$ after this phase,  with a probability depending on $T_c$ and justifies the choice $T_c = \lceil \frac{\log(T)}{\mu_{\min}}\rceil$.

\begin{lemm}
\label{lemma:nosensestimM}
With probability $1 - \mathcal{O} \left( MK \exp \left( - \mu_{\min} T_c \right) \right)$, at the end of the initialization phase, every player has a correct estimation of $M$ and players are in an orthogonal setting.\end{lemm}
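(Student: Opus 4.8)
The plan is to split the initialization into its two constituent protocols, bound the failure probability of each separately, and conclude by a union bound. The favorable event is: (i) the Musical Chairs phase (Pseudocode~\ref{algo:MC}, run for $KT_c$ steps) ends with all $M$ players fixed on distinct arms, and (ii) the subsequent \estimmnosens protocol (Pseudocode~\ref{alg2:estimm}) returns the exact value $M$ for every player. If both hold then every player knows the correct $M$, and since the sequential hopping of \estimmnosens is a deterministic bijection of the arm indices, the players remain in an orthogonal setting at the end of initialization.

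First I would treat the Musical Chairs phase, adapting the argument of Lemma~\ref{lemma:musicalchair} to the No Sensing feedback. The only change is the per-step probability that a non-fixed player fixes: here she fixes when $r_k(t)>0$, which requires both no collision and a positive sample. As in Appendix~\ref{proof:musicalchair}, at every step there is at least one arm not played by the other players, so conditionally on the past the no-collision probability is at least $\tfrac1K$; since $X_k$ is independent of the collision event and $\mathbb{P}[X_k>0]\ge \mathbb{E}[X_k]=\mu_k\ge\mu_{\min}$, the fixing probability is at least $\tfrac{\mu_{\min}}{K}$. Hence the probability that a given player never fixes over the $KT_c$ steps is at most $\left(1-\tfrac{\mu_{\min}}{K}\right)^{KT_c}\le \exp(-\mu_{\min}T_c)$, and a union bound over the $M$ players bounds the failure of (i) by $M\exp(-\mu_{\min}T_c)$.

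Next I would analyze \estimmnosens conditionally on (i). Its block structure mirrors the Collision Sensing \estimm protocol (Pseudocode~\ref{algo1:estimM}): a player with external rank $k$ stays on her arm for the first $2k-1$ blocks and then hops, so that players with external ranks $k$ and $k'$ share an arm during exactly one block, at the predictable index $k+k'$. The count of $M$ is therefore correct \emph{provided} each block is classified correctly, i.e. a player records a collision ($r=0$) in a block iff two players actually shared her arm during it. One direction is deterministic: during a genuine collision the reward model gives $r^j(t)=X_{\pi^j(t)}(t)(1-\eta_{\pi^j(t)}(t))=0$ at every step, so $r=0$ with certainty and true collisions are never missed. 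The only error is a false positive, where a player alone on her arm observes $T_c$ consecutive zeros; using $\mathbb{P}[X_k=0]\le 1-\mu_{\min}$, this has probability at most $(1-\mu_{\min})^{T_c}\le \exp(-\mu_{\min}T_c)$ per block. Each of the $M$ players processes $2K$ blocks, so a union bound bounds the failure of (ii) by $2KM\exp(-\mu_{\min}T_c)$. Adding the two contributions yields the claimed $\mathcal{O}\!\left(MK\exp(-\mu_{\min}T_c)\right)$, and with $T_c=\lceil \log(T)/\mu_{\min}\rceil$ this is $\mathcal{O}(MK/T)$.

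The main obstacle is the bookkeeping behind (ii): one must verify that replacing the single-step collision detection of the sensing protocol by a $T_c$-block ``all zeros'' test preserves the exact meeting structure of the external ranks, so that each player's incremented counter equals $M$. Once it is established that real collisions are detected with certainty and that false detections occur with the stated exponentially small probability, the combinatorial count carries over verbatim from the Collision Sensing analysis; the entire probabilistic content then reduces to the two tail bounds above.
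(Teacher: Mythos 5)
Your proof is correct and follows essentially the same route as the paper's: adapt the Musical Chairs argument of Lemma~\ref{lemma:musicalchair} with fixing probability at least $\mu_{\min}/K$ per step, note that genuine collisions are detected deterministically in \estimmnosens[,] bound false positives by $\exp(-\mu_{\min}T_c)$ per block, and union bound over $M$ players and $2K$ blocks. Your additional bookkeeping about the meeting structure of external ranks is a more explicit version of what the paper leaves implicit, but the argument is the same.
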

\begin{proof}\label{proof:nosensestimM}
Similarly to the proof of Lemma~\ref{lemma:musicalchair}, the probability to encounter a positive reward for a player during the Musical Chairs procedure at time step $t$ is lower bounded by $\frac{\mu_{\text{min}}}{K}$. Hence using the same arguments, with probability $1 - \mathcal{O} \left( M\exp \left( - \mu_{\min} T_c \right) \right)$, all the players are pulling different arms after a time $K T_c$.

We now consider the \textbf{\estimmnosens protocol}. Every time a player sends a bit to another player, it will be detected. Let us now bound the probability that a player detects a ``collision'' with another player while there is not. This is the case when she encounters $T_c$ successive zero rewards on an arm while she is the only player pulling it. This happens with probability smaller than $\exp \left( -\mu_{\text{min}} T_c \right)$ for a single player in a single block. The union bound over the $M$ players and the $2K$ blocks yields the results.
\end{proof}

\subsubsection{Communication regret}

Lemma \ref{lemma:nosens1detectfix} provides the properties  and regret of the algorithm during the communication phases.
\begin{lemm}
\label{lemma:nosens1detectfix}
Let the length of a block be such that $ T_d = \lceil \frac{2400 K_p \log(T)}{\mu_{\min}} \rceil$,
then conditionally on the successful outcome of all the previous Musical Chairs procedures:
\begin{enumerate}
\item with probability $ 1 - \mathcal{O}\left( \frac{M}{T} \right)$, a player $j$ declaring an arm $i$ as sub-optimal will be successfully detected by all active players;
\item with probability $ 1 - \mathcal{O}\left( \frac{MK}{T} \right)$, no player will detect a false signal during the declaration block (i.e., no arm is detected as declared if there was no declaration or if it is not occupied by an active player);
\item with probability $1 - \mathcal{O}\left( \frac{M}{T} \right)$, if player $j$ starts occupying arm $k$, it is detected as a declaration by  all active players (following the rule of Equation~\eqref{eq:detectdeclare}).
\end{enumerate}

Thus, with probability $1 - \mathcal{O} \left( MK \exp(- \mu_{\min} T_c) + \left( K + \log(T) \right) \frac{KM}{T}\right)$, all communication phases are successful, \ie all signals are correctly detected and no false signal is detected. Then
\begin{equation*}
R^{\mathrm{comm}} = \mathcal{O} \left( \frac{M K^2}{\mu_{\min}} \log(T) \right).
\end{equation*}
\end{lemm}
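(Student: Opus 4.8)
The plan is to condition throughout on the event that every Musical Chairs procedure succeeds --- the one in the initialization and the ones restarted at the start of an exploration phase after a declaration --- so that the active players remain in an orthogonal setting whenever they all sequentially hop. By Lemma~\ref{lemma:nosensestimM} the initialization fails with probability only $\mathcal{O}(MK\exp(-\mu_{\min}T_c))$, and each of the at most $\log_2(T)$ restarts fails with probability $\mathcal{O}(M\exp(-\mu_{\min}T_c)) = \mathcal{O}(M/T)$ by the same argument, contributing the $\tfrac{KM\log(T)}{T}$ part of the final probability. On this event the only source of collisions inside a block is a player actively declaring an arm (pulling it with probability $\tfrac12$) or exploiting it (pulling it every step); two players that merely hop never collide, since their indices stay distinct.

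Next I would analyze a single block of length $T_d = \lceil 2400 K_p \log(T)/\mu_{\min}\rceil$ from the viewpoint of an arbitrary receiving player and prove the three itemized claims by concentration. First, via a Chernoff bound exactly as in Equation~\eqref{eq:chernoffpulls}, each active arm $i$ is pulled $t[i] = \Theta(T_d/K_p) = \Theta(\log(T)/\mu_{\min})$ times in the block with probability $1-\mathcal{O}(1/T^2)$. Conditionally on this count I would bound the block reward $\hat{r}_i = s[i]/t[i]$ by a \emph{multiplicative} Chernoff bound (the additive form is too weak, since $\mu_i$ can be as small as $\mu_{\min}$): the deviation is of order $\mu_i/\sqrt{2400}\ll \mu_i/8$, and $\hat{\mu}_i$ enjoys the same control from the exploration statistics. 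For a declared arm $\mathbb{E}[\hat{r}_i]=\tfrac12\mu_i$, because collisions with the unique declarer occur half the time, so $\hat{r}_i\le\tfrac34\hat{\mu}_i$ and the rule $|\hat{\mu}_i-\hat{r}_i|\ge\hat{\mu}_i/4$ of Equation~\eqref{eq:detectdeclare} fires (part~1); for a free, undeclared arm $\mathbb{E}[\hat{r}_i]=\mu_i$ and the two estimates stay within $\hat{\mu}_i/4$, so no false signal is raised (part~2); for an exploited arm every receiver pull collides, so $\hat{r}_k=0$ deterministically and the rule fires as soon as $\hat{\mu}_k>0$ (part~3). A union bound over the $K$ arms and $M$ players yields the stated per-block failure probabilities $\mathcal{O}(M/T)$, $\mathcal{O}(MK/T)$, $\mathcal{O}(M/T)$.

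For the aggregate statement I would bound the total number of blocks that ever count as communication. Each of the $K$ arms is declared at most once over the whole run (it leaves the active set immediately after), and each of the $M$ players fixes at most once, so there are $\mathcal{O}(K)$ declaration and fixation blocks; every reception block other than the terminating one of its phase must detect a new signal, and there are $\mathcal{O}(K)$ signals, while terminating blocks number at most one per eventful phase, again $\mathcal{O}(K)$. Hence at most $\mathcal{O}(K)$ blocks count as communication, and a union bound of the per-block failure probability $\mathcal{O}(MK/T)$ over them gives the $\tfrac{K^2M}{T}$ term; combined with the Musical Chairs terms this produces $1-\mathcal{O}(MK\exp(-\mu_{\min}T_c)+(K+\log(T))\tfrac{KM}{T})$. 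On the success event each communication step costs at most $M$ in regret, each block has length $T_d=\mathcal{O}(K\log(T)/\mu_{\min})$, and there are $\mathcal{O}(K)$ counting blocks, so $R^{\mathrm{comm}}\le M\cdot\mathcal{O}(K)\cdot T_d=\mathcal{O}\!\left(MK^2\log(T)/\mu_{\min}\right)$.

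I expect the main obstacle to be the joint concentration argument: controlling the \emph{random} number of pulls $t[i]$ within a block and then the reward average given that count, all in the multiplicative regime so that the detection margins survive for $\mu_i$ as small as $\mu_{\min}$ (this is exactly what pins down the constant $2400$ in $T_d$). A close second is verifying the collision structure precisely --- that a declared arm sees exactly one colliding source while free arms see none --- which rests on orthogonality of the synchronized sequential hopping being preserved throughout, and on the combinatorial counting that caps the number of communication blocks at $\mathcal{O}(K)$, where bounding the reception blocks requires the ``each reception block finds a new signal or terminates'' argument.
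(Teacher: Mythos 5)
Your overall architecture matches the paper's proof: conditioning on all Musical Chairs successes, per-block multiplicative Chernoff bounds on both the pull counts $t[i]$ and the block averages $\hat r_i$ (the paper runs this with a general declaration probability $\lambda_d$ and then sets $\lambda_d = \tfrac12$), the observation that a declared arm has expected block reward at most $(1-\lambda_d)\mu_i$ while a free undeclared arm has expected block reward $\mu_i$, the $\mathcal{O}(K)$ count of declaration/fixation/reception blocks, and the final $M \cdot \mathcal{O}(K) \cdot T_d$ regret computation. Parts 1 and 2 and the aggregation are essentially the paper's argument.

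However, your part 3 has a genuine gap. You assert that for an exploited arm ``every receiver pull collides, so $\hat r_k = 0$ deterministically.'' This is false precisely in the block that part 3 is about: the fixation block in which player $j$ \emph{starts} occupying $k$. During that block, $j$ follows the Occupy Protocol and sequentially hops (in the common orthogonal pattern, hence collision-free) until she first receives a positive reward on an arm of $A$; only from that random time onward does she sit on $k$. Before that time, receivers pulling $k$ observe uncollided rewards at rate $\mu_k$, so $\hat r_k$ is not zero, and nothing in your argument prevents it from landing anywhere between $0$ and $\hat\mu_k$ — in particular, the rule $|\hat\mu_k - \hat r_k| \geq \hat\mu_k/4$ could fire for some receivers and not for others. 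That non-uniformity is not a cosmetic issue: if some players detect the occupation in the fixation block and others only in the next block, they disagree on which reception block is the terminating one, they start the next exploration phase at different times, and the synchronization on which the whole algorithm rests is broken. The paper closes exactly this hole with a two-step quantitative argument: (i) by a Musical-Chairs-type bound, $j$ fixes within $t_{\mathrm{fix}} \leq \frac{K_p \log(T)}{\mu_{\min}} \leq \frac{T_d}{2400}$ with probability $1 - \mathcal{O}(1/T)$ (this is what the constant $2400$ is for, not only the concentration margins); and (ii) a Chernoff bound shows each receiver's accumulated reward on $k$ before fixation is at most $4\mu_k \log(T)/\mu_{\min}$ with probability $1-\mathcal{O}(1/T)$, so every receiver's block average satisfies $\hat r_k^{j'} \leq \frac{2 X_k^{j'} K_p}{(1-\lambda_d) T_d} \leq \frac{2\mu_k}{75}$, which makes the detection rule fire for \emph{all} active players within the fixation block itself. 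Your argument, as written, only covers blocks in which $k$ is occupied throughout, and so does not prove part 3.
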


\begin{proof}
We first prove the three points conditionally on the success of the previous Musical Chairs procedures.
\begin{itemize}
\item[1)] We prove this point in the more general case where the declaration of an arm $i$ follows the sampling: 
$ \begin{cases}
    \text{Pull } i \text{ with probability } \lambda_d,\\
    \text{Sample according to the sequential hopping on } [K_p] \text{ otherwise.}
    \end{cases}$

First, denote by $T_{i'}^{j'}$ the number of pulls by player $j'$ on arm $i'$ during a block of length $T_d$. Using the Chernoff bound,
\begin{equation}
\label{eq:chernoffnosens10}
\begin{aligned}
\mathbb{P} \left[ T_{i'}^{j'} \leq \frac{(1-\lambda_d) T_d}{2 K_p} \right]  & \leq \exp \left( -\frac{(1-\lambda_d) T_d}{8 K_p} \right), \\
&\leq \frac{1}{T} \qquad \text{as long as } \frac{(1-\lambda_d) T_d}{8 K_p} \geq \log(T).
\end{aligned}
\end{equation}

This last condition holds with $T_d$ chosen as in Equation~\eqref{eq:choiceTd} and this inequality holds, no matter the type of block player $j'$ is proceeding.

With probability $1 - \mathcal{O} \left( \frac{KM}{T} \right)$, all the $T_i^j$ are thus greater than $\frac{(1-\lambda_d) T_d}{2 K_p}$. This is also the case with probability $1$ for the exploration pulls as the first exploration phase is of length $T_d$. Let $r_i$ and $\hat{r}_i$ respectively denote the expected and the empirical observed rewards of the arm $i$ during this declaration phase for player $j$. Assume that the arm $i$ is declared as sub-optimal by another player during the considered phase. It then holds that $r_i \leq \mu_i (1-\lambda_d)$.

\begin{equation}
\label{eq:choiceTd}
\text{With the specific choice of} \hspace*{1.5cm} T_d \geq \frac{300 K_p \log(T)}{(1-\lambda_d) \lambda_d^2 \mu_{\text{min}}},
\end{equation}
Chernoff bound provides the following inequalities, conditionally on $T_i^j \geq \frac{(1-\lambda_d) T_d}{2 K_p}$,
\begin{align*}
\mathbb{P} \left[ |\hat{r}_i - r_i| \geq  \frac{\lambda_d \mu_i}{5} \right] \leq \frac{2}{T}\\
\text{and } \mathbb{P} \left[ |\hat{\mu}_i - \mu_i| \geq  \frac{\lambda_d \mu_i}{5} \right] \leq \frac{2}{T}& \qquad \text{for the exploration phases}.
\end{align*}

We then consider the high probability event $
|\hat{r}_i - r_i| \leq \frac{\lambda_d}{5} \mu_i
\text{ and }
|\hat{\mu}_i - \mu_i| \leq \frac{\lambda_d}{5} \mu_i
$.

As $\lambda_d \leq 1$, the second inequality yields
$
\frac{5}{6} \hat{\mu}_i \leq \mu_i \leq \frac{5}{4} \hat{\mu}_i.
$

If $i$ is declared by a player, $\mu_i - r_i \geq \lambda_d \mu_i$ and
\begin{equation}
\label{eq:nosensdetectrule}
\begin{aligned}
	|\hat{\mu}_i - \hat{r}_i| & \geq \frac{3 \lambda_d}{5} \mu_i  \geq \frac{\lambda_d}{2} \hat{\mu}_i.
\end{aligned}
\end{equation}


This means that with the detection rule described in Appendix~\ref{sec:declarebad} for $\lambda_d = \frac{1}{2}$, for a single arm $i$ and player $j$, with probability $1 - \mathcal{O} \left( \frac{1}{T} \right)$, the player will correctly detect the declaration of arm $i$ as sub-optimal by (at least) another player. 

\item[2)] As in the first point, with probability $1 - \mathcal{O} \left( \frac{1}{T} \right)$, it holds $|\hat{r}_i - r_i| \leq \frac{\lambda_d}{5} \mu_i$. The case of neither declaration nor exploitation by any other player actually corresponds to $r_i = \mu_i$. Thus we can rewrite Equation~\eqref{eq:nosensdetectrule} of the first case into
$
	|\hat{\mu}_i - \hat{r}_i| \leq \frac{2 \lambda_d}{5} \mu_i  \leq \frac{\lambda_d}{2} \hat{\mu}_i,
$ which holds with probability $1 - \mathcal{O} \left( \frac{1}{T} \right)$.
Considering all arms and players yields the second point.

\item[3)]
The same argument as in Lemma~\ref{lemma:nosensestimM} gives that with probability $1 - \mathcal{O}\left( \frac{1}{T} \right)$, player $j$ will actually starts occupying $k$ after a time $t_{\text{fix}}\leq \frac{K_p \log(T)}{\mu_{\text{min}}} \leq \frac{T_d}{2400}$.

Chernoff bound then provides a bound on the total reward $X_k^{j'}$ observed by $j'$ on $k$ for a time $t_{\text{fix}}$, i.e., for $T_k^{j'} \leq \frac{\log(T)}{\mu_{\text{min}}} $ pulls on $k$. 
\begin{equation}
\label{eq:detectfix1}
\mathbb{P} \left[ X_k^{j'} \geq 4 \mu_k \frac{\log(T)}{\mu_{\text{min}}} \right] \leq \exp \left( -\frac{3 \mu_k \log(T)}{3 \mu_{\text{min}}}\right) \leq \frac{1}{T} .
\end{equation}

Thus, Equation~\eqref{eq:detectfix1} claims that with probability $1 - \mathcal{O}\left(\frac{1}{T}\right)$, $X_k^{j'} \leq \frac{4 \mu_k \log(T)}{\mu_{\text{min}}}$.

However, $k$ will be occupied after that point and no other positive reward will be observed by player $j'$. As a consequence, her empirical reward on $k$ will be for this block
$\hat{r}_k^{j'}  \leq \frac{2 X_k^{j'} K_p}{(1-\lambda_d) T_d}\leq \frac{2\mu_k}{75}$. Using the same argument as in points 1) and 2), this guarantees $|\hat{\mu}_k^{j'} - \hat{r}_k^{j'}| \geq \frac{\hat{\mu}_k^{j'}}{4}$ and the result follows.
\end{itemize}

\medskip

Conditionally on the success of the previous Musical Chairs procedures (\ie players end these procedures in orthogonal settings), these three points imply that, with probability $1 - \mathcal{O}\left( \frac{MK}{T} \right)$, the communication block will be successful: all declarations are correctly detected, all detected optimal arms are exploited by a player and there is no false detection.

Let $N$ be the total number of exploration phases. By construction of the algorithm, $N \leq \lceil \log_2(T) \rceil$.
Also there can not be two different blocks used to declare or occupy the same arm, conditionally on the success of the previous communication blocks and Musical Chairs. Hence, conditionally on this event, there will be at most $N + K$ communication blocks, each succeeding with probability $1 - \mathcal{O} \left( \frac{KM}{T} \right)$ and there will be at most $K$ Musical Chairs procedures, each also succeeding with probability $1 - \mathcal{O}(M \exp(-\mu_{\min}  T_c))$. Using a chain rule argument, all the communication protocols and Musical Chairs procedures are successful with probability $1 - \mathcal{O}\Big( KM \exp(-\mu_{\min} T_c)+ ( K + N ) \frac{KM}{T} \Big)$ and the length of Comm is at most $\mathcal{O}\left(\frac{K^2 \log(T)}{\mu_{\text{min}}}\right)$, since only the communication phases with at least a Declaration or Fixation block are counted. This leads to the bound on $R^{\text{comm}}$ given by Lemma~\ref{lemma:nosens1detectfix}.
\end{proof}

\subsubsection{Exploration regret}
Conditionally on the success of the initialization phase, all the communication phases and all the Musical Chairs procedures at the beginning of exploration phases, the exploration (except the Musical Chairs) will be collision-free. Using similar arguments as in Lemma~\ref{lemma:exploregretopt}, we provide an upper bound for the exploration regret of \algotwo[.]

\begin{lemm}
\label{lemma:nonsensexplosynch} With probability $1 - \mathcal{O} \left( KM \exp\left( - \mu_{\min} T_c \right) + (K+ \log(T)) \frac{KM}{T} \right)$,
\begin{equation*}
R^{\mathrm{explo}} \leq \mathcal{O} \left( \sum\limits_{k > M} \min\bigg\lbrace \frac{M \log(T)}{\mu_{(M)} - \mu_{(k)}} , \sqrt{MT \log(T)}\bigg\rbrace + \frac{MK^2}{\mu_{\min}} \log(T)\right).
\end{equation*}
\end{lemm}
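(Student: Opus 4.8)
The plan is to mirror the proof of Lemma~\ref{lemma:exploregretopt} for \algoone, adjusting for the two structural changes in \algotwo[:] statistics are no longer centralized (each player accepts/rejects from her \emph{own} pulls, which inflates the per-arm exploration budget by a factor $M$), and exploration phases are interleaved with Musical Chairs procedures that count as exploration steps but are not collision-free. First I would condition on the joint good event from Lemma~\ref{lemma:nosensestimM} and Lemma~\ref{lemma:nosens1detectfix}: the initialization correctly estimates $M$ and reaches an orthogonal setting, and all communication phases and all Musical Chairs procedures succeed. On this event every sequential-hopping exploration step is collision-free, so for an active player each observed reward on an active arm equals the clean sample $X_k(t)$, and the active sets $[K_p], M_p$ stay common to all players. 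I would then prove a per-player concentration statement analogous to Lemma~\ref{lemm:concentration}: with $B_s = \sqrt{2\log(T)/s}$ and the player's own count $T_k^j(p)$, Hoeffding plus a union bound over the at most $\log_2(T)$ phases, the $K$ arms and the $M$ players gives $|S_k^j(p)/T_k^j(p) - \mu_k| \leq B_{T_k^j(p)}$ for all of them simultaneously with probability $1 - \mathcal{O}(KM\log(T)/T)$. No quantization term arises here, since a declaration transmits a single bit rather than a real statistic.

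Next I would establish the analog of Proposition~\ref{prop:explo}. Because each player must separate confidence intervals using only her individual count $T_k^j(p)$, an arm is accepted/rejected (hence declared and removed from the common active set) once each player's count reaches $\mathcal{O}(\log(T)/\Delta_k^2)$; since all active players hop in lockstep their counts grow together, so the \emph{centralized} number of exploration pulls before arm $k$ leaves the active set is $t_k = \mathcal{O}(\min\{M\log(T)/\Delta_k^2,\ T\})$, with $\Delta_k = \mu_{(M)} - \mu_k$ for a sub-optimal arm and $\mu_k - \mu_{(M+1)}$ for an optimal one — exactly the \algoone bound with $\log(T)$ replaced by $M\log(T)$. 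On the collision-free part of Explo the decomposition~\eqref{eq:regdecomposition} together with the geometric-phase argument of Lemmas~\ref{lemma:auxregret1} and~\ref{lemma:auxregret2} goes through verbatim with this new $t_k$; propagating the extra factor through $\Delta_k t_k = \mathcal{O}(\min\{M\log(T)/\Delta_k,\ \Delta_k T\})$ and maximizing the second branch at $\Delta_k = \sqrt{M\log(T)/T}$ turns the minmax value from $\sqrt{T\log(T)}$ into $\sqrt{MT\log(T)}$, which yields the first sum in the statement.

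It then remains to account for the exploration steps that are not collision-free, namely the Musical Chairs procedures run at the start of an exploration phase (line~\ref{alg2:explMC1} of Algorithm~\ref{algo:nosens1}). Each such procedure has length $K_p T_c \leq K T_c = \mathcal{O}(K\log(T)/\mu_{\min})$ and contributes at most $M$ regret per step. A Musical Chairs is triggered only when the previous communication phase contained a declaration, and since every arm is declared (as sub-optimal) or occupied (as optimal) at most once over the whole run, at most $K$ communication phases carry a nonempty $\text{Decl}$; hence the total Musical Chairs regret is $\mathcal{O}(M \cdot K \cdot K\log(T)/\mu_{\min}) = \mathcal{O}(MK^2\log(T)/\mu_{\min})$, the second term of the bound. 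The single reception block folded into the next exploration phase when no declaration occurred is plain collision-free sequential hopping and is already covered by the first part. Summing the two contributions, all valid on the conditioning event of probability $1 - \mathcal{O}\left(KM\exp(-\mu_{\min}T_c) + (K+\log(T))\frac{KM}{T}\right)$, gives Lemma~\ref{lemma:nonsensexplosynch}.

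The main obstacle is the bookkeeping behind the factor $M$ with decentralized statistics: one must argue that no optimal arm is ever wrongly declared sub-optimal by \emph{any} player (an arm is removed as soon as the first player's interval separates it, so the union bound has to control all $M$ players at once) while still guaranteeing that every genuinely sub-optimal arm leaves the active set after only $\mathcal{O}(M\log(T)/\Delta_k^2)$ total pulls. The counting argument that caps the number of Musical Chairs procedures at $K$, and thereby isolates the $MK^2\log(T)/\mu_{\min}$ term, is the other delicate point, though it is comparatively routine once Lemma~\ref{lemma:nosens1detectfix} guarantees that declarations and fixations are detected without false positives.
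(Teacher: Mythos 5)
Your proposal is correct and follows essentially the same route as the paper's proof: condition on the success of initialization, communication phases, and Musical Chairs procedures; apply per-player Hoeffding concentration (no quantization), so that an arm leaves the common active set after $\mathcal{O}(\log(T)/\Delta_k^2)$ pulls per player, i.e.\ $\mathcal{O}(M\log(T)/\Delta_k^2)$ centralized pulls, and reuse the \algoone decomposition to get the first sum with its $\sqrt{MT\log(T)}$ minmax cap; then bound the Musical Chairs cost by at most $K$ procedures of length $\mathcal{O}(K\log(T)/\mu_{\min})$ each, at regret $M$ per step. The paper's proof is terser (it invokes Proposition~\ref{prop:explo} and Appendix~\ref{proof:exploregretopt} rather than re-deriving the geometric-phase argument), but the decomposition, the factor-$M$ bookkeeping, and the $MK^2\log(T)/\mu_{\min}$ accounting are identical to yours.
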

\begin{proof}
\label{proof:nonsensexplosynch}

First, as already claimed in the proof of the communication regret, the initialization, all the communication blocks and Musical chairs procedures succeed and there are at most $K$ Musical Chairs procedures during the exploration with probability $1 - \mathcal{O} \Big( MK \exp(- \mu_{\min} T_c) + ( K + \log(T) ) \frac{KM}{T}\Big)$. The remaining of the proof is conditioned on this event. A single Musical Chairs procedure lasts a time $\frac{K_p \log(T)}{\mu_{\text{min}}}$, hence the total regret incurred by the Musical Chairs is smaller than $\frac{MK^2 \log(T)}{\mu_{\text{min}}}$.

We now consider the regret of exploration without the Musical Chairs.
We denote by $N$ the number of exploration phases that will be run and the same notation as in Appendix~\ref{proof:explo} concerning $\Delta_k$. As the exploration phases are collision-free (conditionally on the success of initialization, communication and Musical Chairs), the Hoeffding inequality still holds: $\mathbb{P}\Big[\exists p \leq n : |\hat{\mu}_k(p) - \mu_k| \geq \sqrt{\frac{2\log(T)}{T_k(p)}} \ \Big] \leq \frac{2n}{T}$.

Since the players do not share their statistics, it can be shown with the same arguments as in Appendix~\ref{proof:explo} that a sub-optimal arm $k$ will be found sub-optimal with probability at least $1 - \mathcal{O} \left( \frac{NM}{T} \right)$ after $t_k = \mathcal{O} \left( \frac{\log(T)}{\Delta_k^2} \right)$ exploration pulls \textbf{for a single player} without being found optimal by any player before. Since the exploration phases are collision-free, the cost for pulling the sub-optimal arm $k$ is $\mathcal{O} \left( \min \left\lbrace \frac{M \log(T)}{\Delta_k}, \Delta_k T \right\rbrace \right)$.

The same reasoning as in Appendix~\ref{proof:exploregretopt} shows that the exploration regret due to non pulls of optimal arms is in $\mathcal{O} \left( \sum\limits_{k>M} \min\left\lbrace \frac{M \log(T)}{\mu_{(M)} - \mu_{(k)}} , \sqrt{MT \log(T)}\right\rbrace\right)$ conditionally on correct estimations of the arms.

\medskip

As $N \leq \lceil \log_2(T) \rceil$, all those arguments yield the bound on $R^{\text{explo}}$, with probability $1 - \mathcal{O} \Big( \frac{KM \log(T)}{T} + KM \exp ( - \mu_{\min} T_c ) + (K+ \log(T)) \frac{KM}{T} \Big)$.
\end{proof}

Theorem~\ref{thm:algo2} can now be deduced from Lemmas~\ref{lemma:nosensestimM}, \ref{lemma:nosens1detectfix}, \ref{lemma:nonsensexplosynch} and Equation~\eqref{eq:regdec}. The total regret is upper bounded by the sum of the regrets mentioned in Lemmas~\ref{lemma:nosensestimM}, \ref{lemma:nosens1detectfix}, \ref{lemma:nonsensexplosynch} and the regret when a ``bad'' event occurs. According to these lemmas, the probability that a bad event may happen is indeed in $\mathcal{O} \left( (K+ \log(T)) \frac{KM}{T} \right)$. The average regret due to bad event is thus upper bounded by this probability multiplied by $MT$. This term is then dominated by the communication regret.
%
\section{On the inefficiency of \textsc{selfish} algorithm}
\label{sec:selfish}

A linear regret for the \textsc{selfish} algorithm in the No Sensing model has been recently conjectured \citep{besson}. This algorithm seems to have good results in practice, although rare runs with linear regret appear. This is due to the fact that with probability $p>0$ at some point $t$, both independent from $T$, some players might have the same number of pulls and the same observed average rewards for each arm. In that case, the players would pull the exact same arms and thus collide until they reach a tie breaking point where they could choose different arms thanks to a random tie breaking rule. However, it was observed that such tie breaking points would not appear in the experiments, explaining the linear regret for some runs. Here we claim that such tie breaking points might never happen in theory for the \textsc{selfish} algorithm when the rewards follow Bernoulli distributions, if we add the constraint that the numbers of positive rewards observed for the arms are all different at some stage. This event remains possible with a probability independent from $T$. 

\begin{prop}
\label{prop:ucb1}
For $s, s' \in \mathbb{N}$ with $s\neq s'$:
\begin{equation*}
\forall n \geq 2, t, t' \in \mathbb{N}, \qquad \frac{s}{t} + \sqrt{\frac{2\log(n)}{t}} \neq \frac{s'}{t'} + \sqrt{\frac{2\log(n)}{t'}}.
\end{equation*}
\end{prop}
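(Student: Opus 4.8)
The plan is to argue by contradiction, assuming that equality holds for some $s \neq s'$, $n \geq 2$, and $t,t' \in \mathbb{N}$, and to derive that $\log(n)$ would have to be an algebraic number, contradicting the transcendence of $\log(n)$ that follows from the Lindemann--Weierstrass theorem. First I would dispose of the degenerate case $t = t'$: there the two square-root terms coincide, so equality forces $\tfrac{s}{t} = \tfrac{s'}{t}$, i.e.\ $s = s'$, which is excluded by hypothesis. Hence from now on one may assume $t \neq t'$.

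Writing $L \coloneqq \log(n) > 0$ (positive since $n \geq 2$) and moving the rational terms to one side, the assumed equality becomes
\begin{equation*}
\sqrt{2L}\,\Big( \tfrac{1}{\sqrt{t}} - \tfrac{1}{\sqrt{t'}} \Big) = \frac{s'}{t'} - \frac{s}{t}.
\end{equation*}
Since $t \neq t'$, the factor $\tfrac{1}{\sqrt{t}} - \tfrac{1}{\sqrt{t'}}$ is a \emph{nonzero} algebraic number, while the right-hand side is rational, hence algebraic. Solving for $\sqrt{2L}$ then exhibits it as the quotient of two algebraic numbers with nonzero denominator, so $\sqrt{2L}$ is algebraic; squaring, $2L = 2\log(n)$ is algebraic, and therefore $\log(n)$ itself would be algebraic.

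The core of the argument is to contradict this last conclusion. The key input is that $\log(n)$ is transcendental for every integer $n \geq 2$: indeed $\log(n) \neq 0$, so if $\log(n)$ were algebraic, the Lindemann--Weierstrass theorem would force $e^{\log(n)} = n$ to be transcendental, which is absurd since $n$ is an integer. This directly contradicts the algebraicity of $\log(n)$ derived above, completing the proof.

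I expect the only real subtlety — the main obstacle — to be the careful bookkeeping that guarantees the denominator $\tfrac{1}{\sqrt t} - \tfrac{1}{\sqrt{t'}}$ stays nonzero (which is exactly why one must separate the $t=t'$ case at the outset) together with the clean invocation of transcendence; the algebraic manipulations themselves are routine, the whole point being that the transcendental quantity $\sqrt{\log n}$ cannot coincide with a rational combination of algebraic numbers.
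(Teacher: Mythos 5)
Your proof is correct and shares the paper's overall skeleton: dispose of the $t=t'$ case, then argue by contradiction that the assumed equality would make $\log(n)$ algebraic, which contradicts the transcendence of $\log(n)$ supplied by the Lindemann--Weierstrass theorem. Where you genuinely diverge is in how algebraicity is established. The paper squares twice: from the rationality of $\sqrt{2\log(n)/t}-\sqrt{2\log(n)/t'}$ it obtains $\log(n)\,(t+t'-2\sqrt{tt'})=2p$ with $p$ rational, then isolates $\log(n)\sqrt{tt'}$ and squares again to exhibit an explicit nonzero quadratic polynomial with rational coefficients vanishing at $\log(n)$ --- algebraicity straight from the definition. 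You instead isolate
\begin{equation*}
\sqrt{2\log(n)} \;=\; \frac{\tfrac{s'}{t'}-\tfrac{s}{t}}{\tfrac{1}{\sqrt{t}}-\tfrac{1}{\sqrt{t'}}},
\end{equation*}
note the denominator is a nonzero algebraic number because $t\neq t'$, and invoke the closure of the algebraic numbers under field operations to conclude that $\sqrt{2\log(n)}$, hence its square $2\log(n)$, hence $\log(n)$, is algebraic. Your route is shorter and avoids the double-squaring bookkeeping, at the price of using the (standard but not entirely elementary) fact that the algebraic numbers form a field, whereas the paper needs only the bare definition of an algebraic number. A further small plus on your side: you spell out the Hermite--Lindemann step --- $\log(n)\neq 0$ and algebraic would force $n=e^{\log(n)}$ to be transcendental, absurd for an integer --- which the paper invokes without justification.
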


\begin{proof}
First, if $t=t'$, these two quantities are obviously different as $s\neq s'$.

We now assume $\frac{s}{t} + \sqrt{\frac{2\log(n)}{t}} = \frac{s'}{t'} + \sqrt{\frac{2\log(n)}{t'}} \text{ with } t \neq t'$.\\
This means that $ \sqrt{\frac{2\log(n)}{t}} -\sqrt{\frac{2\log(n)}{t'}}$ is a rational, i.e., for some rational $p$, $\log(n)(t+t'-2\sqrt{tt'}) = 2p$.
\begin{align*}
\text{It then holds} \hspace*{3.4cm}\log(n)\sqrt{tt'} &= \log(n)\frac{t+t'}{2} - p, \\
tt' \log^2(n) &= \log^2(n)(\frac{t+t'}{2})^2 - p(t+t')\log(n) + p^2, \\
\log^2(n)(\frac{t-t'}{2})^2 - p(t+t')\log(n) + p^2 &= 0.
\end{align*}

Since $(\frac{t-t'}{2})^2 \neq 0$ and all the coefficients are in $\mathbb{Q}$ here, this would mean that $\log(n)$ is an algebraic number. However, Lindemann–Weierstrass theorem implies that $\log(n)$ is transcendental for any integer $n \geq 2$. We thus have a contradiction.
\end{proof}

The proof is only theoretical as computer are not precise enough to distinguish rationals from irrationals. The advanced arguments are not applicable in practice. Still, this seems to confirm the conjecture proposed by \cite{besson}: a tie breaking point is never reached, or at least not before a very long period of time. 


\end{document}